\documentclass[11pt]{article}

\usepackage[a4paper,margin=1in]{geometry}
\usepackage{amsmath,amssymb,amsthm,mathtools}
\usepackage{graphicx}
\usepackage{subcaption}
\usepackage{enumitem}
\usepackage{algorithm}
\usepackage{algpseudocode}
\usepackage[authoryear]{natbib}
\usepackage[colorlinks=true,
            citecolor=blue,
            linkcolor=blue,
            urlcolor=blue]{hyperref}

\usepackage[authoryear]{natbib}

\usepackage{algorithm}
\usepackage{algpseudocode}
\newcommand{\WrapState}[1]{\State \parbox[t]{0.88\linewidth}{#1}}

\usepackage{booktabs}   % for \toprule, \midrule, \bottomrule
\usepackage{multirow}   % for \multirow
\usepackage{graphicx}   % for \resizebox

\usepackage{float}      % for [H]
\usepackage{subcaption} % for subfigure
\usepackage{enumitem}

\newcommand{\ola}{\overleftarrow}
\newcommand{\ora}{\overrightarrow}

\newcommand{\rmd}{\mathrm{d}}
\newcommand{\rme}{\mathrm{e}}
\newcommand{\eqsp}{\,}

\newcommand{\eqdef}{:=}

\newcommand{\R}{\mathbb{R}}

\newcommand{\E}{\mathbb{E}}

\newcommand{\Id}{\mathbf{I}}

\newcommand{\score}{\mathsf{s}}

\newcommand{\indep}{\perp\!\!\!\perp}
\newcommand{\dist}{\mathsf{d}}

\theoremstyle{plain}
\newtheorem{theorem}{Theorem}[section]
\newtheorem{proposition}[theorem]{Proposition}
\newtheorem{lemma}[theorem]{Lemma}

\theoremstyle{definition}

\theoremstyle{remark}
\newtheorem{remark}[theorem]{Remark}

\newtheorem*{proposition*}{Proposition}

\title{A Stochastic Analysis Perspective on Diffusion Models}
\author{
Stanislas Strasman\\
\small Laboratoire de Probabilités, Statistique et Modélisation\\
\small Sorbonne Université
}
\date{}

\begin{document}

\maketitle

\begin{abstract}
Generative modelling involves two closely related tasks: \emph{learning}
a probabilistic approximation of an unknown data-generating distribution
from limited observations, and designing a tractable mechanism for
\emph{sampling} from this approximation. We present
a synthesis of ideas from sampling theory, score estimation,
and diffusion time reversal, using stochastic calculus as a common
framework for understanding and analysing Score-based Generative Models (SGMs).

We first consider learning and sampling separately, showing how score
functions provide a natural bridge between these two tasks. The
computational convenience and regularising properties of Gaussian
smoothing motivate denoising score matching, which estimates the
marginal scores of noisy data distributions by regression against
tractable conditional scores. The trade-off between smoothing bias,
score-estimation difficulty, and mixing behaviour then motivates the
use of multiple noise levels and annealed Langevin dynamics.

We next highlight the distinction between constructing dynamics that
preserve a fixed target distribution and constructing dynamics whose
marginals follow a prescribed evolution. This motivates the diffusion
time-reversal viewpoint and the passage from marginal distributions
to probability laws on path space. In particular, matching one-time
marginal distributions alone does not establish that a proposed
dynamics is the time reversal of a given forward diffusion. A
martingale-based proof sketch makes this path-space statement explicit
and explains the score-dependent correction to the reverse drift.

Finally, we connect the ideal reverse diffusion to implementable
training and sampling procedures and provide an entry point to the
literature on stability and convergence of Score-based Generative
Models. We explain how practical approximations introduce
initialization, score approximation, and time-discretization errors,
and discuss the probability metrics and divergences used to analyse
their impact on the generated distribution. We also distinguish these
theoretical criteria from empirical metrics used to evaluate generated
samples.
\end{abstract}

\tableofcontents

\newpage

\section{Introduction}
\label{sec:introduction}

Generative modelling aims to approximate the probability distribution
underlying observed data and to provide a mechanism for generating new
samples with similar statistical properties. We assume access to a finite collection of observations
\[
X_1,\ldots,X_n
\stackrel{\mathrm{i.i.d.}}{\sim}
\pi_{\rm data},
\]
where $\pi_{\rm data}$ is an unknown probability distribution on
$\mathbb R^d$. The objective is to construct a model distribution
$\widehat\pi$ that approximates $\pi_{\rm data}$, in a sense to be
specified, and from which samples can be generated at a reasonable
computational cost. This involves two complementary tasks:
\emph{learning} a probabilistic description from finite observations,
and \emph{sampling} from the resulting model.

Diffusion-based generative methods approach these tasks by progressively
corrupting data with noise and learning how to reverse this corruption
\citep{dicksteinthermo,song2019generative,ho2020denoising,song2021scorebased}.
Figure~\ref{fig:diffusion_philosophy_intro} illustrates this construction.

\begin{figure}[tbp]
    \centering
    \includegraphics[width=0.82\textwidth]
        {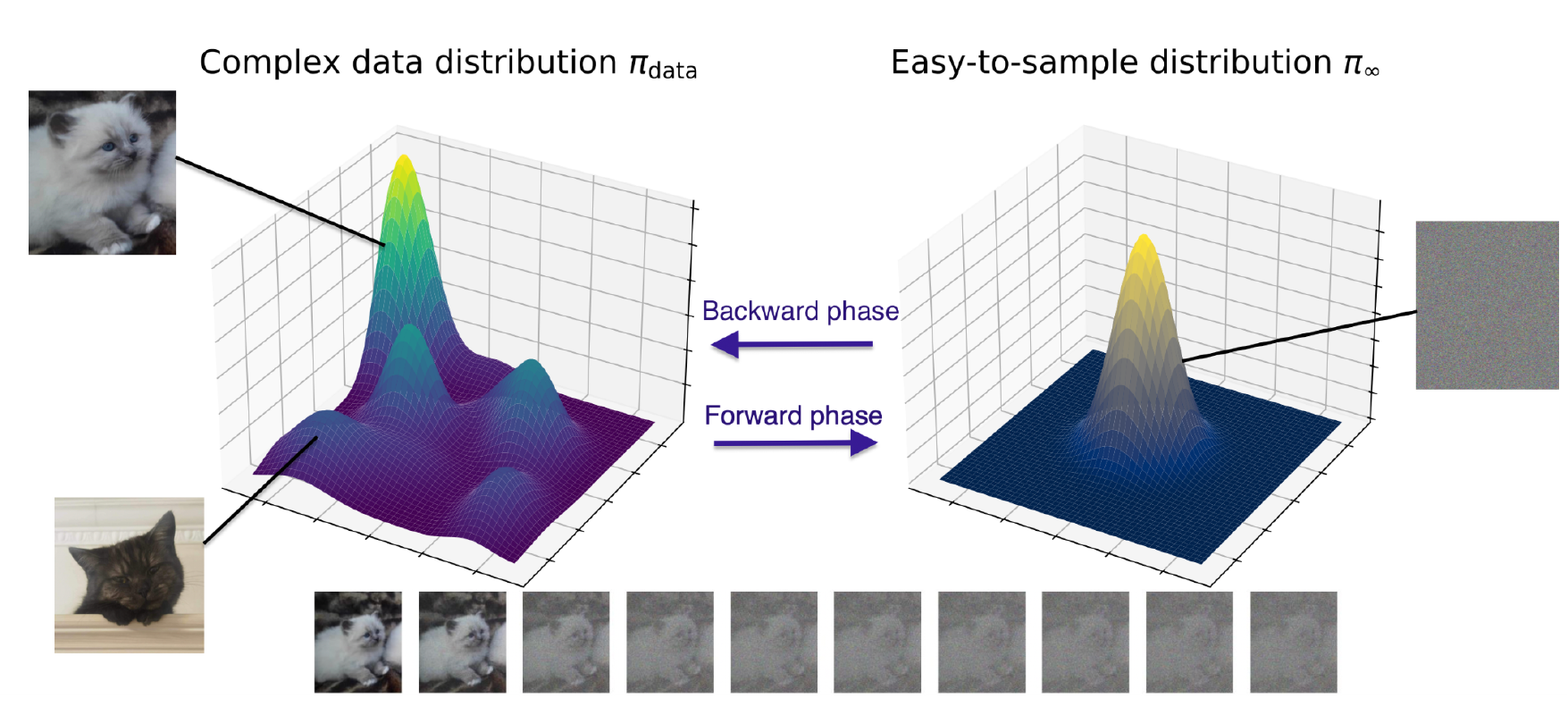}
    \caption{Forward noising and reverse generation. During the forward phase, data are progressively corrupted with noise, bringing their distribution close to a simple reference law. Starting from this reference distribution, the generative phase progressively denoises samples, producing a distribution that approximates the data distribution.
    }
    \label{fig:diffusion_philosophy_intro}
\end{figure}

Diffusion models can be motivated by comparison with other generative
approaches. Their standard denoising score-matching formulation does
not require the model-dependent MCMC often used in likelihood-based
training of energy-based models
\citep{ackley1985learning,hinton2002training},
the learned variational encoder of VAEs
\citep{kingma2014autoencoding,rezende2014stochastic},
the adversarial optimisation of GANs
\citep{goodfellow2014generative}, the coordinate ordering of
autoregressive models \citep{bengio2003neural}, or the explicitly
invertible neural parametrisations of normalising flows
\citep{dinh2017density,kingma2018glow}.

These comparisons clarify differences in training, sampling, and model
design, but do not by themselves explain the underlying probabilistic
construction. Here, we instead emphasize the connection with classical
sampling theory through Langevin dynamics and with score matching,
which constitute two foundational ingredients of score-based generative
models \citep{hyvarinen2005estimation,Vincent,song2019generative}.

\section{Learning and sampling with score functions}

\subsection{A sampling mechanism: Langevin dynamics}

\paragraph{The Langevin equation} 

To build intuition, let us momentarily assume that the data distribution
$\pi_{\mathrm{data}}$ on $(\mathbb{R}^d,\mathcal{B}(\mathbb{R}^d))$ is known and admits a density $p_0$ with respect to the Lebesgue measure. Suppose that $p_0$ can be written in Gibbs form as
$$
p_0(x) \eqdef \frac{1}{Z}\exp \left\{-U(x) \right\} \eqsp,
\qquad
Z \eqdef \int_{\mathbb{R}^d}\exp\!\big(-U(x)\big) \rmd x \in (0,\infty) \eqsp.
$$
For simplicity, we assume throughout this discussion that $U\in C^2(\mathbb{R}^d)$.
In this idealized setting, the generative modeling task reduces to designing
a procedure to sample from $\pi_{\mathrm{data}}$.

A classical approach is to construct a Markov diffusion process $(X_t)_{t\ge 0}$
having $\pi_{\mathrm{data}}$ as invariant distribution. A canonical example is the
\emph{Langevin diffusion} \citep{langevin1908brownian,doob1942brownian,nelson1967dynamical}:
\begin{align} \label{eq:langevin}
\rmd X_t
&= \nabla \log p_0(X_t)\,\rmd t + \sqrt{2}\,\rmd B_t 
\qquad X_{0}\sim \mu_0 \eqsp,
\end{align}
where $(B_t)_{t\ge 0}$ is a standard $d$-dimensional Brownian motion on a
filtered probability space satisfying the usual conditions and
$\mu_0\in \mathcal{P} (\mathbb R^d)$ is an arbitrary initial distribution. 

The convergence properties of \eqref{eq:langevin} to its stationary distribution $\pi_{\rm data}$ depend on the geometry of the
potential $U$, as well as on regularity and growth assumptions ensuring that
the dynamics is well defined and ergodic. The precise assumptions depend on the distance used to measure
convergence. Under suitable conditions, the Langevin diffusion converges exponentially fast to its invariant distribution in
$2$-Wasserstein \citep{villani_optimal_2009} or in total variation \citep{roberts1996exponential}.

A particularly transparent setting is the strongly log-concave case. We say that $U$ is $\alpha$-strongly convex \citep{saumard2014log}, if there exists $\alpha>0$ such that, for all $x,y\in\mathbb R^d$
\begin{equation} \label{eq:strong_convexity}
\left\langle \nabla U(x)-\nabla U(y), x-y \right\rangle \ge \alpha \| x-y \|^2 \eqsp.
\end{equation}

\begin{proposition}[Strongly convex case]\label{prop:w2_contract}
Assume that, $\mathcal{W}_2(\mu_0,\pi_{\rm data})<\infty$, that $U$ is $\alpha$-strongly convex and let $(X_t)_{t\ge 0}$ solve \eqref{eq:langevin}. Then $(X_t)_{t\ge 0}$ converges exponentially fast to $\pi_{\rm data}$ in the $2$-Wasserstein distance, for all $t \ge 0$
\begin{equation} \label{eq:wasserstein_contraction_langevin}
\mathcal{W}_2(\mu_t,\pi_{\rm data}) \le \rme^{-\alpha t} \mathcal{W}_2(\mu_0,\pi_{\rm data}) \eqsp,
\qquad \mu_t \eqdef \mathcal{L}(X_t) \eqsp.
\end{equation}
\end{proposition}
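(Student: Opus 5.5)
The natural route is a synchronous coupling. Two copies of the Langevin diffusion \eqref{eq:langevin} are driven by the same Brownian motion, one started from $\mu_0$ and the other from the stationary law $\pi_{\rm data}$. Let $(X_0,Y_0)$ be an optimal coupling of $(\mu_0,\pi_{\rm data})$ for $\mathcal{W}_2$; it exists since $\mathcal{W}_2(\mu_0,\pi_{\rm data})<\infty$, and $\E\|X_0-Y_0\|^2=\mathcal{W}_2(\mu_0,\pi_{\rm data})^2$. It is taken independent of $(B_t)_{t\ge0}$. Define
\[
\rmd X_t=-\nabla U(X_t)\,\rmd t+\sqrt2\,\rmd B_t,\qquad \rmd Y_t=-\nabla U(Y_t)\,\rmd t+\sqrt2\,\rmd B_t,
\]
using $\nabla\log p_0=-\nabla U$. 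Since $\pi_{\rm data}$ is invariant for \eqref{eq:langevin}, $\mathcal{L}(Y_t)=\pi_{\rm data}$ for all $t\ge0$. Hence $(X_t,Y_t)$ is a coupling of $(\mu_t,\pi_{\rm data})$, and therefore $\mathcal{W}_2(\mu_t,\pi_{\rm data})^2\le \E\|X_t-Y_t\|^2$.

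The key step is that the noise cancels in the difference. The process $D_t\eqdef X_t-Y_t$ satisfies the random ODE $\frac{\rmd}{\rmd t}D_t=-(\nabla U(X_t)-\nabla U(Y_t))$, which holds pathwise with no stochastic integral. The chain rule and \eqref{eq:strong_convexity} give
\[
\frac{\rmd}{\rmd t}\|D_t\|^2=-2\langle \nabla U(X_t)-\nabla U(Y_t),X_t-Y_t\rangle\le -2\alpha\|D_t\|^2 .
\]
Grönwall's lemma, applied pathwise, yields $\|D_t\|^2\le \rme^{-2\alpha t}\|D_0\|^2$ almost surely. Taking expectations and then square roots gives
\[
\mathcal{W}_2(\mu_t,\pi_{\rm data})\le \big(\E\|D_t\|^2\big)^{1/2}\le \rme^{-\alpha t}\,\mathcal{W}_2(\mu_0,\pi_{\rm data}),
\]
which is \eqref{eq:wasserstein_contraction_langevin}.

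The main obstacle is not the computation but its foundations. First, the SDE must be well posed, i.e.\ it must admit a strong, non-exploding solution from any initial law, so that both copies can be built on the same probability space with the same Brownian motion. I would handle this as follows. Since $U\in C^2$, the drift $-\nabla U$ is locally Lipschitz, so a unique strong solution exists up to explosion. Strong convexity gives the one-sided growth bound $\langle -\nabla U(x),x\rangle\le -\alpha\|x\|^2+\|\nabla U(0)\|\|x\|$. Applying Itô's formula to $\|X_t\|^2$ with localisation then rules out explosion. Second, one needs the invariance of $\pi_{\rm data}$. I would check that $\mathcal{L}^*p_0=\nabla\cdot(p_0\nabla U+\nabla p_0)=0$, so $p_0$ solves the stationary Fokker--Planck equation. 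Passing from this to true invariance of the law of $Y_t$ again needs the non-explosion and uniqueness above. Alternatively, invariance can be obtained from the contraction itself: the contraction makes the semigroup a strict contraction on $(\mathcal{P}_2,\mathcal{W}_2)$ and so produces a unique invariant law, which one then identifies with $\pi_{\rm data}$. I would present the first argument and mention the second. Note that $\pi_{\rm data}$ has finite second moment under strong convexity, which keeps everything inside $\mathcal{P}_2$.
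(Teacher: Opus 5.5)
Your proposal is correct and follows essentially the paper's argument: invariance of $\pi_{\rm data}$ via the stationary Fokker--Planck identity (equivalently $\int Lf\,p_0\,\rmd x=0$), then a synchronous coupling with a pathwise Gr\"onwall bound, then choosing an optimal coupling of $\mu_0$ and $\pi_{\rm data}$ (the paper instead optimizes over all couplings, which avoids needing existence). Your explicit non-explosion argument is a welcome addition to what the paper delegates to ``standard generator/core assumptions''; note, however, that your alternative fixed-point route to invariance still has to identify the fixed point with $\pi_{\rm data}$, which is the same issue again.
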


\begin{proof}[Sketch of proof]
Recall that a probability measure $\pi$ is invariant for a Markov semigroup $(P_t)_{t\ge 0}$ with
infinitesimal generator $L$ if $\pi L=0$  \citep[Chap. 4]{ethier1986markov} in the sense that
$$
\forall f\in \mathrm{Dom}(L),\qquad \int_{\mathbb{R}^d} (Lf)(x) \pi(\rmd x)=0 \eqsp.
$$
For the Langevin diffusion \eqref{eq:langevin}, the generator satisfies, for all
$f \in C_c^\infty(\mathbb{R}^d)$,
$$
Lf(x)= - \left\langle \nabla U(x), \nabla f(x) \right\rangle + \Delta f(x) \eqsp.
$$
By integration by parts, using that $\Delta f = \operatorname{div}(\nabla f)$,
$$
\int_{\mathbb{R}^d} \Delta f(x) p_0(x) \rmd x
= - \int_{\mathbb{R}^d} \left\langle  \nabla f(x) , \nabla p_0(x) \right\rangle \rmd x
= \int_{\mathbb{R}^d} \left\langle \nabla f(x), \nabla U(x) \right\rangle p_0(x) \rmd x \eqsp.
$$
Therefore,
$$
\int_{\mathbb{R}^d} (Lf)(x) p_0(x)\,dx
= \int_{\mathbb{R}^d} \left(- \left\langle \nabla U(x), \nabla f(x) \right\rangle+\Delta f(x)\right)p_0(x) \rmd x
=0 \eqsp ,
$$
for all $f\in C_c^\infty(\mathbb{R}^d)$. Under the
standard generator/core assumptions \citep{ethier1986markov,salez2025markov} for the Langevin diffusion, it implies that for all $t \ge 0$
 $$\pi_{\rm data}P_t=\pi_{\rm data} \eqsp,$$
where $(P_t)_{t \ge0}$ is the semi group associated with \eqref{eq:langevin}.

Then, suppose that $(X_t)_{t\geq 0}$ and $(Y_t)_{t \geq 0}$ both solve \eqref{eq:langevin} driven by the \emph{same} Brownian motion (this is known as synchronous coupling). Setting $Z_t \eqdef X_t-Y_t$, we obtain
$$
\frac{\rmd}{\rmd t}\|Z_t\|^2
= -2\big\langle Z_t, \nabla U(X_t)-\nabla U(Y_t)\big\rangle
\le -2\alpha \| Z_t\|^2.
$$
By Grönwall's lemma, $\|Z_t\|^2\le e^{-2\alpha t}\|Z_0\|^2$, and therefore
\begin{equation} \label{eq:control_intermediate_langevin}
\mathbb{E}\|X_t-Y_t\|^2\le e^{-2\alpha t}\mathbb{E}\|X_0-Y_0\|^2 \eqsp,
\end{equation}
 for any coupling of $(X_0,Y_0)$ of initial conditions. Now choose $Y_0\sim\pi_{\rm data}$. By invariance,
$$
\mathcal L(Y_t)=\pi_{\rm data} \eqsp,
\qquad t\ge0 \eqsp.
$$
Optimizing \eqref{eq:control_intermediate_langevin} over all couplings of
$\mu_0$ and $\pi_{\rm data}$ yields \eqref{eq:wasserstein_contraction_langevin}.
\end{proof}

\paragraph{Unadjusted Langevin Algorithm} 

The diffusion \eqref{eq:langevin} is a continuous-time process, and, except for very specific potentials $U$,
it cannot be simulated exactly. A standard practical surrogate is obtained by discretizing time on the grid
$t_k \eqdef kh$ ($k\in\mathbb N$) and applying an Euler--Maruyama scheme, which yields the
\emph{Unadjusted Langevin Algorithm (ULA)} \citep{parisi1981correlation}:
\begin{equation}\label{eq:ula}
X_{k+1}=X_{k} - h \nabla U(X_{k}) + \sqrt{2h} \xi_{k+1} \eqsp,
\qquad \xi_{k+1} \stackrel{\mathrm{i.i.d.}}{\sim} \mathcal N(0_d,\Id_d) \eqsp.
\end{equation}

This discretization induces a bias known as \emph{discretization error}, but it admits  nonasymptotic error bounds under suitable  assumptions on $U$ \citep{dalalyan2012sparse,durmus2017nonasymptotic, dalalyan2017further, ChengBartlett2018, durmus2019high,VempalaWibisono2019, mou2022improved}.

\begin{figure}[t]
    \centering

    \newlength{\panelheight}
    \setlength{\panelheight}{0.39\textwidth}

    \begin{subfigure}[c]{0.52\textwidth}
        \centering
        \begin{minipage}[c][\panelheight][c]{\textwidth}
            \centering
            \includegraphics[
                width=\textwidth,
                height=\panelheight,
                keepaspectratio
            ]{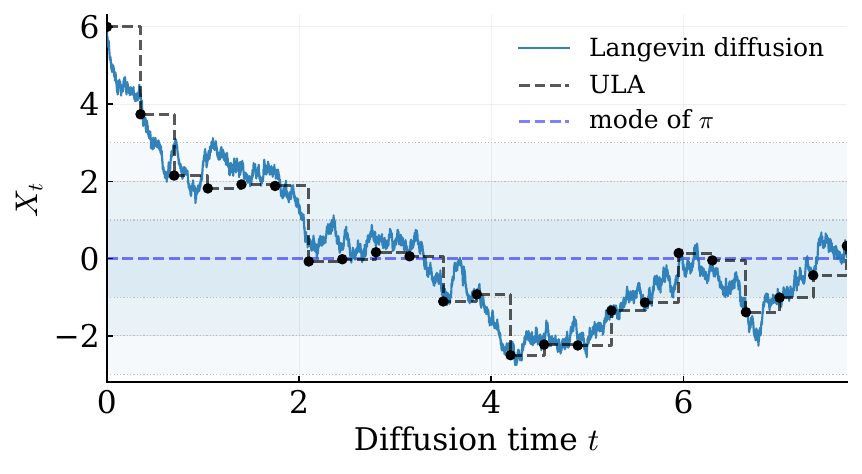}
        \end{minipage}
        \caption{$1$-d example targeting $\mathcal{N}(0,1)$}
        \label{fig:langevin_ula_1d_trajectory}
    \end{subfigure}
    \hfill
    \begin{subfigure}[c]{0.43\textwidth}
        \centering
        \begin{minipage}[c][\panelheight][c]{\textwidth}
            \centering
            \includegraphics[
                width=\textwidth,
                height=\panelheight,
                keepaspectratio
            ]{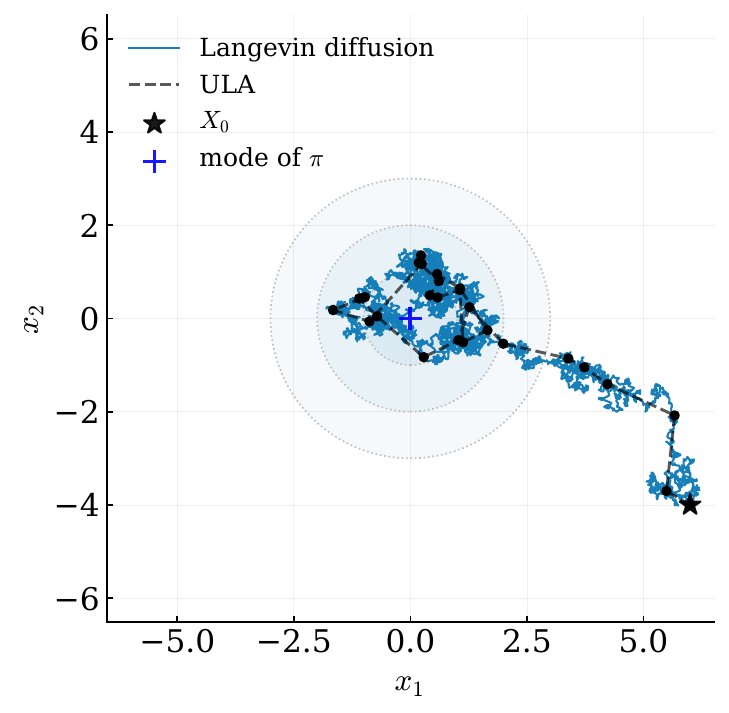}
        \end{minipage}
         \caption{$2$-d example targeting $\mathcal{N}(0,\Id_2)$}
        \label{fig:langevin_ula_2d_trajectory}
    \end{subfigure}

    \caption{
    Langevin diffusion and its Euler--Maruyama discretization in the quadratic case, \emph{i.e.} $U(x)=x^2/2$. The left panel shows the one-dimensional trajectory as a function of time, whereas the right panel shows a two-dimensional trajectory in state space. The blue curve represents a diffusion path, while the black dashed curve and black dots represent the ULA iterates together with their piecewise-constant extension. The two processes are coupled through the same Brownian increments. The pale blue shaded regions indicate typical regions under the stationary distribution.
    }
    \label{fig:langevin_ula_coupled_trajectories}
\end{figure}

\subsection{A learning mechanism: score matching}

\paragraph{Fisher divergence and score matching.} In generative machine learning, sampling is only one side of the story; the other is learning.
As we saw, Langevin sampling requires access to the drift $-\nabla U$, or equivalently to the score function
$\nabla \log p_0$. One may thus fit a parametric vector field
$\score_\theta:\R^d\to\R^d$ parametrized by $\theta \in \Theta$, to approximate this quantity.
A natural discrepancy between score fields is the (squared) $L^2$ error, also known as the
\emph{Fisher divergence}
\begin{equation}\label{eq:fisher_div}
\mathcal L_{\rm explicit}(\theta)
\eqdef
\frac12 \mathbb E \left[\left\|\score_\theta(X_0)-\nabla \log p_0(X_0)\right\|^2\right] \eqsp, \qquad X_0 \sim \pi_{\rm data}\eqsp.
\end{equation}
A key feature of \eqref{eq:fisher_div} is that it does not involve the normalizing constant:
if $p_0(x)=Z^{-1}\exp(-U(x))$, then the constant $Z$ disappears upon differentiation. Once an approximation of the score is learned, one may sample from $\pi_{\rm data}$ by discretizing the Langevin diffusion \eqref{eq:ula} after replacing $\nabla \log p_0$ by $\score_\theta$. 

A key observation due to \citet{hyvarinen2005estimation} is that, although the Fisher divergence
\eqref{eq:fisher_div} involves the unknown score $\nabla\log p_0$, it can be rewritten (up to an additive
constant independent of $\theta$) as an expectation under $\pi_{\rm data}$ of terms depending only on $\score_\theta$ and
its divergence. Since $\pi_{\rm data}$ is, precisely, available through i.i.d. samples, this expectation can be approximated
by Monte Carlo methods. The next proposition makes this reduction explicit via an integration-by-parts
argument.

\begin{proposition}[\citealp{hyvarinen2005estimation}]\label{prop:hyvarinen}
Assume that $p_0\in C^1(\R^d)$ and $\score_\theta\in C^1(\R^d;\R^d)$.
Assume moreover that integration by parts is valid and boundary terms vanish
(e.g.\ $p_0(x)\score_\theta(x)\to 0$ as $\|x\|\to\infty$), and that the expectations below are finite. Then, for every $\theta\in\Theta$,
\begin{equation}\label{eq:hyvarinen_identity}
\mathcal L (\theta)
=
\mathbb E \left[\tfrac12\|\score_\theta(X_0)\|^2 + \mathrm{div} \left( \score_\theta \right) (X_0) \right]
+ C \eqsp, \qquad X_0 \sim \pi_{\rm data} \eqsp,
\end{equation}
where $C$ does not depend on $\theta$. 
\end{proposition}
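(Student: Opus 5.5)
The plan is to expand the square in the Fisher divergence \eqref{eq:fisher_div} and treat the three resulting terms separately. Here $\mathcal L(\theta)$ in \eqref{eq:hyvarinen_identity} is understood as $\mathcal L_{\rm explicit}(\theta)$. Writing $X_0\sim\pi_{\rm data}$ with density $p_0$, we have
\begin{equation*}
\mathcal L_{\rm explicit}(\theta)
= \tfrac12 \E\big[\|\score_\theta(X_0)\|^2\big]
- \E\big[\langle \score_\theta(X_0), \nabla\log p_0(X_0)\rangle\big]
+ \tfrac12 \E\big[\|\nabla \log p_0(X_0)\|^2\big] \eqsp.
\end{equation*}
Each expectation is finite by the integrability assumption. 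The last term does not involve $\theta$, so we set $C \eqdef \tfrac12 \E[\|\nabla\log p_0(X_0)\|^2]$. The first term already appears in \eqref{eq:hyvarinen_identity}. It therefore remains to show that the cross term equals $\E[\mathrm{div}(\score_\theta)(X_0)]$.

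For the cross term, we use the identity $p_0 \nabla \log p_0 = \nabla p_0$ on $\{p_0>0\}$; outside this set the integrand carries the weight $p_0=0$. This gives
\begin{equation*}
\E\big[\langle \score_\theta(X_0), \nabla\log p_0(X_0)\rangle\big]
= \int_{\R^d} \langle \score_\theta(x), \nabla p_0(x)\rangle \rmd x \eqsp.
\end{equation*}
We then integrate by parts, coordinate by coordinate or via the divergence theorem, on balls $B_R$. This yields
\begin{equation*}
\int_{B_R} \langle \score_\theta, \nabla p_0\rangle \rmd x
= \int_{\partial B_R} p_0\,\langle \score_\theta, n\rangle \rmd \sigma
- \int_{B_R} p_0\,\mathrm{div}(\score_\theta) \rmd x \eqsp,
\end{equation*}
which relies on the product rule $\mathrm{div}(p_0 \score_\theta) = \langle \nabla p_0, \score_\theta\rangle + p_0\,\mathrm{div}(\score_\theta)$, valid since both factors are $C^1$.

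Letting $R\to\infty$, the boundary term vanishes by the stated assumption that boundary terms vanish. The two volume integrals converge by dominated convergence, since $\langle \score_\theta, \nabla p_0\rangle$ and $p_0\,\mathrm{div}(\score_\theta)$ are integrable. Hence the cross term equals $-\E[\mathrm{div}(\score_\theta)(X_0)]$. Substituting back, the minus signs combine to give $+\E[\mathrm{div}(\score_\theta)(X_0)]$, which is exactly \eqref{eq:hyvarinen_identity}.

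The only delicate step is the passage to the limit in the integration by parts. Pointwise decay of $p_0\score_\theta$ is not by itself enough to kill the surface integral over $\partial B_R$, whose area grows like $R^{d-1}$. I would handle this by reading the hypothesis as precisely the vanishing of $\int_{\partial B_R} p_0\langle\score_\theta,n\rangle\rmd\sigma$ along some sequence $R_k\to\infty$. Such a sequence exists, for instance, if $p_0\|\score_\theta\|\in L^1(\R^d)$, by writing the volume integral in polar coordinates, which forces $\liminf_{R}\int_{\partial B_R} p_0\|\score_\theta\|\rmd\sigma=0$. This is consistent with the statement's phrasing that integration by parts is valid, so no further work is needed beyond making this interpretation explicit.
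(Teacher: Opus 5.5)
Your proposal is correct and matches the paper's proof: you expand the square, rewrite the cross term using $p_0\nabla\log p_0=\nabla p_0$, integrate by parts and discard the boundary term, and absorb $\tfrac12\E\|\nabla\log p_0(X_0)\|^2$ into $C$. Your truncation on balls $B_R$ sharpens the paper's informal boundary condition: $p_0\|\score_\theta\|\in L^1$ forces the sphere integrals to vanish along a sequence $R_k\to\infty$. One small point: to extend the cross-term integral to all of $\R^d$ you need $\nabla p_0=0$ on $\{p_0=0\}$, which holds because these points are minima of a nonnegative $C^1$ function.
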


\begin{proof}
Expand the square in \eqref{eq:fisher_div}:
$$
\mathcal L(\theta) =\tfrac12 \int p_0(x)\|\score_\theta(x)\|^2\rmd x
- \int p_0(x) \left\langle \score_\theta(x), \nabla\log p_0(x) \right\rangle \rmd x
+ C \eqsp.
$$
For the cross term, using that $\nabla\log p_0 = (\nabla p_0)/p_0$,
$$
\int p_0(x) \score_\theta(x)^\top \nabla\log p_0(x) \rmd x
=
\int \score_\theta(x)^\top \nabla p_0(x) \rmd x \eqsp.
$$
By integration by parts and the boundary condition,
$$
\int \left\langle \score_\theta(x), \nabla p_0 (x) \right\rangle \rmd x
=
-\int_{\R^d} \left( \operatorname{div}  \score_\theta \right) (x) \eqsp p_0(x) \rmd x \eqsp,
$$
which yields \eqref{eq:hyvarinen_identity}.
\end{proof}

\begin{remark}
The proof above highlights why the quadratic loss ($p=2$) is special: expanding the square produces a cross-term
$\E_{p_0}[\langle \score_\theta(X) ,\nabla\log p_0(X) \rangle]$ that is linear in $\nabla\log p_0$ and can therefore be eliminated by integration by parts.
For $p\neq 2$, the dependence on $\nabla\log p_0$ becomes nonlinear and no comparable simplification is available in general.
\end{remark}

In light of Proposition~\ref{prop:hyvarinen}, minimizing $\mathcal L_{\rm explicit}(\theta)$ over $\theta\in\Theta$
is equivalent to minimizing the $\theta$-dependent part of \eqref{eq:hyvarinen_identity}. This motivates the
\emph{score matching} objective
$$
\mathcal{L}_{\rm SM} (\theta) = \mathbb E \left[\tfrac12 \|\score_\theta(X_0)\|^2+ \left( \operatorname{div} \score_\theta \right) (X_0)\right] \eqsp, \qquad X_0 \sim \pi_{\rm data} \eqsp.
$$
In practice, given $n$ i.i.d samples $\{X_i \}_{i=1}^n \stackrel{\mathrm{i.i.d.}}{\sim} \pi_{\rm data}$, one replaces the expectation by its Monte Carlo approximation
$$
\widehat{\mathcal L}_{{\rm SM}}^n (\theta)
\eqdef
\frac1n \sum_{i=1}^n \left(\frac12\|\score_\theta(X_i)\|^2+ \left(  \operatorname{div} \score_\theta \right) (X_i) \right) \eqsp.
$$
This finite-sample objective can then be optimized with gradient-based methods;
in large-scale settings, one typically uses stochastic-gradient methods based on
minibatches \citep{bottou2018optimization}. However, this direct form of score matching is not widely used in large-scale modeling because it is computationally intensive. Computing
$
\operatorname{div}\score_\theta(x)
=
\operatorname{Tr}(\nabla_x\score_\theta(x))
$
requires access to the trace of the input Jacobian of the score network, which is extremely expensive in high dimension \citep{martens2012estimating,song2019generative,
song2020sliced}. There are also statistical reasons to avoid such approach. For empirical measures, or for distributions
concentrated near low-dimensional sets, this score may be ill-defined or highly singular. These two difficulties motivate looking for the regularized approaches introduced next.

%\textcolor{red}{Maybe discuss sliced score matching just for comprehensiveness ? }

% A common approximation uses the Hutchinson identity:
% for $v$ such that $\mathbb E[vv^\top]=\Id$,
% $$
% \nabla\cdot \score_\theta(x)
% =\mathbb E_v\big[v^\top (\nabla_x \score_\theta(x))v\big]
% \approx v^\top (\nabla_x \score_\theta(x))v,
% $$
% which reduces the cost to a constant number of automatic differentiation passes.

\paragraph{The conditioning trick.} A computationally cheaper alternative to direct score matching is to introduce an auxiliary \emph{corruption} mechanism, specified by a
Markov kernel that is known analytically and that can be sampled from. Concretely, let $\tilde q(\cdot\mid x_0)$ be a
Markov kernel on $\R^d$. Given $X_0\sim \pi$, define the corrupted variable $\tilde X$ by
$$
\tilde X \mid X_0=x_0 \sim \tilde q(\cdot\mid x_0) \eqsp,
\qquad
\tilde p(x)\eqdef \int_{\R^d} \tilde q(x\mid x_0)\,\pi(\rmd x_0) \eqsp,
$$
where $\tilde p$ denotes the density of $\tilde X$ (when it exists). Although $\pi$ is only accessible through
samples, the pair $(X_0,\tilde X)$ can be generated by first drawing $X_0$ and then sampling
$\tilde X\sim \tilde q(\cdot\mid X_0)$. This construction yields a tractable route to learn the marginal score
$\nabla\log \tilde p$ of the corrupted variable $\tilde X$ via the conditional score function $\nabla \log \tilde q(\tilde X\mid X_0)$.

\begin{lemma} \label{lem:marginal_score_condexp}
Assume that for $\pi$-a.e.\ $x_0$, the kernel $\tilde q(\cdot\mid x_0)$ admits a $C^1$ density and that
differentiation under the integral sign is justified. Assume moreover that
$\tilde p(x)>0$ on the region considered. Then, for $\tilde X\sim \tilde p$,
\begin{equation}\label{eq:marginal_score_condexp}
\nabla \log \tilde p(\tilde X)
=
\mathbb E \left[\nabla \log \tilde q(\tilde X\mid X_0) \middle| \tilde X\right] \eqsp.
\end{equation}
\end{lemma}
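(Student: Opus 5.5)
The plan is a direct computation of $\nabla\tilde p$ followed by an identification of the resulting ratio with a conditional expectation via Bayes' formula. Write $\tilde q(x\mid x_0)$ for the density of the kernel. First, I will differentiate the defining integral
\[
\tilde p(x)=\int_{\R^d}\tilde q(x\mid x_0)\,\pi(\rmd x_0)
\]
under the integral sign, which the hypotheses allow. This gives
\[
\nabla\tilde p(x)=\int_{\R^d}\nabla_x\tilde q(x\mid x_0)\,\pi(\rmd x_0).
\]
On the set where $\tilde q(x\mid x_0)>0$, I will rewrite $\nabla_x\tilde q=\tilde q\,\nabla_x\log\tilde q$. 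On the complement the integrand vanishes $\pi$-a.e.: a nonnegative $C^1$ function has zero gradient at its zeros, so no contribution is lost with the convention that the product is $0$ there.

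Second, I will divide by $\tilde p(x)>0$. This gives
\[
\nabla\log\tilde p(x)=\int_{\R^d}\nabla_x\log\tilde q(x\mid x_0)\,\frac{\tilde q(x\mid x_0)\,\pi(\rmd x_0)}{\tilde p(x)}.
\]
The measure $\tilde q(x\mid x_0)\pi(\rmd x_0)/\tilde p(x)$ is, by Bayes' formula, a version of the conditional law of $X_0$ given $\tilde X=x$. To verify this, I will check the defining property of conditional expectation. For bounded measurable $g$ and $h$, write $\kappa(x,\rmd x_0)$ for this kernel. Fubini applied to the joint density $\tilde q(x\mid x_0)\,\rmd x\,\pi(\rmd x_0)$ gives
\[
\E[g(X_0)h(\tilde X)]=\int h(x)\left(\int g(x_0)\,\kappa(x,\rmd x_0)\right)\tilde p(x)\,\rmd x.
\]
Hence $\int g\,\rmd\kappa(\tilde X,\cdot)=\E[g(X_0)\mid\tilde X]$ a.s. Applying this to $g(x_0)=\nabla_x\log\tilde q(x\mid x_0)$, evaluated at $x=\tilde X$, yields \eqref{eq:marginal_score_condexp}.

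The main obstacle is the last step: the function $g$ depends on $x$ as well as on $x_0$. It is therefore the standard "freezing" lemma for conditional expectations of $F(\tilde X,X_0)$ given $\tilde X$ that is needed, namely $\E[F(\tilde X,X_0)\mid\tilde X]=\int F(\tilde X,x_0)\,\kappa(\tilde X,\rmd x_0)$. This holds for jointly measurable $F$ with $\E\|F(\tilde X,X_0)\|<\infty$. I would prove it first for products $h(x)g(x_0)$ using the Fubini computation above, then extend by a monotone class argument and truncation. Integrability of the conditional score is implicitly part of the hypothesis that differentiation under the integral is justified, and I would state this explicitly.
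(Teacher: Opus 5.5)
Your proposal is correct and follows the paper's proof. Like the paper, you differentiate under the integral sign, write $\nabla_x\tilde q=\tilde q\,\nabla_x\log\tilde q$, and divide by $\tilde p(x)>0$. Your Bayes-formula and freezing-lemma step makes explicit something the paper leaves implicit: that $\tilde q(x\mid x_0)\,\pi(\rmd x_0)/\tilde p(x)$ is the conditional law of $X_0$ given $\tilde X=x$.

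One small caveat concerns your claim that integrability is implied by the hypotheses. The domination that justifies differentiating under the integral only gives $\int\|\nabla_x\tilde q(x\mid x_0)\|\,\pi(\rmd x_0)<\infty$ for each $x$. That is conditional rather than global integrability of the conditional score. It still suffices if you work with the generalized conditional expectation, or you can simply assume $\E\|\nabla_x\log\tilde q(\tilde X\mid X_0)\|<\infty$ explicitly, as you propose.
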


\begin{proof}
Differentiating under the integral sign gives
$$
\nabla \tilde p(x)=\int_{\R^d} \nabla_x \tilde q(x\mid x_0) \pi (\rmd x_0)
=\int_{\R^d}\tilde q(x\mid x_0)\,\nabla_x\log \tilde q(x\mid x_0) \pi (\rmd x_0) \eqsp.
$$
Dividing by $\tilde p(x)$ yields \eqref{eq:marginal_score_condexp}.
\end{proof}

Lemma~\ref{lem:marginal_score_condexp} suggests learning the intractable marginal score $\nabla\log \tilde p$  by regression against the tractable
conditional score $\nabla \log \tilde q(\tilde X\mid X_0)$. Indeed, for any square-integrable random vector $Y$,
$$
\mathbb E\big[ Y | \tilde X \big] =\arg\min_{f}\ \mathbb E\|f(\tilde X)-Y\|^2 \eqsp,
$$
where the minimum is over measurable $f$ such that $\mathbb E \|f(\tilde X) \|^2<\infty$. Taking $Y=\nabla \log \tilde q(\tilde X\mid X_0)$ and using Lemma~\ref{lem:marginal_score_condexp}, the minimizer is
$f^\star (\tilde X)=\nabla\log \tilde p(\tilde X)$. 

This, in turns, justifies the choice of the $L_2$ distance as natural discrepancy to optimize in order to learn the score function $\nabla \log \tilde p$ and is further formalized by the following proposition.

\begin{proposition}\label{prop:csm_single}
Let $X_0\sim \pi$ and $\tilde X\mid X_0=x_0\sim \tilde q(\cdot\mid x_0)$, and denote by $\tilde p$ the density of $\tilde X$. Assume that the identity in Lemma~\ref{lem:marginal_score_condexp} holds.
For a parametric model $\score_\theta:\R^d\to\R^d$, define
$$
\mathcal L_{\rm CSM}(\theta)
\eqdef
\tfrac12\,\E\Big[\big\|\score_\theta(\tilde X)-\nabla_x\log \tilde q(\tilde X\mid X_0)\big\|^2\Big] \eqsp.
$$
Then,
$$
\mathcal L_{\rm CSM}(\theta)
=
\tfrac12 \E \big\|\score_\theta(\tilde X)-\nabla\log \tilde p(\tilde X)\big\|^2
+ C \eqsp,
$$
where $C$ does not depend on $\theta$.
\end{proposition}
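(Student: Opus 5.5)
The plan is to add and subtract the marginal score inside the square and show that the cross term vanishes by the tower property. Write $Y \eqdef \nabla_x\log \tilde q(\tilde X\mid X_0)$ and $m(\tilde X)\eqdef \nabla\log\tilde p(\tilde X)$. By Lemma~\ref{lem:marginal_score_condexp}, $m(\tilde X)=\E[Y\mid \tilde X]$.

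The decomposition reads
\[
\score_\theta(\tilde X)-Y=\big(\score_\theta(\tilde X)-m(\tilde X)\big)+\big(m(\tilde X)-Y\big).
\]
Expanding the squared norm gives
\[
\mathcal L_{\rm CSM}(\theta)=\tfrac12\E\|\score_\theta(\tilde X)-m(\tilde X)\|^2+\E\big\langle \score_\theta(\tilde X)-m(\tilde X),\, m(\tilde X)-Y\big\rangle+\tfrac12\E\|m(\tilde X)-Y\|^2 .
\]
The last term does not involve $\theta$; it is the constant $C$, which is the expected conditional variance of the conditional score.

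For the cross term, condition on $\tilde X$. The factor $\score_\theta(\tilde X)-m(\tilde X)$ is $\sigma(\tilde X)$-measurable, so it can be pulled out of the conditional expectation. This leaves $\E[m(\tilde X)-Y\mid\tilde X]=m(\tilde X)-\E[Y\mid\tilde X]=0$, and the cross term vanishes. Equivalently, $m(\tilde X)-Y$ is orthogonal in $L^2$ to every square-integrable function of $\tilde X$, which is the projection characterization of conditional expectation recalled just before the proposition.

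The only real obstacle is integrability, which is needed to justify the expansion and the use of the tower property. I would assume explicitly that $\E\|Y\|^2<\infty$ and $\E\|\score_\theta(\tilde X)\|^2<\infty$. The first implies $\E\|m(\tilde X)\|^2<\infty$ by conditional Jensen. Then every term in the expansion is finite, and the cross term is integrable by Cauchy--Schwarz. If $\E\|\score_\theta(\tilde X)\|^2=\infty$, both sides of the identity are $+\infty$ and the statement holds trivially.
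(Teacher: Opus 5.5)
Your proposal is correct and is essentially the paper's argument. The paper invokes the Pythagorean identity for conditional expectation, with $Y=\nabla_x\log\tilde q(\tilde X\mid X_0)$ and $m(\tilde X)=\E[Y\mid\tilde X]$, under the same square-integrability assumptions; your explicit expansion, with the cross term removed by the tower property, is exactly the proof of that identity, and your remark on the case $\E\|\score_\theta(\tilde X)\|^2=\infty$ is a harmless refinement the paper does not include.
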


\begin{proof}[Sketch of proof]
Suppose for simplicity that $\mathbb E||\score_{\theta}( \tilde X)||^2 < \infty$ and $\mathbb E||\nabla \log \tilde q(\tilde X\mid X_0)||^2 < \infty$. Let $ Y \eqdef \nabla \log \tilde q(\tilde X\mid X_0)$ and $m(\tilde X)\eqdef \mathbb E[Y\mid \tilde X]=\nabla\log \tilde p(\tilde X)$
by Lemma~\ref{lem:marginal_score_condexp}. The Pythagorean identity for conditional expectation gives
$$
\mathbb E\|\score_\theta(\tilde X)- Y\|^2
=
\mathbb E\|\score_\theta(\tilde X)-m(\tilde X)\|^2 + \mathbb E\|m(\tilde X)-Y\|^2 \eqsp,
$$
and the last term is independent of $\score_\theta$.
\end{proof}

\begin{remark}
In the special case where $\tilde q(\cdot\mid x_0)$ is a Gaussian corruption
kernel, Proposition \ref{prop:csm_single} is the denoising score matching identity of
\citet{Vincent}.
\end{remark}

In practice, if $\{X_i\}_{i=1}^n$ are i.i.d. data samples and
$\tilde X_i\sim \tilde q(\cdot\mid X_i)$ are independent corruptions, the population loss is replaced by the empirical objective
$$
\widehat{\mathcal L}_{{\rm CSM}}^n(\theta)
\eqdef \frac1{2n}\sum_{i=1}^n \left\| \score_\theta(\tilde X_i) - \nabla_x\log \tilde q(\tilde X_i\mid X_i) \right\|^2 \eqsp,
$$
and this objective can then be optimized by stochastic-gradient methods.

\subsection{Score matching and Gaussian corruption kernels} \label{sec:denoising_score_matching}

\paragraph{Denoising score matching.} The conditional formulation of the previous section provides a tractable way to learn score functions, but it
targets the marginal of a \emph{corrupted} variable rather than the original data distribution $\pi_{\rm data}$. A standard way to construct such corruptions is to consider additive Gaussian kernels \citep{Vincent}. For $\sigma>0$, let
\begin{equation} \label{eq:gaussian_smoothed_var}
\tilde X_\sigma \eqdef X_0 + \sigma Z.
\qquad
Z\sim\mathcal N(0,\Id_d),\quad X_0\sim \pi_{\rm data},\quad Z\perp\!\!\!\perp X_0 \eqsp,
\end{equation}
and denote by $\pi_{\sigma}$ the distribution of $\tilde X_\sigma$. Several features make Gaussian corruption a particularly natural choice.

\begin{itemize}[leftmargin=*]

\item \textbf{Closed-form conditional score.} The Gaussian corruption kernel is analytically tractable and simple. The conditional density of $\tilde X_\sigma$ given $X_0=x_0$ is
$$
q_\sigma(x\mid x_0)
=
(2\pi\sigma^2)^{-d/2}
\exp \left(-\frac{\|x-x_0\|^2}{2\sigma^2}\right) \eqsp.
$$
and its score function is given by
\begin{equation} \label{eq:score_function_closed_form}
\nabla_x\log q_\sigma(x\mid x_0)
=
-\frac{x-x_0}{\sigma^2} \eqsp.
\end{equation}
Thus, the conditional score appearing in the conditional score matching identity in Proposition \ref{prop:csm_single} is available in closed form. Moreover, its score is simple, linear in $x$. This is a distinctive property of Gaussian kernels: affine scores characterize Gaussian densities on $\R^d$ \citep[Proposition 5]{pierret2025diffusion}.

\item \textbf{Computational convenience.} Gaussian corruption kernels are computationally efficient to sample from. Gaussian random variables generation is heavily optimized in various numerical libraries, through methods like variant of Box--Muller transforms \citep{box1958generation} in \texttt{PyTorch} or Ziggurat methods \citep{marsaglia2000ziggurat} in \texttt{NumPy}. This makes Gaussian corruption particularly convenient in large-scale learning procedures.

\item \textbf{Regularization properties.} Gaussian corruption regularizes the data distribution. The score $\nabla\log p_0$ is only defined under absolute continuity and smoothness assumptions on $p_0$. In particular, for the empirical measure $\hat\pi_N=\frac1N\sum_{i=1}^N \delta_{X_i}$ there is no density with respect to Lebesgue measure, hence no classical score. More generally, if $\pi_{\rm data}$ is supported on (or highly concentrated near) a low-dimensional subset of $\R^d$, the score may be ill-defined or extremely difficult to estimate. By contrast, for every $\sigma>0$, the
law of $\tilde X_\sigma$ admits a positive density given by
$$
p_\sigma(x)=\int \varphi_\sigma(x-x_0) \pi_{\rm data}(\rmd x_0) \eqsp,
\qquad
\varphi_\sigma = \mathcal N(0,\sigma^2 \Id_d) \eqsp,
$$
Moreover, $p_\sigma\in C^\infty(\R^d)$ for every $\sigma>0$, so that the
(smoothed) score $\nabla\log p_\sigma$ is well-defined and typically much better behaved than $\nabla\log p_0$. Finally, Gaussian corruption also preserves finite moments: for any $p\ge 1$, if $\E\|X_0\|^p<\infty$, then $\E\| \tilde X_\sigma \|^p<\infty$.

\item \textbf{Connection with denoising.} Finally, Gaussian corruption turns score estimation into a denoising problem.
Indeed, Tweedie's formula \citep{robbins1956empirical,miyasawa1961empirical}
gives
\begin{equation}\label{eq:tweedie}
\mathbb E[X_0\mid \tilde X_\sigma=y] = y + \sigma^2 \nabla \log p_\sigma(y) \eqsp,
\end{equation}
or equivalently,
$$
\nabla\log p_\sigma(y) = \frac{\mathbb E [X_0 \mid \tilde X_\sigma=y] - y}{\sigma^2} \eqsp.
$$
In other words, $\nabla \log p_\sigma(y)$ determines the Bayes-optimal denoiser under squared $L_2$ loss. This link can also be seen directly at the level of the conditional score matching objective. Using, \eqref{eq:score_function_closed_form}, the conditional score matching loss at noise level $\sigma$ writes as
$$
\mathcal L_{\rm DSM}(\theta;\sigma)
= \frac12 \mathbb E\left[ \left\| \score_\theta(\tilde X_\sigma,\sigma) + \frac{\tilde X_\sigma-X_0}{\sigma^2}
\right\|^2 \right] \eqsp.
$$
Defining the denoiser $D_\theta(y,\sigma) \eqdef y+\sigma^2\score_\theta(y,\sigma)$, the above writes as
$$
\mathcal L_{\rm DSM}(\theta;\sigma) = \frac{1}{2\sigma^4} \mathbb E\left[
\|D_\theta(\tilde X_\sigma,\sigma)-X_0\|^2 \right] \eqsp.
$$
Hence, up to a multiplicative factor, denoising score matching is equivalent to training a denoiser by mean-square error. At the population level, the optimal denoiser is $D_\sigma^\star(y) = \mathbb E[X_0\mid \tilde X_\sigma=y]$. This interpretation is important both theoretically and in practice. In practice, score learning can be implemented
as a denoising task, for which modern neural architectures, in particular
U-Net-type architectures for image data, are especially well suited \citep{Vincent,ronneberger2015u,ho2020denoising,beatsGAN, edmkarras}. In theory, it turns score estimation into a regression problem with noisy labels, which can be studied through optimization and generalization theory for neural networks \citep{han2024neural,strasman2026sgd}.
\end{itemize}

\paragraph{Gaussian smoothing.}

Denoising score matching at a fixed $\sigma>0$ provides an estimate of the score $\nabla\log p_\sigma$ of the Gaussian-smoothed variable $\tilde X_\sigma$ defined in \eqref{eq:gaussian_smoothed_var}. Consequently, any Langevin-type sampler driven by this score
targets $\pi_\sigma$ rather than the original data distribution $\pi_{\rm data}$. Thus a single noise level introduces a bias.

Choosing $\sigma$ is therefore a central question. On the one hand, small $\sigma$ keeps the smoothed distribution close to the data distribution. For example, in $\mathcal{W}_2$ distance,
$$
\mathcal{W}_2(\pi_{\rm data}, \pi_{\sigma})^2 \leq \mathbb{E} \left[ \left\| X_0 - \tilde X_\sigma \right\|^2 \right]= \sigma^2 \mathbb{E} \left[ \| Z \|^2 \right] = \sigma^2 d
$$
On the other hand, if $\sigma$ is too small, then the denoising score matching target becomes poorly conditioned. Indeed, for the Gaussian corruption kernel
$$
\mathbb E \left[
\left\|\nabla_x \log q_\sigma(\tilde X_\sigma\mid X_0)\right\|^2
\right]
= \frac{1}{\sigma^2}\mathbb E\|Z\|^2 = \frac{d}{\sigma^2} \eqsp.
$$
Thus, as $\sigma\downarrow0$, the target amplitude explodes, which can make
score estimation statistically and numerically difficult.

There is also a second, distinct drawback to taking $\sigma$ too small. It may impair the mixing properties of the Langevin sampler used at generation. Although Gaussian convolution makes $p_\sigma$ strictly positive on all of $\R^d$, for small $\sigma$ the distribution $p_\sigma$ remains close to $p_0$ and may therefore retain a strongly multimodal geometry. In particular, if $p_0$ is supported on well-separated clusters, then $p_\sigma$ only creates very narrow and low-density bridges between these regions when $\sigma$ is small (see the toy example below). As a consequence, the potential
$$
U_\sigma(x)\eqdef -\log p_\sigma(x)
$$
may still exhibit deep wells separated by high barriers. Langevin dynamics targeting $p_\sigma$ can then be \emph{metastable} \citep{bovier2004metastability}: it may converge rapidly to local equilibrium within one mode, while transitions between distinct modes occur on a much longer time scale. Increasing $\sigma$ smooths these barriers and generally
improves the global geometry seen by the sampler, but at the price of introducing more bias.

\paragraph{A simple two-cluster toy model example.} Let $\varphi_\sigma(x)$ denotes the density of $\mathcal N(0,\sigma^2)$. In one dimension, consider the data distribution
\begin{equation} \label{eq:pi_data_two_clusters}
\pi_{\rm data}= \tfrac12 \delta_{-a} + \tfrac12 \delta_{+a} \eqsp ,\qquad a>0 \eqsp.
\end{equation}
After Gaussian smoothing by $\mathcal N(0,\sigma^2)$, the density of the smoothed random variable is
$$
p_\sigma(x)
=
\tfrac12 \varphi_\sigma(x-a) + \tfrac12 \varphi_\sigma(x+a) \eqsp.
$$
For $\sigma\ll a$, this density remains strongly bimodal. Its score can be written explicitly as
$$
\partial_x \log p_\sigma (x)
=
-\frac{x}{\sigma^2}
+
\frac{a}{\sigma^2}\tanh\!\left(\frac{ax}{\sigma^2}\right) \eqsp.
$$
In particular,
$$
\partial_x \log p_\sigma (0)=0 \eqsp,
\qquad
\partial_x^2 \log p_\sigma(0)=\frac{a^2-\sigma^2}{\sigma^4}
\sim \frac{a^2}{\sigma^4}
\qquad(\sigma\downarrow 0) \eqsp.
$$
Thus, near the low-density region separating the two modes, the score becomes increasingly steep as
$\sigma$ decreases. Equivalently, if $U_\sigma\eqdef -\log p_\sigma$, then the barrier height between a mode and the saddle at $0$ satisfies
$$
U_\sigma(0)-U_\sigma(a)
=
\log \left(\frac{p_\sigma(a)}{p_\sigma(0)}\right)
=
\frac{a^2}{2\sigma^2}-\log 2+o(1) \eqsp,
\qquad \sigma\downarrow 0 \eqsp.
$$
This illustrates why a single small noise level can be problematic: although $p_\sigma$ is smooth, the corresponding
Langevin dynamics may still face sharp barriers and poor mixing between modes. This behavior is illustrated in
Figure~\ref{fig:two_cluster_gaussian_smoothing}.

\begin{figure}[t]
    \centering

    \begin{subfigure}{0.48\textwidth}
        \centering
        \includegraphics[width=\textwidth]{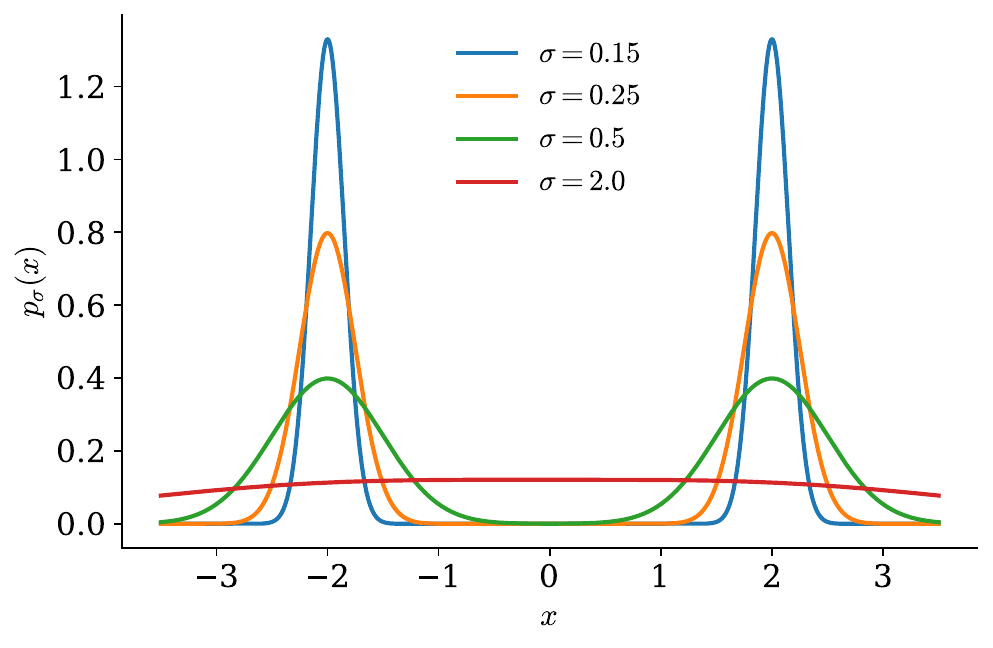}
        \caption{Density $p_\sigma$.}
    \end{subfigure}
    \hfill
    \begin{subfigure}{0.48\textwidth}
        \centering
        \includegraphics[width=\textwidth]{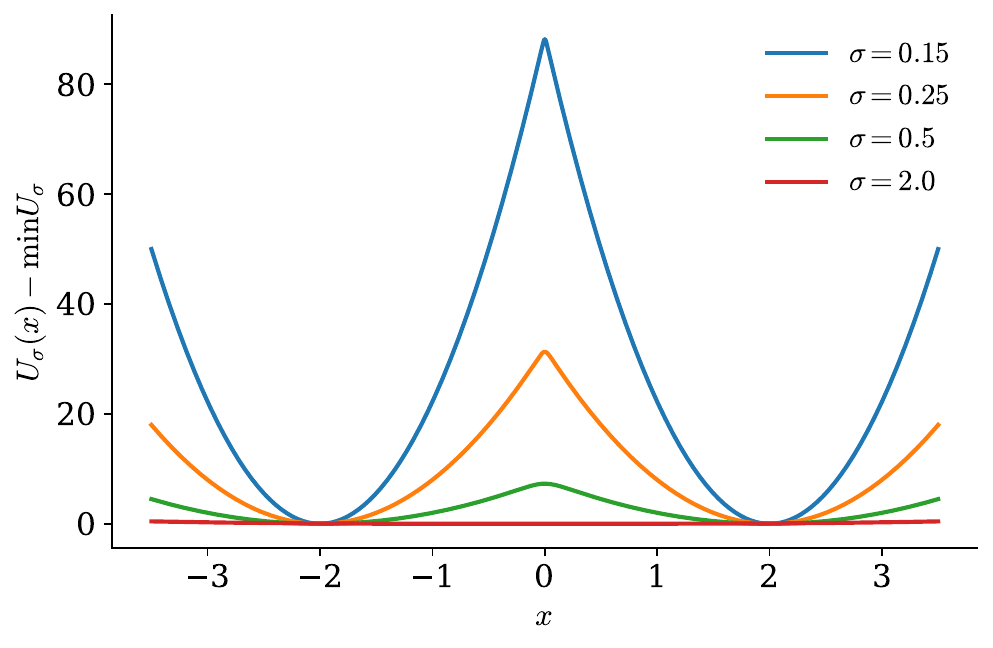}
        \caption{Potential $U_\sigma=-\log p_\sigma$.}
    \end{subfigure}

    \vspace{1em}

    \begin{subfigure}{0.5\textwidth}
        \centering
        \includegraphics[width=\textwidth]{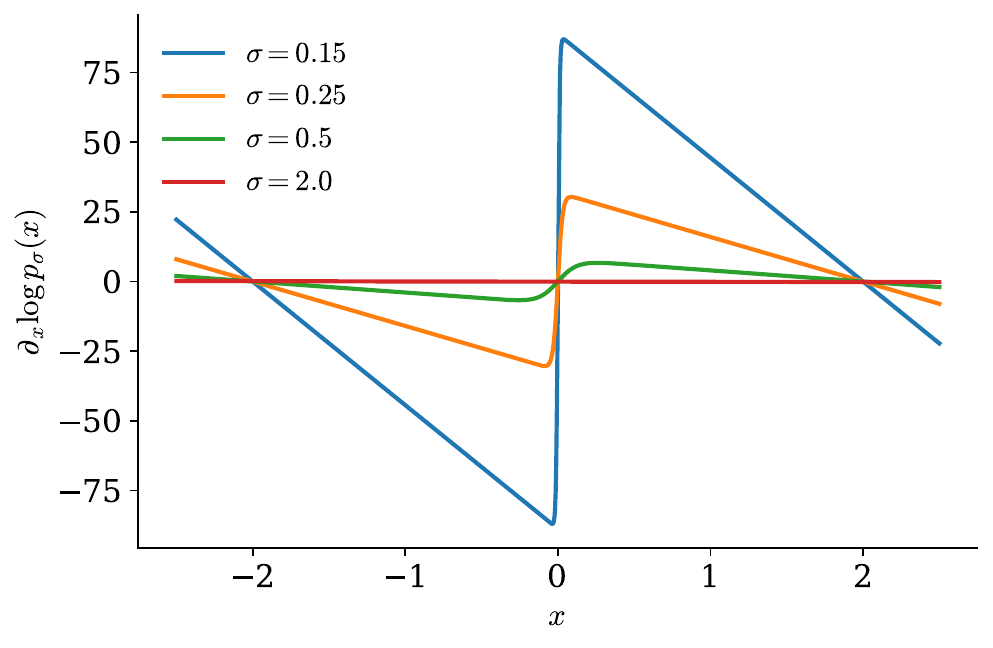}
        \caption{Score function $\partial_x\log p_\sigma$.}
    \end{subfigure}

    \caption{
    Gaussian smoothing of the two-cluster distribution
    $\pi_{\rm data}=\frac12\delta_{-a}+\frac12\delta_{+a}$ with $a=2$.
    Small noise preserves multimodality and produces sharp barriers.
    In the middle panel we plot $U_\sigma-\min U_\sigma$ rather than $U_\sigma$ itself; this vertical shift is only for visualization.
    }
    \label{fig:two_cluster_gaussian_smoothing}
\end{figure}

\subsection{Annealed Langevin dynamics}

These observations indicate that a single noise level is intrinsically suboptimal: large $\sigma$ yields substantial bias, while small $\sigma$ leads to poorly scaled learning targets and can produce sharp energy barriers in the sampling process. This trade-off is one of the main reasons why modern score-based methods do not rely on a single noise level. 

At the algorithmic level, learning scores at multiple noise levels leads to
\emph{annealed Langevin dynamics} \citep{neal2001annealed, song2019generative,song2020improved,jolicoeur-martineau2021adversarial}. Given a decreasing sequence of noise levels
$$
\sigma_L>\sigma_{L-1}>\cdots>\sigma_1 \eqsp,
$$
assume that score models have been trained so that
$$
\score_\theta(\cdot,\sigma_i)\approx \nabla\log p_{\sigma_i},
\qquad i=1,\ldots,L \eqsp.
$$
Annealed Langevin dynamics proceeds from large to small noise levels. At each level $\sigma_i$, the noise is frozen and one performs $K_i\in\mathbb N$ Langevin steps (similar to \eqref{eq:ula}) using the score field $\score_\theta(\cdot,\sigma_i)$. For fixed $\sigma_i$, these steps locally move the sample toward the smoothed density $p_{\sigma_i}$. One then decreases the noise level and uses the output of the previous stage as initialization for the next one. The procedure is precisely described in Algorithm \ref{alg:ald}. It follows that the sampler starts from a highly smoothed distribution and progressively refines the samples toward lower noise distributions closer to the data distribution (see Figure \ref{fig:ald-vs-reverse-sde}).

% \begin{algorithm}[t]
% \caption{Annealed Langevin dynamics}
% \label{alg:ald}
% \begin{algorithmic}[1]
% \Require Noise levels $\sigma_L>\sigma_{L-1}>\cdots>\sigma_1$
% \Require Learned score approximation $\score_\theta(\cdot,\sigma_i)\approx \nabla\log p_{\sigma_i}$
% \Require Step sizes $\eta_i>0$ and numbers of steps $K_i\in\mathbb N$
% \State Initialize $x\sim \pi_{\rm init}$, where $\pi_{\rm init}$ is a high-noise distribution
% \For{$i=L,L-1,\ldots,1$}
%     \For{$k=0,\ldots,K_i-1$}
%         \State Sample $\xi_{i,k+1}\sim\mathcal N(0,\Id_d)$
%         \State
%         \[
%         x \leftarrow x
%         +\eta_i\,\score_\theta(x,\sigma_i)
%         +\sqrt{2\eta_i}\,\xi_{i,k+1}
%         \]
%     \EndFor
% \EndFor
% \State \Return $x$
% \end{algorithmic}
% \end{algorithm}

\begin{algorithm}[t]
\caption{Annealed Langevin dynamics}
\label{alg:ald}
\begin{algorithmic}[1]
\Statex \textbf{Input:} noise levels $\sigma_L>\sigma_{L-1}>\cdots>\sigma_1$;
learned score models $\score_\theta(\cdot,\sigma_i)\approx \nabla\log p_{\sigma_i}$;
step sizes $\eta_i>0$; numbers of steps $K_i\in\mathbb N$
\Statex \textbf{Initialization:} sample $x\sim \pi_{\rm init}$.
\For{$i=L,L-1,\ldots,1$}
    \For{$k=0,\ldots,K_i-1$}
        \State Sample $Z_{i,k+1} \stackrel{\mathrm{i.i.d.}}{\sim} \mathcal N(0,\Id_d)$
        \State $x \gets x+\eta_i\,\score_\theta(x,\sigma_i)+\sqrt{2\eta_i} Z_{i,k+1}$
    \EndFor
\EndFor
\State \Return $x$
\end{algorithmic}
\end{algorithm}

\paragraph{Limitations of ALD.}

Annealed Langevin dynamics provides a natural discrete implementation of a multi-scale noising idea, but it also has several limitations. First, the sampler is intrinsically sequential. At each noise level $\sigma_i$, the output of the Langevin chain becomes the initial condition for the next chain at
$\sigma_{i-1}$. Thus any error made at an early stage, whether due to time discretization, an imperfect score estimate, or insufficient mixing, is inevitably carried forward to lower noise levels. Such errors may accumulate over time and degrade the final sample quality.

Second, the method can be computationally expensive. If $K_i$ Langevin steps are
used at level $\sigma_i$, then one sample requires $\sum_{i=1}^L K_i$
sequential evaluations of the score network. 

Third, each Langevin step is local. For a fixed noise level $\sigma_i$, the
Langevin chain targets $p_{\sigma_i}$ only in the ideal limit of exact scores,
vanishing discretization error, and sufficiently long simulation time. In high
dimension, or when $p_{\sigma_i}$ remains multimodal, local moves may mix slowly between well-separated regions.

\paragraph{Density interpolation versus transport.}
There is also a more conceptual limitation, which remains even if one ignores
score approximation errors and time-discretization errors. Annealed Langevin dynamics starts from a prescribed sequence of intermediate
densities, informally when $\sigma_1 >0 $ is small
$$
p_{\sigma_L}, \eqsp p_{\sigma_{L-1}}, \eqsp \ldots, \eqsp p_{\sigma_1} \approx \pi_{\rm data} \eqsp ,
$$
and uses Langevin dynamics to partially equilibrate the sample at each noise
level. Thus it is naturally viewed as a discrete interpolation in the space of densities, together with local MCMC updates. However, specifying a path of
densities is not the same as specifying a stochastic process whose marginals
follow that path. To see this, imagine refining the noise ladder into a smooth schedule
$t\mapsto \sigma(t)$ and write $ p_t \eqdef p_{\sigma(t)}$ (resp. $\pi_t$ for the distribution law).
A naive continuous analogue of annealed Langevin dynamics would be the
time-inhomogeneous Langevin process
\begin{equation} \label{eq:langevin_multi_step}
\rmd Y_t = \nabla\log p_t(Y_t) \rmd t + \sqrt{2} \rmd B_t \eqsp.
\end{equation}
For smooth compactly supported test functions $f$, It\^o's formula gives the
time-dependent infinitesimal generator
$$
L_t f(x)
\eqdef
\lim_{h\downarrow 0}
\frac{\mathbb E\!\left[f(Y_{t+h}^{t,x})\right]-f(x)}{h} =  \langle \nabla\log p_t,\nabla f\rangle+\Delta f \eqsp, 
$$
where $Y^{t,x}$ denotes the solution started from $x$ at time $t$. Let $L_t^*$ denote its formal adjoint with respect to the Lebesgue measure.\footnote{That is,
$$
\int (L_t f)(x)\rho(x) \rmd x = \int f(x)(L_t^*\rho)(x) \rmd x
$$
for $f\in C_c^\infty(\mathbb R^d)$, where the expression for $L_t^*\rho$ follows
by integration by parts.} Then, for
a smooth density $\rho$,
$$
L_t^*\rho = - \operatorname{div} \left( \rho \nabla\log p_t \right) +\Delta \rho \eqsp.
$$
In particular, for each fixed time $t$,
$$
L_t^*p_t
= - \operatorname{div} \left( p_t\nabla\log p_t \right)+\Delta p_t
= - \operatorname{div} \left( \nabla p_t \right) +\Delta p_t
= 0 \eqsp.
$$
Thus $\pi_t$ is invariant for the Langevin dynamics obtained by \emph{freezing the score at time $t$}. However, if $\rho_t$ denotes the time marginal distribution of \eqref{eq:langevin_multi_step}, then $\rho_t$ must satisfy the Fokker--Planck equation
$$
\partial_t\rho_t=L_t^*\rho_t \eqsp.
$$
If one imposed $\rho_t=p_t$, the previous identity would give
$$
\partial_t p_t=L_t^*p_t=0 \eqsp,
$$
which is incompatible with a nonconstant density path. Hence, the process driven only by the instantaneous score $\nabla\log p_t$ as \eqref{eq:langevin_multi_step} does not realize $p_t$ as its marginal law. The point is that \eqref{eq:langevin_multi_step} describes local equilibration around a
frozen density; it does not specify how mass should move from $p_t$ to
$p_{t+\rmd t}$. In other words, a family of densities $(p_t)_{t\in[0,T]}$ specifies the desired distribution at each time, but it does not determine a stochastic process on path space.

From this viewpoint, annealed Langevin dynamics is best understood as a sequence of partial equilibration steps at successive noise levels rather than a transport of measure on path space (see Figure \ref{fig:ald-vs-reverse-sde}). This motivates passing from a marginal-level interpolation to a process-level formulation. 

\begin{figure}[t]
    \centering
    \includegraphics[width=\textwidth]{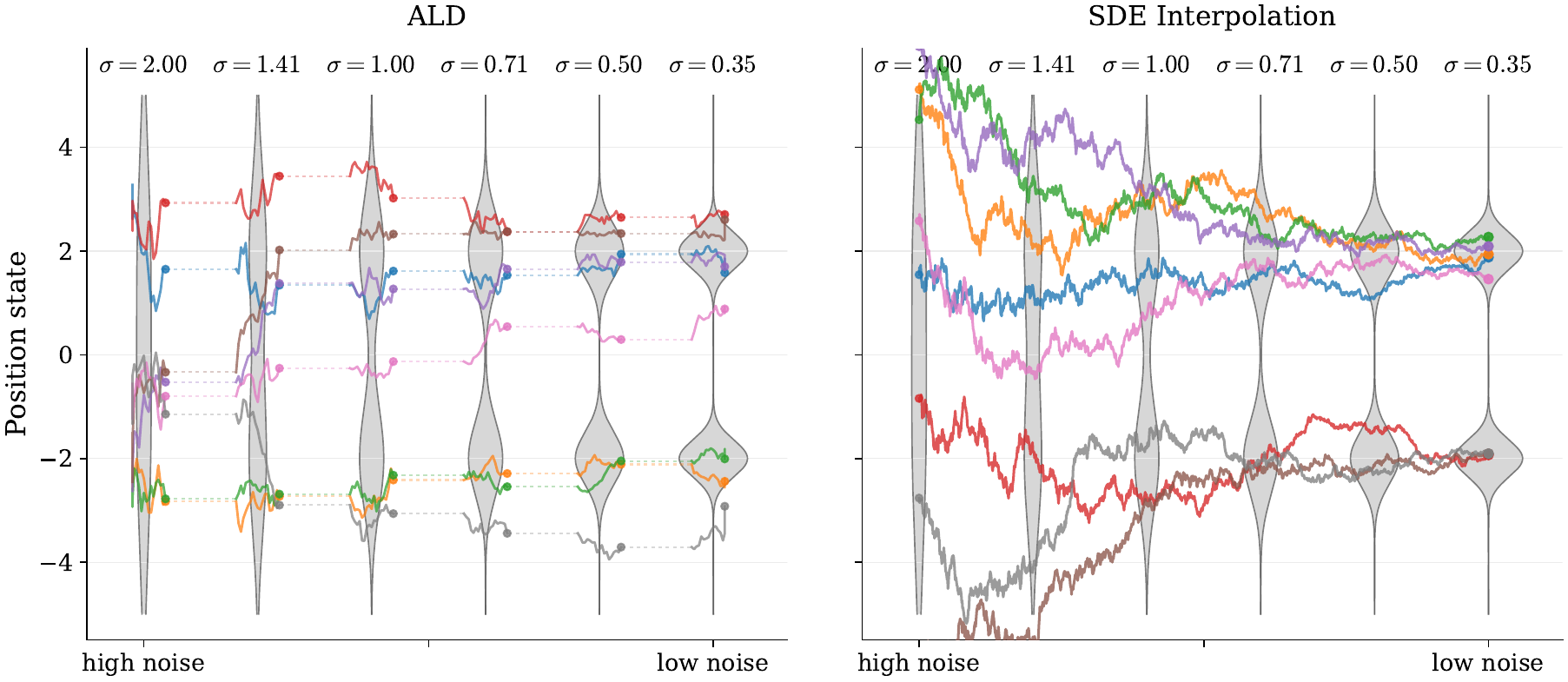}
    \caption{
Annealed Langevin Dynamics versus diffusion interpolation on the one-dimensional two-cluster example ($a=2$). Both panels show the
same smoothed densities $p_{\sigma(t)}$ as grey slices. Left: annealed
Langevin dynamics applies local Langevin moves at fixed noise levels
$\sigma_i$, treating each $p_{\sigma_i}$ as a frozen target. Right: the
SDE formulation gives a continuous stochastic dynamics from high noise
to low noise. 
}
\label{fig:ald-vs-reverse-sde}
\end{figure}
%%%% a checker maybe
% \url{https://arxiv.org/pdf/2511.10406}

%\textcolor{\red}{Langevin and score models}

\section{The time-reversal perspective}

We now pass from marginal smoothing to path-space dynamics. The previous section showed that a family of smoothed densities can be used as training targets for score estimation, but that it does not by itself specify how particles should move across time. To obtain a generative dynamics, we introduce a forward noising process, initialized at the data distribution, and then study its time reversal. This viewpoint turns generative modeling into a problem of transporting probability measures along a stochastic dynamics: the forward process transports data toward a tractable high-noise distribution, while the reverse process transports samples back toward the data distribution.

\subsection{Forward noising processes}

In this section, we first describe a class of Gaussian noising mechanisms whose one-time marginals interpolate between the data distribution and a
high-noise distribution. We then explain how such marginal paths can be realized as the marginals of Markov diffusion processes $(\ora X_t)_{t\in[0,T]}$.

\paragraph{Affine Gaussian smoothing.}

We first describe Gaussian corruptions at the level of conditional laws. This is slightly more general than the scalar noising mechanisms used in most
SGMs, and it makes clear why Gaussian corruptions are
convenient.
Let $X_0\sim \pi_{\rm data}$ be a data variable in $\mathbb R^d$. An affine
Gaussian corruption is specified by deterministic maps
$$
A_t \in \mathbb R^{d\times d},
\qquad
b_t \in \mathbb R^d,
\qquad
\Sigma_t = \Gamma_t\Gamma_t^\top \in \mathbb R^{d \times d},
$$
with $\Sigma_t$ symmetric nonnegative definite. We define a corrupted variable
$X_t\in \mathbb R^d$ as
$$
 X_t = A_t X_0 + b_t + \Gamma_t Z \eqsp,
\qquad Z \sim \mathcal N(0,\Id_d) \eqsp, \qquad Z \perp\!\!\!\perp X_0 \eqsp,
$$
so that,
$$
X_t\mid X_0=x_0 \sim \mathcal N(A_t x_0+b_t,\Sigma_t) \eqsp.
$$
Thus the conditional law is Gaussian with affine mean in the data point. This is the crucial property. The transition kernel is explicit and when
$\Sigma_t$ is positive definite its conditional score is available in closed
form (see discussion in Section \ref{sec:denoising_score_matching}),
\begin{equation} \label{eq:gaussian_smoothing_score}
\nabla_x\log p_{t|0}(x\mid x_0) = -\Sigma_t^{-1}(x-A_tx_0-b_t) \eqsp.
\end{equation}

Hence affine Gaussian corruptions fit exactly the conditional score matching
identity. The intractable marginal score can be learned by regression against an explicit conditional score.

\paragraph{Scalar affine smoothing.}

Most standard SGMs use the scalar isotropic case
$$
A_t=m_t I_d \eqsp,  \qquad b_t=0 \eqsp, \qquad \Sigma_t=\sigma_t^2 I_d \eqsp.
$$
Equivalently,
\begin{equation}\label{eq:affine-smoothing-coupling}
X_t = m_t X_0+\sigma_t Z \eqsp , \qquad Z \sim \mathcal N(0,I_d),
\qquad Z\perp\!\!\!\perp X_0 \eqsp,
\end{equation}
where $m_0=1$ and $\sigma_0=0$. Conditionally on $X_0=x_0$,
$$
\bar X_t\mid X_0=x_0 \sim \mathcal N(m_t x_0,\sigma_t^2 I_d) \eqsp,
$$
and therefore
$$
\nabla_x\log p_{t|0}(x\mid x_0) = -\frac{x-m_t x_0}{\sigma_t^2} \eqsp.
$$
The marginal law is
$$
\pi_t = (x\mapsto m_t x)_\#\pi_{\rm data} * \mathcal N(0,\sigma_t^2 I_d) \eqsp,
$$
where $(x\mapsto m_tx)_\#\pi_{\rm data}$ is the pushforward of
$\pi_{\rm data}$ by the scaling map $x \mapsto m_tx$, i.e. the law of
$m_tX_0$ when $X_0 \sim \pi_{\rm data}$, and $*$ denotes convolution of
probability measures. Thus $\pi_t$ is obtained by scaling the data distribution and convolving it
with a centered Gaussian. As soon as $\sigma_t > 0$, the law $\pi_t$ is
absolutely continuous with respect to Lebesgue measure and admits a smooth
density $p_t$. In this parametrization, $m_t$ controls the remaining signal amplitude at time $t$, while $\sigma_t$ controls the injected noise. Hence, at time $t=0$ we set $m_0=1$ and $\sigma_0=0$. This scalar affine form covers several standard marginal paths for $t \in (0,T]$.
\begin{itemize}
  \item \textbf{Variance-Exploding (VE)} \citep{song2019generative,song2021scorebased}.
  The data are kept at full scale while the noise level increases:
  $$
  m_t \eqdef 1,
  \qquad
  \sigma_t  \quad \text{nondecreasing} \eqsp.
  $$

  \item \textbf{Variance-Preserving (VP)} \citep{dicksteinthermo,ho2020denoising,song2021scorebased}.
  The signal is progressively attenuated while the overall variance is
  controlled:
  $$
  m_t \eqdef \alpha_t, \qquad \sigma_t \eqdef \sqrt{1-\alpha_t^2}, \qquad \alpha_T \approx 0 \eqsp.
  $$

  \item \textbf{Flow-matching} \citep{lipman2023flowmatching,liu2023rectifiedflow, albergo2023building}. The data and Gaussian noise is linearly interpolated
  $$
  m_t=1-t,
  \qquad
  \sigma_t=t \eqsp, \qquad t \in [0,1] \eqsp.
  $$
  as in flow matching and rectified flow.
\end{itemize}

\paragraph{Extended phase-space Gaussian smoothing.}

The direct affine smoothing is not the only useful Gaussian corruption mechanism for SGMs. One may also introduce auxiliary variables and work on an extended state space. For instance, let
$
U_t=(X_t,V_t)\in\R^{2d} \eqsp,
$
where $X_t\in\R^d$ is the position variable and $V_t\in\R^d$ is an auxiliary
variable. When $V_t$ plays the role of an auxiliary velocity or momentum variable, the extended state $U_t=(X_t,V_t)$ can be viewed as a phase-space variable, in analogy with Hamiltonian mechanics \citep{arnold1989mathematical}. When the dynamics couples position and velocity, we refer to such constructions as \emph{kinetic} models. Such phase-space constructions appear, for
instance, in \emph{Critically-Damped Langevin diffusion} (CLD) models
\citep{dockhorn2022score,conforti_kl, strasman2025wasserstein}. A kinetic affine Gaussian corruption is of the form
$$
U_t = A_t U_0 + b_t+ \Gamma_t Z \eqsp, \qquad Z \sim \mathcal N(0,\Id_{2d}) \eqsp,
\qquad Z\perp\!\!\!\perp U_0 \eqsp ,
$$
with $A_t\in\mathbb R^{2d\times 2d}$, $C_t \eqdef \Gamma_t\Gamma_t^\top\in\mathbb R^{2d\times 2d}$, and $b_t \in \R^{2d}$. If $C_t$ is positive definite, then the conditional law admits a density on $\mathbb R^{2d}$, as in \eqref{eq:gaussian_smoothing_score} but on the extended position-velocity phase space. If $C_t$ is singular, the conditional law remains Gaussian but is supported on a lower-dimensional affine subspace; in that case, it does not admit a classical density with respect to Lebesgue measure on $\R^{2d}$. The question of when a kinetic construction yields a nondegenerate positive-time covariance will be clarified below through the linear SDE realization point of view.

\paragraph{Forward process as solution to a stochastic differential equation (SDE).}
The coupling $A_t X_0 + b_t + \Gamma_t Z$ specifies the marginal law at each time, but not a unique path law. To define a process on path space, a natural way is to view affine Gaussian smoothing for fixed noise levels as the marginal of a linear diffusion processes.

\begin{proposition}[Linear SDEs]
\label{prop:linear_sde_affine_gaussian}
Let $\ora X_0 \sim \pi_{\rm data}$ be independent of a standard Brownian motion $(B_t)_{t\in[0,T]}$ in $\mathbb R^d$. Consider the linear SDE
\begin{equation}\label{eq:linear-sde-realization}
\rmd \ora X_t =(F_t \ora X_t +r_t ) \rmd t + G_t \rmd B_t \eqsp, \qquad \ora X_0 \sim \pi_{\rm data} \eqsp,
\end{equation}
where $F_t\in\mathbb R^{d\times d}$, $r_t\in\mathbb R^d$, and
$G_t\in\mathbb R^{d\times d}$ are deterministic continuous. Let $\Phi_{t,s}$ be the fundamental matrix associated with $F_t$ \citep[Chapter 2]{chicone2006ordinary}, namely the solution of
$$
\partial_t\Phi_{t,s}= F_t\Phi_{t,s} \eqsp,
\qquad
\Phi_{s,s}=\Id_d \eqsp.
$$
Then, conditionally on $\ora X_0=x_0$, the random variable $\ora X_t$ is Gaussian:
$$
\ora X_t\mid \ora X_0=x_0 \sim \mathcal N(M_t x_0+c_t,\Sigma_t) \eqsp,
$$
where $M_t\eqdef \Phi_{t,0}$, $c_t\eqdef \int_0^t \Phi_{t,u}r_u \rmd u$, and
\begin{equation} \label{eq:covariance_forward_linear_SDE}
\Sigma_t \eqdef \int_0^t \Phi_{t,u}G_uG_u^\top\Phi_{t,u}^\top \rmd u \eqsp.
\end{equation}
If $\Sigma_t$ is positive definite, then the conditional density exists and its score is given by
$$
\nabla_x\log p_{t|0}(x\mid x_0) = - \Sigma_t^{-1}(x-M_t x_0-c_t) \eqsp.
$$
\end{proposition}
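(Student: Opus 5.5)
The plan is to solve the linear SDE \eqref{eq:linear-sde-realization} explicitly by variation of constants, and then read off the conditional law from the solution formula.

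\textbf{Step 1: explicit solution.} The fundamental matrix satisfies the flow property $\Phi_{t,u}\Phi_{u,s}=\Phi_{t,s}$, and each $\Phi_{t,s}$ is invertible with $\Phi_{t,s}^{-1}=\Phi_{s,t}$. Both facts follow from uniqueness for linear ODEs with continuous coefficients. Consider $Y_t\eqdef \Phi_{0,t}\ora X_t$. Differentiating $\Phi_{0,t}\Phi_{t,0}=\Id_d$ gives $\partial_t\Phi_{0,t}=-\Phi_{0,t}F_t$. Since $\Phi_{0,t}$ is deterministic and $C^1$, It\^o's product rule has no bracket term, and
$$
\rmd Y_t=\Phi_{0,t}\big(r_t\,\rmd t+G_t\,\rmd B_t\big) \eqsp.
$$
Integrating from $0$ to $t$ and multiplying by $\Phi_{t,0}$ yields
$$
\ora X_t=\Phi_{t,0}\ora X_0+\int_0^t\Phi_{t,u}r_u\,\rmd u+\int_0^t\Phi_{t,u}G_u\,\rmd B_u \eqsp.
$$
This process is a (pathwise unique) strong solution, because the coefficients are affine in the state with continuous bounded time dependence on $[0,T]$, so they are globally Lipschitz.

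\textbf{Step 2: conditional Gaussianity.} The stochastic integral $I_t\eqdef\int_0^t\Phi_{t,u}G_u\,\rmd B_u$ has a deterministic, square-integrable integrand. It is therefore a centered Gaussian vector, as an $L^2$-limit of Riemann sums of Gaussian increments. By the It\^o isometry its covariance is
$$
\int_0^t\Phi_{t,u}G_uG_u^\top\Phi_{t,u}^\top\,\rmd u=\Sigma_t \eqsp,
$$
which is \eqref{eq:covariance_forward_linear_SDE}. The vector $I_t$ is measurable with respect to the Brownian motion, which is independent of $\ora X_0$. Hence, conditionally on $\ora X_0=x_0$, the variable $\ora X_t$ is the deterministic vector $M_tx_0+c_t$ plus an independent $\mathcal N(0,\Sigma_t)$ variable. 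Formally, one uses the freezing lemma for independent variables, or equivalently solves the SDE started from the deterministic point $x_0$. This gives the announced conditional law.

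\textbf{Step 3: score.} When $\Sigma_t\succ0$, the conditional law has the density
$$
p_{t|0}(x\mid x_0)\propto\exp\!\big(-\tfrac12(x-\mu)^\top\Sigma_t^{-1}(x-\mu)\big), \qquad \mu=M_tx_0+c_t \eqsp.
$$
Differentiating in $x$, using the symmetry of $\Sigma_t^{-1}$, gives $-\Sigma_t^{-1}(x-\mu)$. This matches \eqref{eq:gaussian_smoothing_score} with $A_t=M_t$ and $b_t=c_t$.

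The only step requiring some care is Step 2: rigorously identifying the conditional law given $\ora X_0$, and the Gaussianity of Wiener integrals with deterministic integrands. I would handle the latter by computing the characteristic function $\E\exp(i\langle\lambda,I_t\rangle)=\exp(-\tfrac12\lambda^\top\Sigma_t\lambda)$. To do so, apply It\^o's formula to the exponential martingale associated with the deterministic integrand $u\mapsto G_u^\top\Phi_{t,u}^\top\lambda$. The remaining steps are routine calculus.
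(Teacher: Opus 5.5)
Your proposal is correct and follows essentially the same variation-of-constants argument as the paper. The paper fixes $t$ and applies It\^o's formula to $\Phi_{t,u}\ora X_u$ for $u\in[0,t]$, which is your $\Phi_{0,u}\ora X_u$ premultiplied by the constant matrix $\Phi_{t,0}$, and then identifies the Gaussian conditional law through It\^o's isometry; the extra details you supply (Gaussianity of the Wiener integral, its independence from $\ora X_0$, and the explicit score computation) are all sound.
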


\begin{proof}
Fix $t\in[0,T]$. We first record the elementary identity \citep[Proposition 2.14]{chicone2006ordinary} 
$$
\partial_u \Phi_{t,u}=-\Phi_{t,u}F_u \eqsp,
\qquad 0\le u\le t \eqsp.
$$
Now, for $u\in[0,t]$, consider the process $Y_u\eqdef \Phi_{t,u}\ora X_u$. Since $u\mapsto \Phi_{t,u}$ is deterministic and continuously differentiable (since $F_t$ is continuous), It\^o's lemma yields
\begin{align*}
\rmd Y_u
&= - \Phi_{t,u} F_u \ora X_u \rmd u + \Phi_{t,u}(F_u\ora X_u+r_u) \rmd u + \Phi_{t,u}G_u \rmd B_u \\
&= \Phi_{t,u} r_u \rmd u + \Phi_{t,u} G_u \rmd B_u \eqsp.  
\end{align*}
Integrating from $0$ to $t$, and using $\Phi_{t,t}=\Id_d$, yields
$$
\ora X_t
= \Phi_{t,0} \ora X_0 + \int_0^t \Phi_{t,u}r_u \rmd u + \int_0^t \Phi_{t,u}G_u \rmd B_u \eqsp.
$$
Conditionally on $\ora X_0=x_0$, the first two terms are deterministic. The last term is a centered Gaussian random vector, its covariance matrix is, by It\^o's
isometry,
$$
\int_0^t \Phi_{t,u} G_u G_u^\top \Phi_{t,u}^\top \rmd u \eqsp.
$$
\end{proof}

\paragraph{Variance-Preserving and Variance-Exploding SDEs.}

As a direct consequence \citep{song2021scorebased} is that the VE and VP noising processes are well described on path space by a diffusion process. Indeed, take
$$
F_t=-\alpha\beta(t) \Id_d \eqsp, \qquad r_t=0 \eqsp, \qquad G_t = \sqrt{2\beta(t)} \Id_d \eqsp,
$$
with $\alpha \ge 0$ and $\beta : [0,T] \to \R_{\ge 0}$ be a continuous noise schedule. Then, \eqref{eq:linear-sde-realization} becomes
\begin{equation} \label{eq:vp_ve_sde}
\rmd \ora X_t = - \alpha \beta(t) \ora X_t \rmd t + \sqrt{2\beta(t)} \rmd B_t \eqsp, \qquad \ora X_0 \sim \pi_{\rm data} \eqsp.
\end{equation}
Writing $\tau_t\eqdef \int_0^t \beta(s) \rmd s$ the fundamental matrix is
$$
\Phi_{t,s} = \exp \left\{ -\alpha \int_s^t \beta(u) \rmd u \right\} \Id_d \eqsp. 
$$
Therefore
$$
\ora X_t\mid \ora X_0=x_0 \sim \mathcal N \left( m_t x_0,\sigma_t^2 \Id_d \right) \eqsp,
$$
with $m_t=\exp(-\alpha\tau_t)$
and
$$
\sigma_t^2
=
\begin{cases}
2\tau_t \eqsp, & \alpha=0 \eqsp, \\[0.5em]
\dfrac{1-\exp(-2\alpha\tau_t)}{\alpha} \eqsp, & \alpha>0 \eqsp.
\end{cases}
$$
Thus $\alpha=0$ gives the Variance-Exploding case, corresponding to a
time-changed Brownian motion, while $\alpha>0$ is the Variance-Preserving case, corresponding to a time-inhomogeneous Ornstein--Uhlenbeck noising process. For precise equivalence between the SDE parametrization as written in \citet{song2021scorebased} and the above see Section C.1 in \citet{strasman2026forgetting}. 

%The function $\beta$ is the noise schedule. It controls the
%rate at which noise is injected into the forward process. The influence of this schedule on SGM generation error is studied in detail in \citet{strasman2025an}.

\begin{figure}[t]
    \centering
    \includegraphics[width=0.8\textwidth]{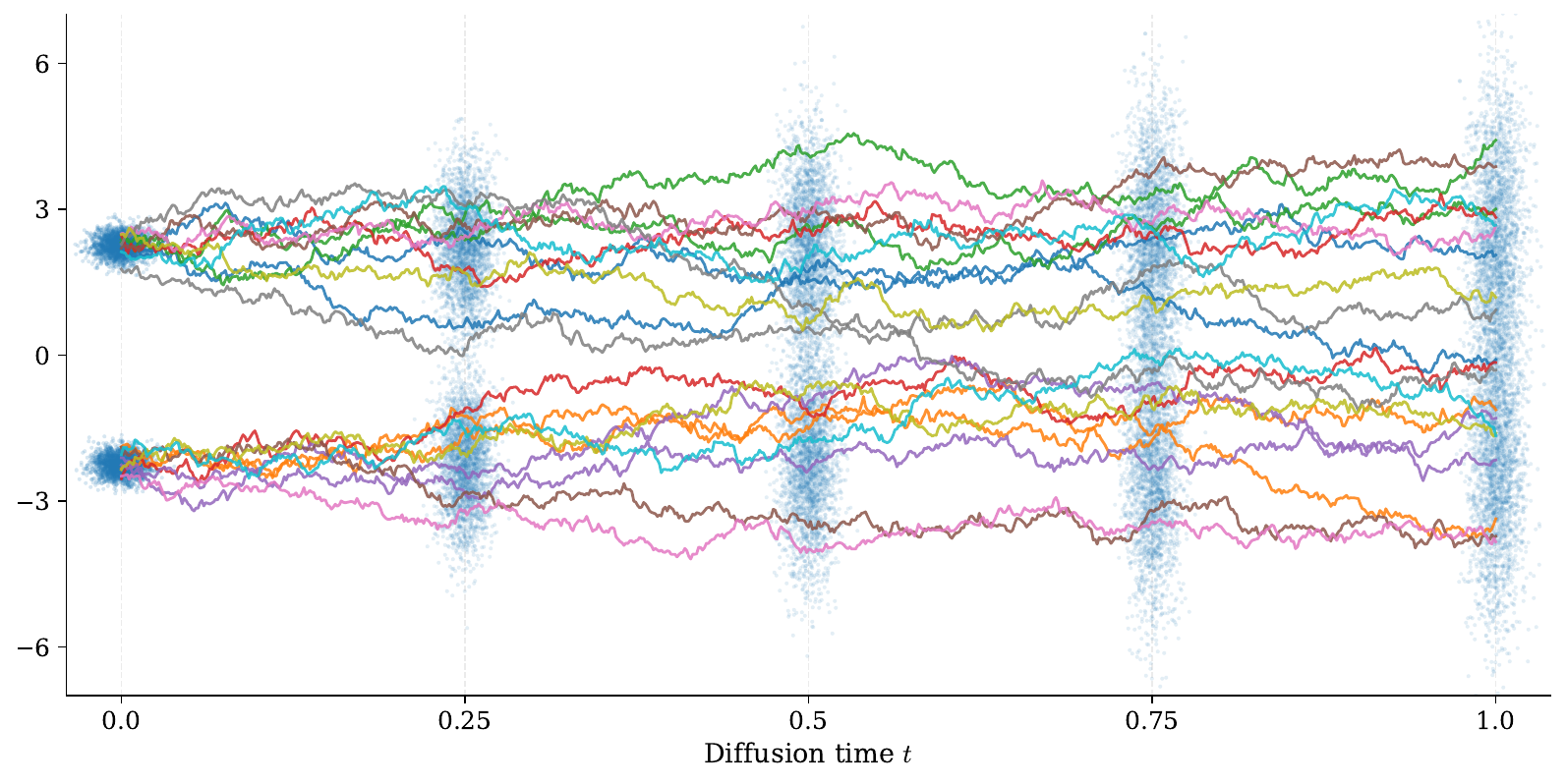}
    \caption{
    Gaussian smoothing marginals and a SDE path-space realization. At the frozen times $t \in \{ 0,0.25,0.5,0.75,1 \}$, the blue vertical clouds are samples from the affine Gaussian marginal
    $X_t\mid X_0=x_0\sim \mathcal N(x_0,2\tau_t)$.
    The colored curves are sample paths of the VE SDE
    $\ora X_t=X_0+B_{2\tau_t}$.
    The horizontal spread of the blue clouds is added only for visualization.
    }
    \label{fig:ve_marginals_with_paths}
\end{figure}

\paragraph{Degenerate noise and kinetic models.}

Proposition~\ref{prop:linear_sde_affine_gaussian} does not require the diffusion
matrix $G_t$ to be invertible. The matrix $G_tG_t^\top$ may be singular, in
which case Brownian noise is injected only in some directions. What matters for
the existence of a Lebesgue density, and hence of a classical conditional score,
is the accumulated covariance $\Sigma_t$ appearing in
\eqref{eq:covariance_forward_linear_SDE}. If $\Sigma_t$ is positive definite
for $t>0$, then the conditional law is a nondegenerate Gaussian law, even
though the instantaneous noise is degenerate.

This situation appears naturally in kinetic or phase-space models, where the
state space is enlarged from \(\mathbb R^d\) to \(\mathbb R^{2d}\). Let
$
\ora U_t=(\ora X_t,\ora V_t)\in\mathbb R^{2d}
$
denote a position--velocity process and consider the linear SDE
\begin{equation} \label{eq:kinetic_sde}
\rmd \ora U_t
=
A\ora U_t\,\rmd t
+
\Gamma \rmd B_t,
\qquad
\ora U_0\sim \pi_{\rm data}\otimes \pi_v,
\end{equation}
where $A \in \R^{2d \times 2d}$, $(B_t)_{t \in [0,T]}$ is a $2d$-dimensional Brownian motion, $\pi_v$ is a chosen distribution independent of $\pi_{\rm data}$, and
$$
\Gamma
=
\begin{pmatrix}
0 & 0\\
0 & \sigma
\end{pmatrix}
\otimes \Id_d \eqsp.
$$
Here the diffusion matrix is singular and so noise is injected only in the velocity variable. Nevertheless, if the drift
matrix $A$ couples position and velocity in a sufficiently nondegenerate way,
then this noise is propagated from velocity to position. In the linear
time-homogeneous case \eqref{eq:kinetic_sde}, the corresponding structural condition to obtain $\Sigma_t$ positive definite is the Kalman rank
condition \citep{metafune2002spectrum,ottobre2015degenerate}
$$
\operatorname{rank} \left[
\Gamma,\ A\Gamma,\ A^2\Gamma,\ldots,A^{2d-1}\Gamma
\right] = 2d \eqsp.
$$
Under this condition, the accumulated covariance is positive definite for every
positive time. The process is then said to be \emph{hypoelliptic} rather than elliptic \citep{hormander1967hypoelliptic}.

\begin{remark}
    In the above, we discussed the Gaussianity of $\ora U_t | \ora U_0$ but in fact if $\ora V_0$ is Gaussian too, independent of $\ora X_0$, then $\ora U_t | \ora X_0$ is also Gaussian. This remark induces important design choices for score matching in kinetic models \citep{dockhorn2022score,strasman2025wasserstein}. 
\end{remark}

\subsection{The time-reversed backward process}

We now turn to the central probabilistic mechanism behind SGMs. The purpose of the forward noising process is to transform the data distribution into a simpler high-noise distribution. In the previous section, we saw that affine Gaussian smoothing can be realized, at the level of path space, by a linear diffusion process (see
Proposition~\ref{prop:linear_sde_affine_gaussian}). The goal of sampling, however, is to run the noising process backward. In fact, starting
from the terminal law of the forward process, one can construct a reverse-time dynamics whose marginals return to the data distribution. The key fact is that, under suitable regularity assumptions, the time reversal of a diffusion is again
a diffusion.

\paragraph{Time reversal of Markov Processes.}

Before identifying the reverse dynamics as a diffusion, let us recall a more elementary fact: the time reversal of a Markov process over a finite time horizon is again Markov. Indeed, the Markov property may be expressed by saying that, conditionally on the present state, the past and the future are independent. This formulation is symmetric with respect to the two time
directions; see, for instance, \citet{doob1953stochastic,dynkin1961markov, meyer1967processus, nagasawa1964time, chung1969reverse} and
the discussion of time symmetries in \citet{leonard2014reciprocal}. 
%A classical reference on time reversal of Markov processes is \citet{nagasawa1964time}. 
Indeed, let $(\ora X_t)_{t\in[0,T]}$ be a Markov process and define the reversed process by
$$
\ola X_t\eqdef \ora X_{T-t} \eqsp.
$$
For $s\in[0,T]$, note $\sigma \left( (\ola X_t)_{t \in [0,s]} \right)= \sigma \left( (\ora X_t)_{t \in [T-s,T]} \right)$ and similarly,
$$ 
\sigma \left( (\ola X_t)_{t \in [s,T]} \right)= \sigma \left( (\ora X_t)_{t \in [0,T-s]} \right) \eqsp .
$$
Since $(\ora X_t)_{t \in [0,T]}$ is Markov, the sigma-fields
$\sigma ( (\ora X_t)_{t \in [0,T-s]} )$ and $\sigma ( (\ora X_t)_{t \in [T-s,T]} )$ are conditionally
independent given $\ora X_{T-s} = \ola X_s$. Hence the past and future of $(\ola X_t)_{t\in[0,T]}$ are
conditionally independent given $\ola X_s$, which is the Markov property for the reversed process.

This explains why the reversed process is Markov. However, being Markov is weaker than being a diffusion. A diffusion process is a continuous-path Markov process whose infinitesimal generator is a second-order differential operator, equivalently, under suitable assumptions, a weak solution of a SDE. The question relevant to Score-based Generative Modeling is therefore stronger: if the forward process is a known diffusion, is the reversed Markov process again a diffusion, and what are its drift and diffusion coefficients?

\paragraph{Time reversal of diffusion processes.} The problem of time reversal for diffusion processes has a long history, motivated in part by questions from statistical mechanics and stochastic
dynamics; see, for example, \citet{schrodinger1931umkehrung,nelson1967dynamical}. In the diffusion
setting, an early influential reference is \citet{anderson1982}, who
derived reverse-time diffusion equations using forward and backward Kolmogorov
equations. A rigorous PDE treatment and refinement was given by \citet{haussmann1986time}. Related integration-by-parts and
Malliavin-calculus approaches were developed by
\citet{millet1989integration}, while entropy and path-space methods appear in
\citet{follmer1985entropy, cattiaux}. Infinite-dimensional extensions were studied, among others, by \citet{follmer1986time} and \citet{millet1989time}. The relevance of reverse-time diffusions for SGMs was emphasized by
\citet{song2021scorebased}.

In the machine-learning literature, the time-reversal result is often introduced by matching the marginal densities through the Fokker--Planck equation of the forward diffusion. This is a useful way to identify the candidate reverse SDE and gives the correct intuition. However, equality of one-time marginals is not a path-space statement, as it does
not by itself identify the law of the process on $C([0,T],\mathbb R^d)$. To
identify the actual time-reversed process $(\ola X_{t})_{t\in[0,T]}$
one needs an argument at the level of paths. The martingale-problem formulation provides such an argument: it characterizes the reversed process by testing its generator against smooth functions and proving the associated martingale property \citep[Chapter 4]{ethier1986markov}.

The result below is a simplified version of the classical time-reversal theorem of \citet{haussmann1986time}, specialized to deterministic diffusion matrices. The proof sketch is included to highlight the main mechanisms relevant to SGMs.

\begin{proposition} \label{prop:time_reversal_deterministic_diffusion}
Let the forward process $(\ora X_t)_{t\in[0,T]}$ admit a unique strong solution and solve
$$
\rmd \ora X_t = b(t,\ora X_t) \rmd t + G_t \rmd B_t \eqsp,
$$
where \(G_t\in\mathbb R^{d\times d}\) is deterministic, possibly singular, and set $a_t\eqdef G_tG_t^\top$.

Assume that the following regularity assumptions
hold:
\begin{enumerate}[label=(\roman*)]
    \item For every $t\in(0,T]$,  $\ora X_t$ admits a strictly positive density $p_t\in C^{\infty}(\mathbb R^d)$.

    \item For every $0\le u<r \leq T$ and every
    $\varphi\in C_c^\infty(\mathbb R^d)$, the function
    $$
    h(v,x) = \mathbb E\left[\varphi(\ora X_r^{v,x})\right] \eqsp,
    \qquad v\in[u,r] \eqsp,
    $$
    belongs to $C^{1,2}([u,r)\times\mathbb R^d)$,
satisfies $h(r,\cdot)=\varphi$, extends continuously to $v=r$, and solves the backward Kolmogorov equation.
\end{enumerate}
Then, for $0\le t<T$, the time-reversed process $\ola X_t\eqdef \ora X_{T-t}$ solves the martingale problem, for $f\in C_c^\infty(\mathbb R^d)$, associated with
\begin{equation} \label{eq:candidate_back_gen}
\ola L_t f(x)
=
\left\langle -b(T-t,x) + a_{T-t}\nabla\log p_{T-t}(x), \nabla f(x) \right\rangle
+
\tfrac12 \operatorname{Tr} \left( a_{T-t}\nabla^2 f(x) \right) \eqsp.
\end{equation}
Consequently, the reversed process can be represented weakly as
$$
\rmd \ola X_t
= \left[ -b(T-t,\ola X_t) + a_{T-t}\nabla\log p_{T-t}(\ola X_t) \right]\rmd t + G_{T-t} \rmd \ola B_t \eqsp.
$$
where $(\ola B_t)_{t\in[0,T]}$ is a Brownian motion in $\R^d$ driving the reverse
dynamics.
\end{proposition}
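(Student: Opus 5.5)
The plan is the Haussmann--Pardoux duality argument, phrased through the martingale problem. For $f\in C_c^\infty(\mathbb R^d)$, set
$$
M_t^f \eqdef f(\ola X_t)-f(\ola X_0)-\int_0^t \ola L_s f(\ola X_s)\,\rmd s \eqsp.
$$
We must show that $M^f$ is a martingale for the natural filtration of $\ola X$. By the Markov property of the reversed process established above, it suffices to check the following: for all $0\le s<t<T$ and every bounded test function $g$ of the form $g=\varphi(\ola X_s)=\varphi(\ora X_{T-s})$ with $\varphi\in C_c^\infty$,
$$
\E\big[(f(\ola X_t)-f(\ola X_s))\,\varphi(\ola X_s)\big] = \E\Big[\int_s^t \ola L_v f(\ola X_v)\,\rmd v\;\varphi(\ola X_s)\Big] \eqsp.
$$
Writing $u\eqdef T-t<r\eqdef T-s$, this identity involves only the forward pair $(\ora X_u,\ora X_r)$, together with the intermediate times. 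The strategy is to move $\varphi$ from the later forward time $r$ back to earlier times using the function $h$ of assumption (ii).

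\textbf{Step 1: condition on the earlier forward time.} By the Markov property, $\E[\varphi(\ora X_r)\mid \ora X_v]=h(v,\ora X_v)$ for $v\in[u,r]$. The left-hand side therefore equals
$$
\E[f(\ora X_u)h(u,\ora X_u)]-\E[f(\ora X_r)\varphi(\ora X_r)] \eqsp.
$$
Apply It\^o's formula to $v\mapsto f(\ora X_v)h(v,\ora X_v)$ on $[u,r]$, stopping slightly before $r$ and using the continuous extension of $h$ at $v=r$. Since $h$ solves the backward Kolmogorov equation $\partial_v h+L_v h=0$, where $L_v=\langle b(v,\cdot),\nabla\rangle+\tfrac12\operatorname{Tr}(a_v\nabla^2)$, the only surviving terms are
$$
(L_vf)\,h+\langle a_v\nabla f,\nabla h\rangle \eqsp,
$$
so the left-hand side equals
$$
-\int_u^r \E\big[(L_vf)h+\langle a_v\nabla f,\nabla h\rangle\big](v,\ora X_v)\,\rmd v \eqsp.
$$
The stochastic integral has zero mean because $f$ has compact support and $h$ is bounded.

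\textbf{Step 2: integrate by parts against the density.} Write each expectation as an integral against $p_v$. The cross term becomes
$$
\int\langle a_v\nabla f,\nabla h\rangle p_v=-\int h\,\operatorname{div}(p_v a_v\nabla f) \eqsp,
$$
with no boundary terms since $f\in C_c^\infty$. Expanding,
$$
\operatorname{div}(p_va_v\nabla f)=p_v\big(\operatorname{Tr}(a_v\nabla^2 f)+\langle a_v\nabla\log p_v,\nabla f\rangle\big) \eqsp,
$$
where we use that $a_v$ is symmetric and constant in $x$. Collecting terms, the integrand in Step 1 turns into
$$
\E\Big[h(v,\ora X_v)\big(\langle -b+a_v\nabla\log p_v,\nabla f\rangle+\tfrac12\operatorname{Tr}(a_v\nabla^2 f)\big)(\ora X_v)\Big]=\E\big[\varphi(\ora X_r)\,\ola L_{T-v}f(\ora X_v)\big] \eqsp.
$$
Here the sign works out because $-L_vf+\operatorname{Tr}(a_v\nabla^2f)+\langle a_v\nabla\log p_v,\nabla f\rangle$ equals exactly \eqref{eq:candidate_back_gen} evaluated at time $T-v$. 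The last equality removes $h$ again by the tower property. Changing variables $v\mapsto T-v$ gives the required identity.

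\textbf{Step 3: from the martingale problem to the weak SDE.} Once the martingale problem holds, the weak SDE representation follows from the standard martingale-problem/weak-solution correspondence \citep[Chapter 5]{ethier1986markov}. If $a_{T-t}$ is singular, one enlarges the probability space with an auxiliary Brownian motion. This yields $\ola B$ with $\rmd\ola X_t=\ola b\,\rmd t+G_{T-t}\rmd\ola B_t$.

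\textbf{Main obstacle.} The delicate point is justifying Step 1 near the endpoint $v=r$, where $h$ is only continuous rather than $C^{1,2}$, and making the integration by parts in Step 2 legitimate. The latter requires $\int_u^r\!\int|a_v\nabla p_v|\,\rmd x\,\rmd v$ to be finite on the support of $f$, which is local integrability of $\nabla\log p_v$ under $p_v$. I would first carry out the argument on $[u,r-\epsilon]$ and then let $\epsilon\downarrow0$ using continuity of $h$ and dominated convergence. Smoothness and positivity of $p_v$ from assumption (i) give local boundedness of $\nabla\log p_v$ on compact sets for $v>0$. The case $u=0$ (that is, $t\to T$) is excluded in the statement and handled by this same restriction.
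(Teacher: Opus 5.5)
Your proposal is correct and follows essentially the same route as the paper's sketch: the same duality function $h$, It\^o's product rule with the backward Kolmogorov equation, integration by parts of the cross term $\langle a_v\nabla f,\nabla h\rangle$ against $p_v$, reinsertion of $\varphi(\ora X_r)$ via the tower property, and the change of variables $v\mapsto T-v$. Your cut-off at $r-\epsilon$ and your mention of the Haussmann--Pardoux integrability condition go beyond the paper's sketch. One caveat: assumption (i) only gives local boundedness of $\nabla\log p_v$ for each fixed $v$, and says nothing about regularity in $v$, so it does not by itself yield the bound on $\int_u^r\!\int_K|a_v\nabla p_v|\,\rmd x\,\rmd v$ that you need.
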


\begin{remark}
The Brownian motion $(\ola B_t)_{t\in [0,T]}$ appearing in the reverse SDE should not be identified pathwise with the Brownian motion $(B_t)_{t\in [0,T]}$ driving the forward SDE.
It is a Brownian motion for the reversed dynamics. In generative modeling, however, one usually simulates a new solution of the reverse SDE rather than reversing a given forward trajectory. In this sense, the above result is only used in a weak sense, since only the distribution of the generated path is considered. It is therefore harmless to
denote the driving Brownian motion again by \((B_t)_{t\in[0,T]}\), provided no
pathwise identification with the Brownian motion of the forward process is
intended.
\end{remark}

\begin{remark}
The assumptions above are natural for the linear Gaussian processes introduced
in Proposition~\ref{prop:linear_sde_affine_gaussian}. Consider
\begin{equation} \label{eq:forward_SDE_rebelotte}
\rmd \ora X_t = (F_t \ora X_t + r_t) \rmd t + G_t \rmd B_t \eqsp,
\end{equation}
with deterministic continuous coefficients. For $0\le u<r\le T$ , recall that $\ora X_r\mid \ora X_u$ is Gaussian with covariance matrix
$$
\Sigma_{r|u} = \int_u^r \Phi_{r,v} G_v G_v^\top \Phi_{r,v}^\top \rmd v \eqsp .
$$
If $\Sigma_{r|u}\succ0 $  then the transition kernel associated with \eqref{eq:forward_SDE_rebelotte}  admits a smooth strictly positive Gaussian density and solves the backward Kolmogorov equation.
\end{remark}

\begin{proof}[Sketch of proof]
The candidate generator of the reversed process is given in \eqref{eq:candidate_back_gen}. For $f\in C_c^\infty(\mathbb R^d)$, we want to show that
$$
M_t^f \eqdef f(\ola X_t)-f(\ola X_0) - \int_0^t \ola L_s f(\ola X_s)\,\rmd s
$$
is a martingale with respect to the reversed filtration. Fix $0\le s\le t\le T$ and let $\mathcal G_t\eqdef \sigma(\ola X_r:0\le r\le t)$. To prove that
$M_t^f$ is a $(\mathcal G_t)$-martingale, it is enough to show that
$$
\mathbb E \left[M_t^f-M_s^f\mid \mathcal G_s\right]=0.
$$
Since the reversed process is Markov, the conditional expectation of the future
increment $M_t^f-M_s^f$ given $\mathcal G_s$ is a function of the present
state $\ola X_s$. Thus it suffices to prove, for every
$\varphi\in C_c^\infty(\mathbb R^d)$,
$$
\mathbb E \left[ \varphi(\ola X_s) \left(M_t^f-M_s^f\right) \right] =0 \eqsp.
$$
The restriction to smooth compactly supported \(\varphi\) is only for the proof
sketch; one then extends the identity to bounded measurable functions by a
monotone-class argument. Recall that the forward generator is given, for
$f\in C_c^\infty(\mathbb R^d)$, by
$$
L_r f(x) = \left\langle b(r,x),\nabla f(x)\right\rangle + \frac12 \operatorname{Tr} \left( a_r\nabla^2 f(x) \right) \eqsp.
$$
Set $\tau_s\eqdef T-s$ and $\tau_t\eqdef T-t$. Let $\ora X^{u,x}$ denote the solution of the forward SDE started from $x$
at time $u$. For $u\in[\tau_t,\tau_s]$, define
\begin{equation} \label{eq:h}
h(u,x)
\eqdef
\mathbb E\!\left[
\varphi(\ora X_{\tau_s}^{u,x})
\right] \eqsp.
\end{equation}
Under the assumed regularity assumption $h\in
C^{1,2}([\tau_t,\tau_s)\times\mathbb R^d)$, and $h$ solves the backward Kolmogorov
equation
\begin{equation}\label{eq:kolmogorov_back}
\partial_u h(u,x) + L_u\big(h(u,\cdot)\big)(x) = 0 \eqsp,
\qquad
h(\tau_s,\cdot) = \varphi \eqsp .
\end{equation}
Applying It\^o's product law to $f(\ora X_u)h(u,\ora X_u)$ yields
\begin{align*}
\rmd \left( f (\ora X_u ) h ( u,\ora X_u ) \right) & = h ( u, \ora X_u ) \rmd f ( \ora X_u ) + f ( \ora X_u ) \rmd h ( u,\ora X_u ) \\
 & \qquad \qquad + \left\langle a_u\nabla f ( \ora X_u ), \nabla h ( u,\ora X_u ) \right\rangle \rmd u \eqsp.
\end{align*}
Using \eqref{eq:kolmogorov_back}, $\rmd h(u, \ora X_u) = \langle \nabla h(u , \ora X_u ), G_u \rmd B_u \rangle$ and the above simplifies to 
\begin{align*}
 & \rmd \left( f ( \ora X_u ) h (u,\ora X_u ) \right)  = h ( u, \ora X_u ) L_u f ( \ora X_u ) \rmd u +  h ( u, \ora X_u ) \left\langle \nabla f (  \ora X_u ), G_u \rmd B_u \right\rangle \\
& \qquad \qquad \qquad  + f ( \ora X_u ) \left\langle \nabla h ( u , \ora X_u ), G_u \rmd B_u \right\rangle + \left\langle a_u \nabla f ( \ora X_u ) , \nabla h ( u , \ora X_u ) \right\rangle \rmd u  \eqsp.
\end{align*}
Taking expectation,
\begin{multline*}
\mathbb E \left[ f(\ora X_{\tau_s}) h(\tau_s,\ora X_{\tau_s}) \right] - \mathbb E \left[ f(\ora X_{\tau_t})h(\tau_t,\ora X_{\tau_t}) \right] \\
= \int_{\tau_t}^{\tau_s} \mathbb E \left[ h(r,\ora X_r)L_r f(\ora X_r) \right] \rmd r + \int_{\tau_t}^{\tau_s} \mathbb E \left[ \left\langle a_r\nabla f(\ora X_r), \nabla h(r,\ora X_r) \right\rangle \right] \rmd r \eqsp.
\end{multline*}
Using that $h(\tau_s, x) = \varphi(x)$ and using \eqref{eq:h},
\begin{align*}
  h (\tau_t, \ora X_{\tau_t}) 
  &= \E \left[ \varphi (\ora X_{\tau_s} ) \middle| \sigma \left( \ora X_{\tau_t} \right)  \right]  
\end{align*}
and therefore since $f( \ora X_{\tau_t})$ is $\sigma( \ora X_{\tau_t})$-mesurable,
$$
\E \left[ f( \ora X_{\tau_t}) h (\tau_t, \ora X_{\tau_t}) \right] = \E \left[ f( \ora X_{\tau_t})  \E \left[ \varphi (\ora X_{\tau_s} ) \middle| \sigma \left( \ora X_{\tau_t} \right)  \right]\right] = \E \left[ f( \ora X_{\tau_t})  \varphi (\ora X_{\tau_s} ) \right] \eqsp.
$$
Hence,
\begin{align*}
&\mathbb E \left[ \varphi(\ora X_{\tau_s}) \left( f(\ora X_{\tau_s})-f(\ora X_{\tau_t}) \right) \right] \\
&\quad= \int_{\tau_t}^{\tau_s} \mathbb E \left[ h(r,\ora X_r)L_r f(\ora X_r) \right] \rmd r + \int_{\tau_t}^{\tau_s} \mathbb E \left[ \left\langle a_r\nabla f(\ora X_r), \nabla h(r,\ora X_r) \right\rangle \right] \rmd r \eqsp.
\end{align*}
We now treat the second term by integration by parts. Since
$\ora X_r$ has density $p_r$ and $a_r$ does not depend on the space
variable, we have
\begin{align*}
\mathbb E \left[
\left\langle a_r \nabla f(\ora X_r), \nabla h(r,\ora X_r) \right\rangle \right]  
& = \int_{\mathbb R^d} \left\langle a_r\nabla f(x), \nabla h(r,x) \right\rangle p_r(x) \rmd x \\
& = -\int_{\mathbb R^d} h(r,x) \operatorname{div} \left( p_r(x)a_r\nabla f(x) \right) \rmd x \\
& = -\mathbb E \Bigg[ h(r,\ora X_r) \Big\{ \operatorname{Tr} \left(a_r\nabla^2 f(\ora X_r \right) \\
& \qquad \qquad+ \left\langle a_r \nabla \log p_r(\ora X_r), \nabla f(\ora X_r) \right\rangle \Big\} \Bigg] \eqsp.
\end{align*}
Therefore,
\begin{align*}
&\mathbb E \left[ \varphi(\ora X_{\tau_s}) \left( f(\ora X_{\tau_t})-f(\ora X_{\tau_s}) \right) \right] \\
& = \int_{\tau_t}^{\tau_s} \mathbb E \Bigg[ h(r,\ora X_r) \bigg\{ \left\langle -b(r,\ora X_r)
+ a_r\nabla\log p_r(\ora X_r), \nabla f(\ora X_r) \right\rangle \\
& \qquad \qquad \qquad \qquad \qquad \qquad \qquad \qquad \qquad \qquad \qquad \qquad  + \tfrac12 \operatorname{Tr} \left( a_r\nabla^2 f(\ora X_r) \right) \bigg\} \Bigg] \rmd r \\
& = \int_{\tau_t}^{\tau_s} \mathbb E \Bigg[ \varphi(\ora X_{\tau_s})
\bigg\{ \left\langle -b(r,\ora X_r) + a_r \nabla \log p_r(\ora X_r), \nabla f(\ora X_r)\right\rangle \\
& \qquad \qquad \qquad \qquad \qquad \qquad \qquad \qquad \qquad \qquad \qquad \qquad  + \tfrac12 \operatorname{Tr} \left( a_r\nabla^2 f(\ora X_r) \right) \bigg\} \Bigg] \rmd r \eqsp,
\end{align*}
where we used \eqref{eq:h} in the second equality. By change of variable $r = T-u$ and using \eqref{eq:candidate_back_gen}
$$
\mathbb E \left[ \varphi(\ola X_s) \left( f(\ola X_t)-f(\ola X_s) \right) \right]
= \mathbb E \left[ \varphi(\ola X_s) \int_s^t \ola L_u f(\ola X_u) \rmd u \right] \eqsp.
$$
It follows that, for any $f\in C_c^\infty(\mathbb R^d)$,
$$
\mathbb E \left[ \varphi(\ola X_s) \left( M_t^f - M_s^f \right) \right] =0 \eqsp,
$$
and since the identity holds for every $\varphi\in C_c^\infty(\mathbb R^d)$, a
standard monotone-class argument extends it to bounded measurable functions of
$\ola X_s$. Hence, the reversed process solves the martingale problem associate with $\ola L_t$ and therefore is a weak solution of
$$
\rmd \ola X_t = \left[ -b(T-t,\ola X_t) + a_{T-t}\nabla\log p_{T-t}(\ola X_t) \right]\rmd t + G_{T-t} \rmd \ola B_t \eqsp.
$$
\end{proof}

\paragraph{Time-reversal and score function.} 
As a consequence, in SGMs, the generative model is not obtained by merely choosing a curve of densities, as in annealing procedures. It is obtained by reversing the path law of a diffusion process. The score appears because reversing a diffusion requires an integration by parts against the forward marginal density $p_t$. In this sense, the term
$$
G_t G_t^\top \nabla\log p_t \eqsp,
$$
can be viewed as a stochastic correction of forward process. This correction has a natural geometric interpretation. The score points in the direction of steepest increase (adjusted by the covariance structure of the forward noise) of the log marginal density.

\begin{remark}[Brownian reversal versus frozen Langevin dynamics]
Let
$$
\rmd \ora X_t= \sqrt 2 \rmd B_t \eqsp.
$$
By the time-reversal formula, the reversed process satisfies
\begin{equation} \label{eq:reverse_heat_flow}
\rmd \ola X_t = 2 \nabla\log p_{T-t}(\ola X_t) \rmd t + \sqrt 2 \rmd B_t \eqsp .
\end{equation}
Therefore, this dynamics is formally close to the frozen Langevin dynamics
defined in \eqref{eq:langevin_multi_step} and given by
$$
\rmd Y_t =  \nabla\log p_{T-t}(Y_t) \rmd t + \sqrt 2 \rmd B_t \eqsp .
$$
Nevertheless, the factor $2$ in front of the score terms is essential. Indeed, by Fokker-Planck equation, the forward Brownian marginals solve $\partial_t p_t=\Delta p_t$, so that,
$$
\partial_t p_{T-t}= -\Delta p_{T-t} \eqsp.
$$
Now, for $c \ge 0$, consider a dynamics of the form
$$
\rmd Z_t = c \nabla\log p_{T-t}(Z_t) \rmd t + \sqrt{2} \rmd B_t \eqsp.
$$
Denote by $\rho_t$ the density of $Z_t$. Its Fokker--Planck equation is
$$
\partial_t\rho_t
= -\operatorname{div} \left( c \rho_t \nabla \log p_{T-t} \right) + \Delta\rho_t \eqsp.
$$
If $\rho_t=p_{T-t}$, then
$$
-\operatorname{div} \left( c p_{T-t} \nabla \log p_{T-t} \right) + \Delta p_{T-t} 
= -c \Delta p_{T-t} + \Delta p_{T-t}
= (1-c) \Delta p_{T-t} \eqsp.
$$
Therefore, to recover the reversed heat flow \eqref{eq:reverse_heat_flow} one must choose $c=2$, while when $c=1$, the right-hand side is zero. This illustrates again the subtle distinction between local equilibration and time reversal. The drift $\nabla\log p_t$ makes $p_t$ invariant when time is frozen, whereas the drift $2\nabla\log p_{T-t}$ is the correction dictated by the forward Brownian path law and transports the marginals backward along the heat flow.

This is particularly visible in the two-cluster toy model \eqref{eq:pi_data_two_clusters}. Recall that
$$
\pi_{\rm data} \eqdef \frac12 \delta_{-a} + \frac12 \delta_{+a} \eqsp., \qquad a \in \R \eqsp.
$$
Under the Brownian noising process $\rmd \ora X_s=\sqrt2 \rmd B_s$ with $\ora X_0\sim\pi_{\rm data}$, the marginal law at time $s>0$ has density
$$
p_s(x) = \frac12\varphi_{2s}(x-a) + \frac12\varphi_{2s}(x+a) \eqsp,
$$
where \(\varphi_v\) denotes the density of \(\mathcal N(0,v)\). Its score is explicit:
$$
\partial_x\log p_s(x) = -\frac{x}{2s} + \frac{a}{2s} \tanh \left(\frac{ax}{2s}\right) \eqsp.
$$
We compare two reverse-time dynamics:
\begin{itemize}
    \item Time-reversed process:
    $$ 
    \rmd \ola X_t = 2\partial_x\log p_{T-t}(\ola X_t) \rmd t + \sqrt2 \rmd B_t \eqsp;
    $$
    \item Frozen Langevin:
    $$
    \rmd Y_t = \partial_x\log p_{T-t}(Y_t) \rmd t + \sqrt2 \rmd B_t \eqsp.
    $$
\end{itemize}
Both dynamics use the exact analytical score and are initialized at $p_T$ so there is no learning error, no initialization error and no approximation error (errors that will be formally defined in the next section). Since $p_0$ is a pair of Dirac masses and does not admit a Lebesgue density, we stop the reverse dynamics at a small positive time $s_{\min}>0$, and compare
the final laws at time $T-s_{\min}$ of both dynamics to $p_{s_{\min}}$. Results are shown in Figure \ref{fig:ve_two_cluster_final_density_comparison}
\begin{figure}[t]
    \centering
    \includegraphics[width=0.6\textwidth]{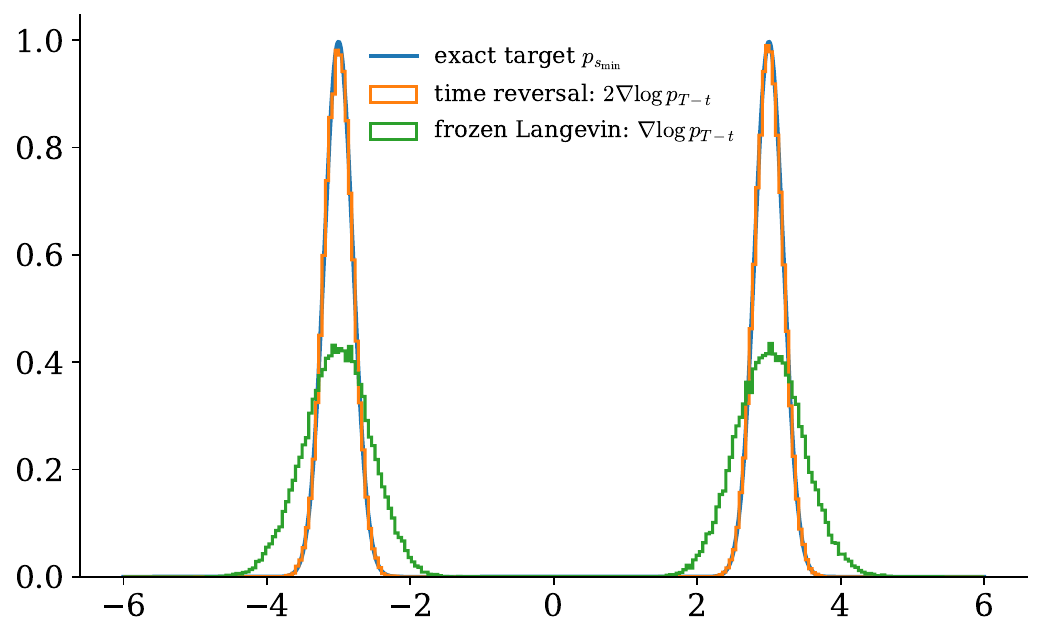}
    \caption{
    Final density comparison in the 1-$d$ VE two-cluster experiment. The blue curve is the exact target density $p_{s_{\min}}$. The orange curve is the empirical final law of the true Brownian time-reversal dynamics. The green curve is the empirical final law of the frozen Langevin dynamics.
    }
    \label{fig:ve_two_cluster_final_density_comparison}
\end{figure}

\end{remark}

\section{Score-based generative models}

\subsection{SGMs in practice}

\paragraph{Forward-backward dynamics.} The previous section showed that, once a forward noising process
\((\ora X_t)_{t\in[0,T]}\) has been specified, its reverse-time dynamics is
determined by the forward coefficients together with the scores of its time
marginals $\nabla\log p_t$. In the linear Gaussian case, denoising score matching provides a tractable way to learn this vector field from samples, by exploiting the explicit conditional scores of the forward
Gaussian corruption kernels. In particular, let $(\ora X_t)_{t\in[0,T]}$ be a forward noising process solution to
\begin{equation} \label{eq:forward_chap_SGM}
\rmd \ora X_t = b(t,\ora X_t)\,\rmd t + G_t \rmd B_t,
\qquad \ora X_0\sim \pi_{\rm data} \eqsp,
\end{equation}
where $G_t$ is deterministic. Denote by $\pi_t\eqdef \mathcal L(\ora X_t)$
its time marginals and let $p_t$ be its density. Suppose that $b(t,\ora X_t)$ is such that $\ora X_t | \ora X_0$ is Gaussian. By the time-reversal formula, $(\ola X_t)_{t \in [0,T]} \eqdef (\ora X_{T-t})_{t \in [0,T]} $
solves
\begin{equation}\label{eq:sgm-backward-exact}
\rmd \ola X_t = \left[ -b(T-t,\ola X_t) + a_{T-t} \nabla \log p_{T-t}(\ola X_t) \right]\rmd t + G_{T-t} \rmd  B_t, \quad \ola X_0\sim \pi_T \eqsp,
\end{equation}
with $a_{T-t} =G_{T-t} G_{T-t}^\top$.

\paragraph{Generative model.} The time-reversal formula gives the recipe of SGMs: choose a
forward process that transforms the data distribution into a simple
distribution, learn the intermediate score functions, and simulate the
corresponding reverse-time dynamics. In the idealized setting where the terminal
law of the forward process $\pi_T$ is sampled exactly, the score function is known exactly, and
the reverse SDE is simulated without time-discretization error, the final law of
the reverse process is precisely $\pi_{\rm data}$. In practice, each of these
idealizations is replaced by an approximation. The study of convergence and
stability of SGMs amounts to understanding how initialization error,
score approximation error, and time-discretization error propagate through the
reverse dynamics and affect the final generated distribution.

% For notational clarity, we focus on the two most common linear Gaussian noising processes: the variance-exploding case, corresponding to a time-changed Brownian motion, and the variance-preserving case, corresponding to a time-inhomogeneous Ornstein--Uhlenbeck process. Both can be written in the unified form
% \begin{equation}\label{eq:sgm-forward}
% \rmd \ora X_t
% =
% -\alpha\beta(t)\ora X_t\,\rmd t
% +
% \sqrt{2\beta(t)}\,\rmd B_t,
% \qquad
% \ora X_0\sim\pi_{\rm data},
% \end{equation}
% where $\alpha\ge 0$ and $\beta:[0,T]\to\mathbb R_+$ is a noise schedule. The
% case $\alpha=0$ gives the variance-exploding setting, while $\alpha>0$ gives an
% Ornstein--Uhlenbeck-type noising process. Its time reversed process $( \ola X_t)_{t \in [0,T]}$ is therefore solution to
% \begin{equation}\label{eq:sgm-backward-exact}
% \rmd \ola X_t
% =
% \bar\beta(t)
% \left(
% \alpha \ola X_t
% +
% 2 \nabla \log p_{T-t}(\ola X_t)
% \right)\rmd t
% +
% \sqrt{2\bar\beta(t)}\,\rmd \ola B_t,
% \qquad
% \ola X_0\sim \pi_T,
% \end{equation}
% Although we restrict to this class for simplicity, the same conceptual decomposition applies more generally to time-reversed diffusion-based generative models.

\paragraph{Semi-group notations.}
For $0\le s\le t\le T$, we introduce the time-dependent transition kernels associated with \eqref{eq:sgm-backward-exact} defined for $A \in \mathcal{B}(\R^d)$ and $x \in \R^d$ as
$$
Q_{s,t}(x,A)
\eqdef
\mathbb P\!\left(\ola X_t^{s,x}\in A\right),
$$
where $\ola X^{s,x}$ denotes the solution of \eqref{eq:sgm-backward-exact} started from $x$ at time $s$. Equivalently, for a bounded measurable test function $f$,
$$
Q_{s,t}f(x)
\eqdef
\mathbb E\!\left[f(\ola X_t^{s,x})\right].
$$
% With this notation, the time reversal
% identity writes as
% $$
% \pi_T Q_{0,T}=\pi_{\rm data} \eqsp.
% $$

\paragraph{Initialization or mixing error.}

The exact reverse dynamics is defined with initial law $\pi_T$. Indeed, the time-reversal identity reads as
$$
\pi_T Q_{0,T} = \pi_{\rm data} \eqsp.
$$
Thus, if one could sample exactly from $\pi_T$ and simulate the exact reverse dynamics, the final generated law would be the data distribution. However, $\pi_T$ is data-dependent, 
$$
p_T(x) = \int_{\mathbb R^d} q_{T|0} (x|x_0) \pi_{\rm data}(\rmd x_0) 
$$
where $q_{T|0}(\cdot | x_0)$ is the Gaussian transition density of the forward noising process \eqref{eq:forward_chap_SGM} from time $0$ to time $T$ (under the assumption that $b(t,x)$ is affine in $x$). Hence, drawing exactly from $\pi_T$ would require drawing
$X_0\sim\pi_{\rm data}$, whereas the whole point of generative models is to produce samples without access to data points. 
That is why, in practice, this initial distribution is replaced by a tractable reference distribution $\pi_\infty$. The nature of the reference law $\pi_\infty$ depends on the choice of the forward process. Let $Z\sim\mathcal{N} (0,\Id_d)$, $Z \indep \ora X_0$ and
$$
\tau_T \eqdef \int_0^T\beta(t) \rmd t \eqsp.
$$
The following examples illustrate different choices of $\pi_\infty$.
\begin{itemize}
    \item \textbf{Variance-exploding (VE)} (Figure \ref{fig:smoothing_ve}), $\alpha=0$, in \eqref{eq:vp_ve_sde}. Then
    $$
    \ora X_T \overset{\mathcal{L}}{=} \ora X_0+\sqrt{2\tau_T} Z 
     \eqsp.
    $$
    There is no invariant distribution in this case: the variance keeps increasing. One therefore uses a high-variance Gaussian
    reference law, typically
    $$\pi_\infty=\mathcal N(0,2\tau_T\Id_d) \eqsp.$$

    \item \textbf{Variance-preserving case (VP)} (Figure \ref{fig:smoothing_vp}), $\alpha >0 $, in \eqref{eq:vp_ve_sde}. Then
    $$
    \ora X_T \overset{\mathcal{L}}{=} \rme^{-\alpha\tau_T} \ora X_0 + \alpha^{-1/2} \sqrt{1-\rme^{-2\alpha\tau_T}} Z \eqsp,  \qquad
    \tau_T\eqdef\int_0^T\beta(t) \rmd t \eqsp.
    $$
    As $\tau_T\to\infty$, the dependence on $\ora X_0$ vanishes and $\pi_T$ converges to $\pi_\infty$. In this setting, the initialization error is genuinely a mixing or forgetting error. As the process is ergodic, one typically chooses the invariant distribution as a reference
    $$\pi_\infty=\mathcal N(0,\alpha^{-1}\Id_d) \eqsp.$$ 

    % \item \textbf{Kinetic models.}
    % In kinetic extensions, the forward noising process convergence will depend on the spectrum of the matrix drift. A common choice, is to consider the harmonic oscillator drift in \eqref{eq:kinetic_sde}
    % $$
    % A = \begin{pmatrix}
    %     0 & 1 \\ -1 & -2
    % \end{pmatrix} \otimes \Id_d \quad \text{and} \quad \Sigma = \begin{pmatrix}
    %     0 & 0 \\ 0 & 2
    % \end{pmatrix} \otimes \Id_d
    % $$
    % so that the process solution to \eqref{eq:kinetic_sde} is ergodic with stationary distribution 
    % $$\pi_{\infty} \sim \mathcal{N}(0_{2d \times 2d}, \Id_{2d}) \eqsp,$$
    % which is the natural reference distribution for such model.
    \item \textbf{Kinetic models} (Figures \ref{fig:smoothing_kinetic_x} and \ref{fig:smoothing_kinetic_v}).
    In kinetic extensions, the forward process evolves on position-velocity phase space. Let $(\ora U_t )_{t \in [0,T]}=(\ora X_t,\ora V_t)_{t \in [0,T]} \in\mathbb R^{2d}$ and consider the time-changed critically damped linear dynamics
    \begin{equation} \label{eq:kou_intro}
    \rmd \ora U_t
    = \beta(t)A\ora U_t \rmd t + \sqrt{\beta(t)} \Sigma \rmd B_t \eqsp,
    \qquad \ora U_0\sim \pi_{\rm data}\otimes \mathcal N(0, \sigma_v^2\Id_d) \eqsp,
    \end{equation}
    with
    $$
    A=
    \begin{pmatrix}
        0 & 1\\
        -1 & -2
    \end{pmatrix}
    \otimes \Id_d \eqsp,
    \qquad \Sigma=
    \begin{pmatrix}
        0 & 0\\
        0 & 2 \sigma_\infty
    \end{pmatrix}
    \otimes \Id_d \eqsp.
    $$
    Then,
    $$
    \ora U_T \overset{\mathcal{L}}{=} \rme^{\tau_T A}\ora U_0 + \sigma_\infty
    \left( \Id_{2d} - \rme^{\tau_T A}\rme^{\tau_T A^\top} \right)^{1/2}Z_{2d} \eqsp,
    $$
    where $Z_{2d}\sim\mathcal N(0,\Id_{2d})$ and $Z_{2d} \indep \ora U_0$. The matrix $A$ is stable, so as $\tau_T\to\infty$,
    $\pi_T$ converges to the Gaussian distribution on $\R^{2d}$
    $$
    \pi_\infty=\mathcal N(0,\sigma_\infty^2\Id_{2d}) \eqsp.
    $$
\end{itemize}

\begin{figure}[p]
    \centering

    \begin{subfigure}{0.94\textwidth}
        \centering
        \includegraphics[width=\textwidth]{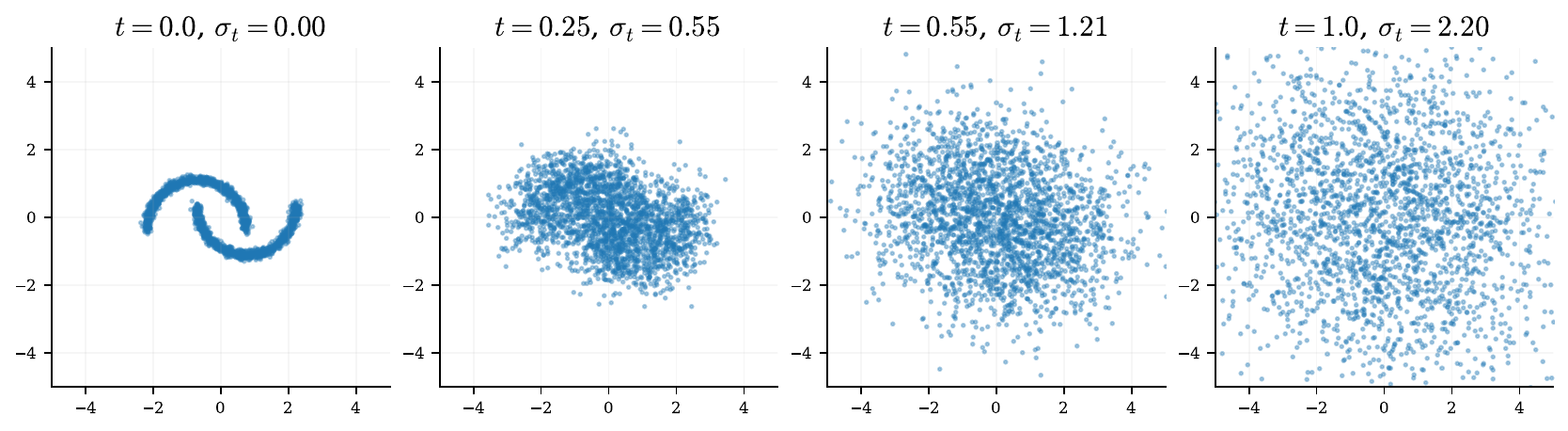}
        \caption{VE: $\ora X_t \overset{\mathcal L}{=} \ora X_0+\sigma_t Z$.}
        \label{fig:smoothing_ve}
    \end{subfigure}

    \vspace{0.6em}

    \begin{subfigure}{0.94\textwidth}
        \centering
        \includegraphics[width=\textwidth]{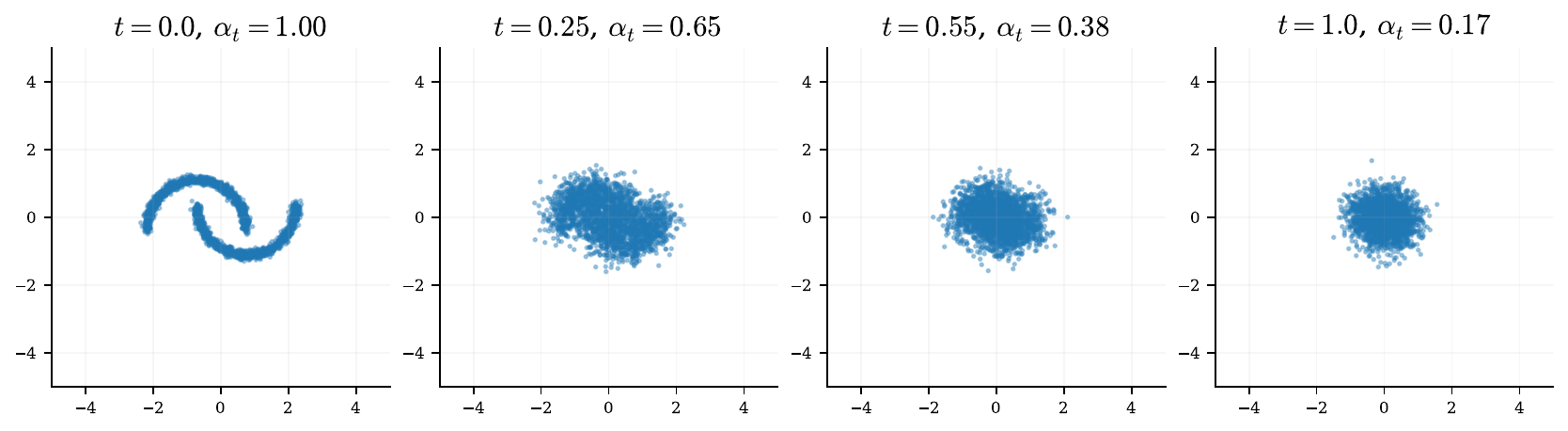}
        \caption{VP: $\ora X_t \overset{\mathcal L}{=} \alpha_t\ora X_0+0.4 \sqrt{1-\alpha_t^2}Z$ with $\alpha_t = \rme^{-\tau_t/2}$.}
        \label{fig:smoothing_vp}
    \end{subfigure}

    \vspace{0.6em}

    \begin{subfigure}{0.94\textwidth}
        \centering
        \includegraphics[width=\textwidth]{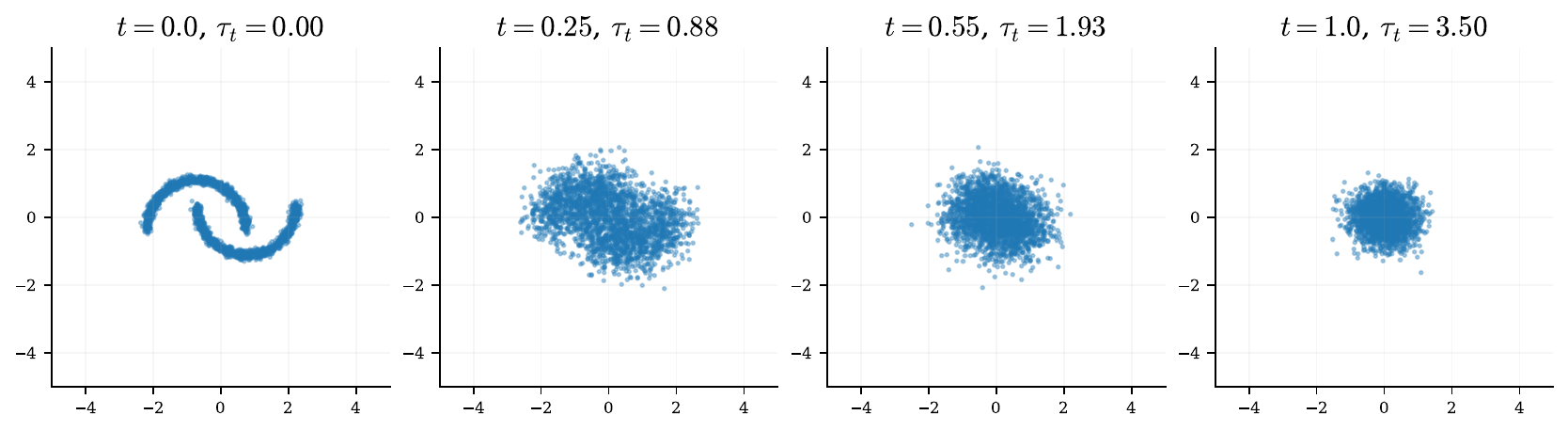}
        \caption{Kinetic (position marginal): $\ora X_t$.}
        \label{fig:smoothing_kinetic_x}
    \end{subfigure}

    \vspace{0.6em}

    \begin{subfigure}{0.94\textwidth}
        \centering
        \includegraphics[width=\textwidth]{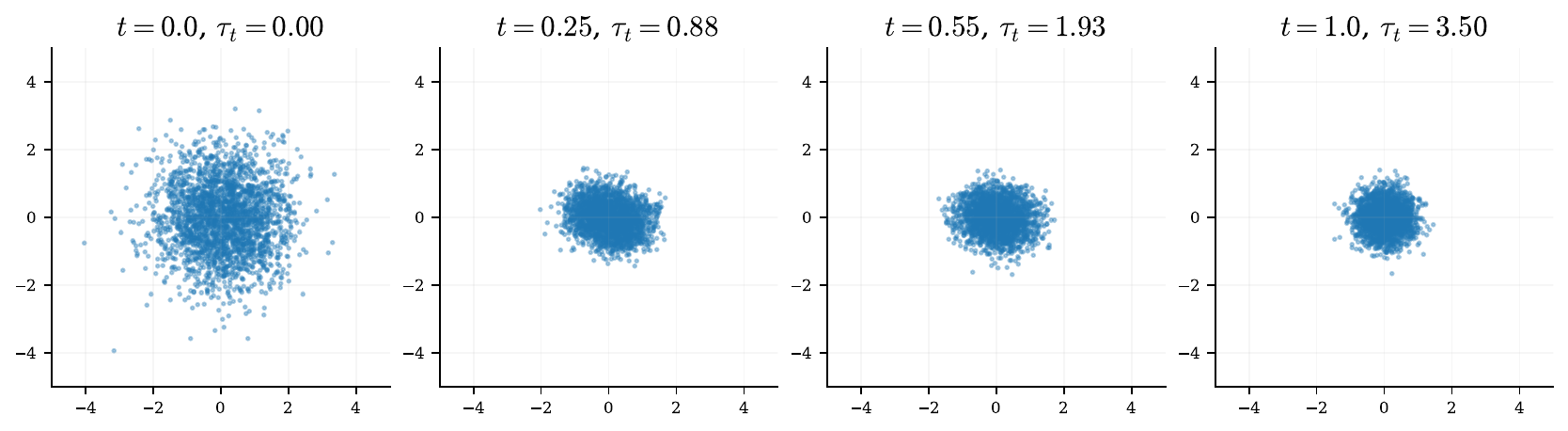}
        \caption{Kinetic(velocity marginal): $\ora V_t$ with $\ora V_0 \sim \mathcal{N}(0_2,\Id_2)$}
        \label{fig:smoothing_kinetic_v}
    \end{subfigure}
    \caption{
    Comparison of Gaussian mixing mechanisms applied to the same two-dimensional data distribution. In the VE case, the noise level $\sigma_t$ increases
    with time. In the VP case, $\alpha_t=\rme^{-\tau_t/2}$, where
    $\tau_t=\int_0^t\beta(s) \rmd s$, and the stationary standard deviation is
    $\sigma_\infty=0.4$. The kinetic model is defined on phase space
    $\ora U_t=(\ora X_t,\ora V_t)\in\mathbb R^{2d}$ as in \eqref{eq:kou_intro} with $\sigma_\infty = 0.4$ and $\pi_v = \mathcal{N}(0,\Id_2)$. The last two panels display separately the position and velocity marginals
    of the same kinetic phase-space process.
    }
    \label{fig:gaussian_noising_comparison}
\end{figure}

Hence, the first source of error, controls the impact of replacing $\pi_T$ by $\pi_\infty$ in the reverse SDE and therefore writes for some distance $\dist$ on the space of probability measures as
$$
\mathcal{E}_{\rm init}
\eqdef
\dist \left( \pi_T Q_{0,T}, \pi_\infty Q_{0,T} \right) \eqsp.
%= \rmd \left( \pi_{\rm data}, \pi_\infty Q_{0,T} \right) \eqsp.
$$

% $$
% p_T(x)
% =
% \int_{\mathbb R^d}
% (2\pi\sigma_T^2)^{-d/2}
% \exp\!\left(
% -\frac{\|x-m_Tx_0\|^2}{2\sigma_T^2}
% \right)
% \pi_{\rm data}(\rmd x_0) \eqsp .
% $$

% The nature of this approximation depends on the noising process. In the
% variance-exploding case $\alpha=0$, one has
% $$
% \ora X_T=X_0+\sqrt{2\tau_T} Z \eqsp,
% \qquad
% \tau_T=\int_0^T \beta(t) \rmd t \eqsp.
% $$
% There is no invariant distribution in this case: the variance keeps increasing.
% The reference law \(\pi_\infty\) is therefore chosen as a large Gaussian, for
% example \(\mathcal N(0,2\tau_T I_d)\), which approximates \(\pi_T\) when the
% added noise dominates the scale of the data. In the variance-preserving case $\alpha>0$, the forward process is an
% Ornstein--Uhlenbeck-type diffusion. Such process is ergodic with invariant distribution
% $$
% \pi_\infty=\mathcal N(0,\alpha^{-1} \Id_d),
% $$
% and the terminal law satisfies
% $$
% \ora X_T
% =
% \rme^{-\alpha\tau_T}X_0
% +
% \sqrt{\frac{1- \rme^{-2\alpha\tau_T}}{\alpha}}\,Z.
% $$
% Hence the dependence on $X_0$ vanishes as $\tau_T\to\infty$, and
% $\pi_T$ converges weakly to $\pi_\infty$. In this case, replacing $\pi_T$ by
% \(\pi_\infty\) is genuinely a mixing approximation: the forward process forgets its initial condition. 

\paragraph{Approximation error.}

The ideal reverse kernel $Q_{s,t}$ is defined using the score $ \nabla \log p_t(x)$. In practice, this score depend on $\pi_{\rm data}$ and is therefore unknown. It is replaced by a learned parametric approximation $\score_\theta(x,t) : \R^d \times [0,T] \to \R^d$. For the forward noising processes considered here, the conditional transition
densities $p_{t|0}(\cdot\mid x_0)$ are Gaussian. Hence its conditional score $\nabla_x\log q_{t|0}(x\mid x_0)$ is known analytically; see \eqref{eq:gaussian_smoothing_score}.  Denoising score matching can therefore be applied at each time $t$. Training over a continuum of noise levels amounts to averaging the fixed-time conditional score matching objective over $t$. Namely, for a weighting function $\lambda : [0,T] \to \R_{>0}$, one considers: 
\begin{equation} \label{eq:loss_csm_chap0_sec3}
\mathcal L_{\rm CSM}(\theta) = \frac{1}{2} \int_0^T \lambda(t) \E \left[ \left\| \score_\theta( \ora X_t,t) - \nabla \log p_{t|0} (\ora X_t\mid \ora X_0) \right\|^2 \right] \rmd t \eqsp.
\end{equation}
In practice, \eqref{eq:loss_csm_chap0_sec3} is estimated using a Monte Carlo approximation.
% If $\lambda$ is normalized, one samples a random time $\tau$ with density $\lambda(t)$,
% then draws
% $$
% \ora X_0\sim\pi_{\rm data},
% \qquad
% \ora X_{\tau}\sim q_{\tau|0}(\cdot\mid \ora X_0),
% $$
% and performs a stochastic-gradient update using the corresponding integrand in
% \eqref{eq:loss_csm_chap0_sec3}. 
% -------------------- 
More explicitly, let $\nu$ be a probability distribution on $[0,T]$ with density $\rho$ such that $\rho(t)>0$ whenever $\lambda(t)>0$. Given $n$
Monte Carlo samples, draw independently
$$
\tau_i\sim \nu,\qquad
\ora X_0^i\sim \pi_{\rm data},
\qquad
\ora X_{\tau_i}^i\sim p_{\tau_i|0}(\cdot\mid \ora X_0^i),
\qquad i=1,\ldots,n \eqsp.
$$
Then, a Monte Carlo estimator of \eqref{eq:loss_csm_chap0_sec3} is given by
\begin{equation}\label{eq:empirical_csm_general}
\widehat{\mathcal L}_{{\rm CSM}}^n(\theta)
\eqdef
\frac{1}{2n}
\sum_{i=1}^n
\frac{\lambda(\tau_i)}{\rho(\tau_i)}
\left\|
\score_\theta(\ora X_{\tau_i}^i,\tau_i)- \nabla_x\log p_{\tau_i|0}
\big(\ora X_{\tau_i}^i\mid \ora X_0^i\big)
\right\|^2 \eqsp.
\end{equation}
In practice, this empirical objective is used to perform a stochastic-gradient update
$$
\theta_{m+1} = \theta_m - \eta_m \nabla_\theta \widehat{\mathcal L}_{{\rm CSM},n}(\theta_m) \eqsp,
$$
where $\eta_m>0$ is the learning rate at iteration $m$.
% In the common case where \(\lambda\) is normalized and one samples
% \(\tau_i\) directly with density \(\rho=\lambda\), this reduces to
% \begin{equation}\label{eq:empirical_csm_normalized}
% \widehat{\mathcal L}_{{\rm CSM},n}(\theta)
% =
% \frac{1}{2n}
% \sum_{i=1}^n
% \left\|
% \score_\theta(\ora X_{\tau_i}^i,\tau_i)
% -
% \nabla_x\log q_{\tau_i|0}
% \big(\ora X_{\tau_i}^i\mid \ora X_0^i\big)
% \right\|^2 .
% \end{equation}

Once the neural parametric approximation is learned, the approximate reverse SDE $(\ola X_t^\theta)_{t \in [0,T]}$ solves
\begin{equation}\label{eq:approx-reverse-sde}
\rmd \ola X_t^\theta
=
\left[ -b(T-t,\ola X_t^\theta)
+ a_{T-t} \score_\theta(\ola X_t^\theta,T-t) \right]\rmd t
+ G_{T-t} \rmd  B_t \eqsp,
\end{equation}
with $a_{T-t} =G_{T-t} G_{T-t}^\top$. The diffusion coefficient is the same in both \eqref{eq:sgm-backward-exact} and \eqref{eq:approx-reverse-sde}. Hence the difference between the two dynamics can be seen as a drift perturbation for $t \in [0,T]$,
$$
a_{T-t}
\left[ \score_\theta(x,T-t) - \nabla \log p_{T-t}(x) \right] \eqsp.
$$
Thus an error in the learned score is not merely a static regression error: once inserted into the reverse SDE, it becomes a dynamical error that is accumulated and propagated along the sampling trajectory. Therefore, denoting by
$$
Q_{s,t}^\theta(x,A)
\eqdef
\mathbb P\left(\ola X_t^{\theta,s,x}\in A\right),
\qquad 0\le s\le t\le T \eqsp,
$$
the transition kernels of \eqref{eq:approx-reverse-sde} for $x\in \R^d$ and $A \in \mathcal{B}(\R^d)$, for an initial distribution $\mu \in \mathcal{P} (\R^d)$, and for a distance $\dist$ on the space of probability measures, the score approximation error is
$$
\mathcal{E}_{\rm approx}
\eqdef
\dist \left(\mu Q_{0,T},\mu Q_{0,T}^\theta\right) \eqsp.
$$
% In the full sampling procedure, one usually takes $\mu=\pi_\infty$, because the
% initialization error has already replaced the exact terminal law $\pi_T$ by the
% tractable reference law $\pi_\infty$.

In fact, the term ``score approximation error'' encompasses several distinct sources of error. At the population level, denoising score matching identifies the true score only after
restricting the search to a model class $\mathcal{S}_\Theta= \{\score_\theta:\theta\in\Theta\}$. If the true score field $\nabla\log p_t$ does not belong to this class, one incurs an \emph{approximation error}. This includes the effect of the neural network architecture, such as its width, depth, and parametrization. With finitely
many training samples, the empirical DSM objective differs from its population counterpart, which gives a \emph{statistical} or \emph{generalization error}. Finally, the empirical DSM objective is optimized by a numerical procedure, typically a stochastic-gradient method. The
parameter returned by this procedure need not be a global minimizer of the
empirical objective. The gap between the achieved empirical loss and the best empirical loss within the model class is the \emph{optimization error}.

Recent works have begun to analyze this learning component more directly. Some results focus on score approximation, estimation, distribution recovery,
and minimax statistical guarantees
\citep{oko2023diffusion,chen2023score, dou2024optimal, zhang24bv, fu2025approximation,
stephanovitch2025generalization}, others establish generalization bounds for
diffusion or score-based models \citep{li2023generalization,yakovlev2025generalization,dupuisalgorithm},
while a complementary line studies optimization dynamics for neural-network
score estimation or score matching \citep{han2024neural,zhang2025convergence,strasman2026sgd}.

\paragraph{Discretization error.}

After replacing the true score by a learned approximation $s_\theta$, the approximate reverse SDE is given by \eqref{eq:approx-reverse-sde}. Even if $s_\theta$ were known exactly, this SDE usually cannot be simulated exactly. Indeed, its drift depends on a high-dimensional non-linear vector field and the associated transition kernels are not available in closed form. One therefore replaces the continuous-time dynamics by a discrete-time Markov chain approximation. The resulting discrepancy between the law of the continuous process and the law of the numerical scheme is the \emph{discretization error}.
More precisely, let
$$
0=t_0<t_1<\cdots<t_N=T \eqsp,
$$
be a time grid of $[0,T]$ with $N \in \mathbb{N}^*$ time steps. For $t\in[t_k,t_{k+1})$, we write
$$
\underline t \eqdef t_k \eqsp,
\qquad h_k\eqdef t_{k+1}- t_k \eqsp.
$$
The notation $\underline t$ denotes recursively the last grid point before $t \in [0,T]$ and $h_k$ is the stepsize. Different numerical schemes correspond to different ways of freezing or integrating the coefficients of \eqref{eq:approx-reverse-sde} on each interval
$[t_k,t_{k+1}]$ so that the one-step transition kernel between $t_k$ and $t_{k+1}$ becomes available. Let 
$$
(Z_k)_{k\ge 1}\stackrel{\mathrm{i.i.d.}}{\sim}\mathcal N(0_d,\Id_d) \eqsp,
$$
be a sequence of standard Gaussian random vectors of $\R^d$, independent of the initial condition. The \emph{Euler-Maruyama discretization} \citep{kloeden1992numerical, pages2018numerical} of \eqref{eq:approx-reverse-sde} writes as
\begin{multline} \label{eq:em_general_reverse}
\bar X_{t_{k+1}}^{\theta,\mathrm{EM}}
=
\bar X_{t_k}^{\theta,\mathrm{EM}}
+
h_k \left[ -b \left( T- t_k, \bar X_{t_k}^{\theta,\mathrm{EM}} \right) + a_{T-t_k} \score_\theta \left( \bar X_{t_k}^{\theta,\mathrm{EM}}, T - t_k \right)  \right]
 \\
+
\sqrt{h_k} G_{T-t_k} Z_{k+1} \eqsp.
\end{multline}
The discrete-time Markov chain \eqref{eq:em_general_reverse} coincides at grid points
$t_0,\ldots,t_N$ with the following continuous-time interpolation: for
$t\in[t_k,t_{k+1})$,
\begin{equation}\label{eq:em_interpolation_general_reverse}
\rmd \bar X_t^{\theta,\mathrm{EM}}
= \left[ -b(T-\underline t,\bar X_{\underline t}^{\theta,\mathrm{EM}}) + a_{T-\underline t}\score_\theta
\left( \bar X_{\underline t}^{\theta,\mathrm{EM}}, T-\underline t \right) \right]\rmd t + G_{T-\underline t} \rmd  B_t \eqsp.
\end{equation}
The recursion \eqref{eq:em_general_reverse} is the object used in practice: each step is explicit once the score network is evaluated and independent Gaussian vector are sampled. The interpolation \eqref{eq:em_interpolation_general_reverse}, on the other hand, is mostly used for theoretical analysis. It embeds the discrete Markov chain into continuous time so that both coincinde in distribution at grid points. This continuous viewpoint allows to compare the numerical scheme to the approximate reverse SDE using stochastic calculus tools.

\begin{figure}[t]
    \centering
    \includegraphics[width=0.82\textwidth]{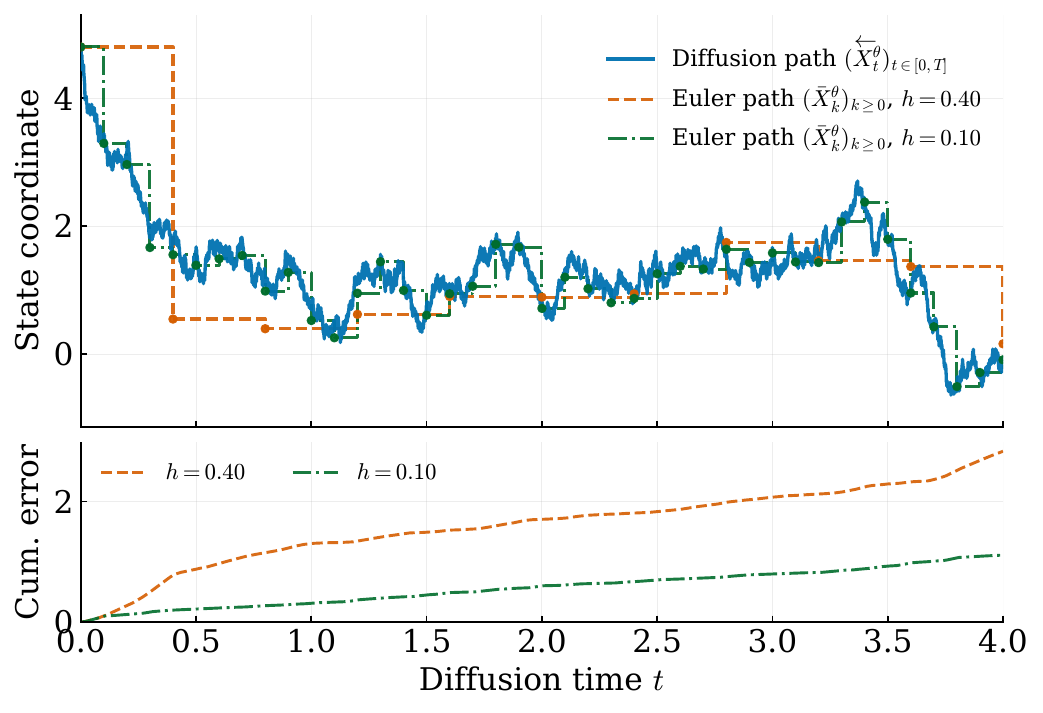}
    \caption{Illustration of time discretization. The blue curve represents a high-resolution reference trajectory of the approximate reverse dynamics, while the dashed curves are Euler--Maruyama stepwise approximations with different step sizes, driven by the same Brownian realization.
    The lower panel displays the approximated cumulative pathwise $L^1$ discrepancy between each stepwise interpolation and the reference trajectory.
    }
    \label{fig:reverse_sde_discretization}
\end{figure}

\begin{remark}[Structured integrators] \label{rem:structured_integrators}
Euler--Maruyama is the most generic explicit discretization: it freezes both the drift and diffusion coefficients on each interval $[t_k,t_{k+1}]$. It is however possible to design more structured integrators.
\begin{itemize}
    \item First, in the setting considered here, the diffusion matrix $G_{T-t}$ is deterministic and independent of the state. Hence the stochastic integral over one step is explicitly Gaussian and can, therefore, be integrated exactly:
$$
\int_{t_k}^{t_{k+1}} G_{T-s} \rmd  B_s
\sim
\mathcal N \left(0, \int_{t_k}^{t_{k+1}} G_{T-s}G_{T-s}^\top \rmd s \right) \eqsp.
$$
In the time-inhomogeneous scalar case, as in \eqref{eq:vp_ve_sde}, one can thefore consider the \emph{time-changed Euler scheme} \citep{strasman2026forgetting}:
For $t \in [t_k,t_{k+1}]$,
    \begin{equation*} %\label{eq:tc-interpolation}
    \rmd \bar X_t^{\theta,\mathrm{Tc}}
    =
    \bar\beta(t)
    \Big(
    \alpha \bar X_{\underline t}^{\theta,\mathrm{Tc}}
    +
    2s_\theta( \bar X_{\underline t}^{\theta,\mathrm{Tc}},T-\underline t)
    \Big)\rmd t
    +
    \sqrt{2\bar\beta(t)} \rmd  B_t \eqsp,
    \end{equation*}
    which is equivalent, at the grid points, to
    $$
    \bar X_{t_{k+1}}^{\theta,\mathrm{Tc}}
    =
    \bar X_{t_k}^{\theta,\mathrm{Tc}}
    +
    \Delta_k
    \left(
    \alpha \bar X_{t_k}^{\theta,\mathrm{Tc}}
    +
    2s_\theta(\bar X_{t_k}^{\theta,\mathrm{Tc}},T-t_k)
    \right)
    +
    \sqrt{2\Delta_k} Z_{k+1} \eqsp,
    $$
where $\Delta_k$ is the accumulated noise intensity between $t_k$ and $t_{k+1}$
$$
\Delta_k \eqdef \int_{t_k}^{t_{k+1}}\bar\beta(s) \rmd s \eqsp.
$$
\item Second, if the drift contains a linear part that can be integrated exactly, exponential integrators may be preferable. For instance, when
$$
\bar b_\theta(t,x)=A_t x+r_t+c_\theta(t,x) \eqsp,
$$
one can integrate the linear dynamics associated with $A_t x+r_t$ exactly and freeze only the nonlinear term $c_\theta$ on each step \citep{hochbruck2010exponential,strasman2025an}.

\item Finally, for phase-space or kinetic models, one can exploit the position--velocity structure of the dynamics. Instead of applying Euler--Maruyama to the full drift as a black box, one may split the dynamics into simpler substeps. Such \emph{splitting schemes} compose elementary flows, some of which can be integrated exactly or more accurately than the full dynamics \citep{strang1968construction}. Leveraging the physical interpretation of kinetic systems, one can also use
structure-preserving integrators inspired by \emph{symplectic methods}. These methods are designed to preserve the geometry of Hamiltonian flows, and can be
used for the Hamiltonian-like substeps of the kinetic dynamics \citep{leimkuhler2004simulating,tuckerman2023statistical}. This is particularly
natural for CLD-type score-based models, which evolve on a position--velocity phase space.
\end{itemize}
\end{remark}
For any chosen numerical scheme, we write its one-step update as a measurable map
\begin{equation} \label{eq:one_step_discr_update}
\Psi_k^\theta:\R^d\times\R^d\to\R^d \eqsp,
\qquad \bar X_{t_{k+1}}^\theta = \Psi_k^\theta(\bar X_{t_k}^\theta,Z_{k+1}) \eqsp,
\end{equation}
where $Z_{k+1}\sim\mathcal N(0,\Id_d)$ contains the independent random variables needed by the scheme. The one-step Markov kernel associated with $\Psi_k^\theta$ is defined, for every bounded measurable function $f$, by
$$
\bar Q_{k+1|k}^\theta f(x) \eqdef \mathbb E \left[ f \left( \Psi_k^\theta(x,Z_{k+1}) \right) \right] \eqsp.
$$
Equivalently, if
$$
\bar X_{t_k}^{\theta,k,x}=x \eqsp, \qquad
\bar X_{t_{\ell+1}}^{\theta,k,x} = \Psi_\ell^\theta
\left( \bar X_{t_\ell}^{\theta,k,x},Z_{\ell+1} \right) \eqsp,
\qquad \ell=k,\ldots,N-1,
$$
then $\bar Q_{k+1|k}^\theta f(x) = \mathbb E \left[f\left(\bar X_{t_{k+1}}^{\theta,k,x}\right) \right]$. For $0\le k\le \ell\le N$, we define the composed kernel by
$$
\bar Q_{k:\ell}^\theta f(x)
\eqdef
\int_{\R^{d(\ell-k)}}
f(x_\ell)
\prod_{r=k}^{\ell-1}
\bar Q_{r+1|r}^\theta(x_r,\rmd x_{r+1}) \eqsp,
\qquad x_k=x \in \R^d \eqsp.
$$
By convention, $\bar Q_{k:k}^\theta f(x)=f(x)$. Finally, for a distance $\dist$ on probability measures, the discretization error is
$$
\mathcal E_{\rm discr}
\eqdef \dist \left( \mu Q_{0,T}^\theta, \mu \bar Q_{0:N}^\theta \right) \eqsp ,
\qquad \mu\in\mathcal P(\R^d) \eqsp,
$$
where $Q_{0,T}^\theta$ is the transition kernel of the continuous approximate reverse SDE \eqref{eq:approx-reverse-sde}.

\paragraph{SGMs in practice.}

Once the score network has been trained, the practical sampler is obtained by initializing the discrete reverse chain from the reference distribution $\pi_\infty$ and by using the learned score $\score_\theta$ in the numerical
reverse dynamics. If $\bar Q_{0:N}^\theta$ denotes the Markov kernel of the $N$-step discrete sampler, then the generated distribution is
$$
\widehat\pi_N^{\theta,\infty}
=
\pi_\infty \bar Q_{0:N}^\theta \eqsp.
$$
It takes into account the three approximations discussed in the last paragraphs: initialization, approximation and discretization errors. The generic Score-based Generative Modeling pipeline is summarized in Algorithm~\ref{alg:generic_sgm}.

\begin{algorithm}[t]
\caption{Generic SGM pipeline}
\label{alg:generic_sgm}
\small

\noindent
\textbf{Model choice.} Choose a noising horizon $T>0$ and a forward process
$$
\rmd \ora X_t=b(t,\ora X_t)\,\rmd t+G_t\,\rmd B_t \eqsp,
\qquad \ora X_0\sim\pi_{\rm data} \eqsp.
$$
This choice determines the corruption kernels $q_{t|0}$ and the reverse-time dynamics.
Choose a tractable reference law $\pi_\infty$, a time law $\nu(\rmd t)=\rho(t)\rmd t$ on $[0,T]$,
weights $\lambda(t)$, a grid $0=t_0<\cdots<t_N=T$, and one-step maps
$\Psi_k^\theta$ discretizing the approximate reverse SDE.

\vspace{0.8em}

\begin{minipage}[t]{0.56\textwidth}
\textbf{Training}
\begin{algorithmic}[1]
\Statex \textbf{Input:} dataset $\mathcal D$; model $\score_\theta$; initial parameter $\theta_0$; minibatch size $B$; learning rates $(\eta_m)_{m=0}^{K-1}$; number of iterations $K$.
\Statex \textbf{Output:} trained score $\score_{\hat \theta}$.
\For{$m=0,\ldots,K-1$}
    \WrapState{Sample a minibatch $X_0^i\in\mathcal D$, $i=1,\ldots,B$, times $\tau_i\sim\nu$ and forward realisations $X_{\tau_i}^i\sim q_{\tau_i|0}(\cdot\mid X_0^i)$.}
    \WrapState{Compute targets
    $$
    r_i=\nabla_x\log q_{\tau_i|0}(X_{\tau_i}^i\mid X_0^i).
    $$}
    \State Form the minibatch loss
    $$
    \widehat{\mathcal L}_m(\theta_m)
    =
    \tfrac{1}{2B}
    \sum_{i=1}^B
    \frac{\lambda(\tau_i)}{\rho(\tau_i)}
    \left\|
    \score_{\theta_m}(X_{\tau_i}^i,\tau_i)-r_i
    \right\|^2
    $$
    \WrapState{Update
    \[
    \theta_{m+1}
    =
    \theta_m-\eta_m\nabla_\theta
    \widehat{\mathcal L}_m(\theta_m)
    \]}
\EndFor
\State Set $\hat\theta\gets\theta_K$.
\State \Return $\score_{\hat\theta}$.
\end{algorithmic}
\end{minipage}
\hfill
\begin{minipage}[t]{0.4\textwidth}
\textbf{Sampling}
\begin{algorithmic}[1]
\Statex \textbf{Input:} trained score $\score_{\hat\theta}$.
\Statex \textbf{Output:} $Y_N \sim \widehat\pi_N^\theta$.
\State Sample $Y_0\sim\pi_\infty$.
\For{$k=0,\ldots,N-1$}
    \State Sample $Z_{k+1}\sim\mathcal N(0,\Id_d)$.
    \State Update
    \[
    Y_{k+1}\gets \Psi_k^{\hat\theta}(Y_k,Z_{k+1}).
    \]
\EndFor
\State \Return $Y_N$.
\end{algorithmic}
\end{minipage}
\end{algorithm}

\subsection{Error analysis}

\paragraph{Error decomposition.}

As discussed above, studying the stability and convergence of SGM algorithms amounts to comparing the a practical SGM implementation whose distribution is given by $\widehat\pi_N^{\theta,\infty}$ with the ideal time-reversed process whose terminal marginal distribution is $\pi_{\rm data}$. In other words, for some distance $\dist$ on the space of probability measure, its amounts to controlling
\begin{align*}
\dist \left(\pi_{\rm data},\widehat \pi_N^{\theta,\infty}\right)
&=
\dist \left( \pi_T Q_{0,T}, \pi_\infty \bar Q_{0:N}^{\theta}  \right) \\
&\le
\underbrace{
\dist \left( \pi_T Q_{0,T} , \pi_\infty Q_{0,T}\right)
}_{\text{Initialization error}}
+
\underbrace{
\dist \left(\pi_\infty Q_{0,T},\pi_\infty Q^\theta_{0,T}\right)
}_{\text{Score approximation error}} +
%&\qquad+
\underbrace{
\dist \left( \pi_\infty Q^\theta_{0,T} , \pi_\infty \bar Q_{0:N}^{\theta}  \right)
}_{\text{Discretization error}} \eqsp .
\end{align*}

For divergences that are not metrics, such as the Kullback--Leibler divergence,
the displayed inequality should be understood as a schematic decomposition
rather than a literal triangle inequality. In fact, the strength and
interpretation of such a bound depend strongly on the choice of $\dist$. We therefore briefly review the most common theoretical distances used for SGM analysis, as well as empirical metrics used to evaluate generated samples in practice.

\paragraph{Metrics and divergences between probability measures.}

The interpretation of the previous error decomposition depends strongly on the
choice of discrepancy $\dist$. We briefly recall some common choices used in the analysis of Score-based Generative Models. Let $\mu$ and $\nu$ be two probability measures on $\R^d$.

\begin{itemize}[leftmargin=*]
    \item \textbf{TV distance.}
    The total variation distance is
    $$
    \left\| \mu-\nu \right\|_{\rm TV} \eqdef \sup_{A\in\mathcal B(\mathbb R^d)} |\mu(A)-\nu(A)| \eqsp.
    $$
    It is a strong notion of convergence, controlling the discrepancy between probabilities assigned to arbitrary measurable events. Total variation bounds for SGMs are obtained, for instance, in \citet{de2021diffusion,chen2023sampling}.

    \item \textbf{Relative entropy.} If $\mu$ is absolutely continuous with respect to $\nu$, the relative entropy or Kullback--Leibler divergence is
    $$
    {\rm KL}(\mu\|\nu) \eqdef \int_{\mathbb R^d} \log \left(\frac{\rmd\mu}{\rmd\nu}\right) \mu(\rmd x) \eqsp,
    $$
    and ${\rm KL}(\mu \|\nu)=+\infty$ otherwise. Unlike total variation or
    Wasserstein distance, KL is not symmetric and does not satisfy a triangle
    inequality, but it is particularly well adapted to Girsanov-type arguments. KL-based analyses for SGMs
    include \citet{chen2023improved,benton2024nearly,conforti_kl,strasman2025an}. Moreover, relative entropy bounds directly imply total variation bounds through Pinsker's inequality:
    $$
    \|\mu-\nu\|_{\rm TV} \le \sqrt{\frac12 {\rm KL}(\mu\|\nu)} \eqsp.
    $$

    \item \textbf{$2$-Wasserstein.}
    For probability measures $\mu,\nu$ with finite second moments, the squared $2$-Wasserstein distance is
    $$
    \mathcal W_2^2(\mu,\nu) = \inf_{\substack{X\sim \mu,\;Y\sim \nu}} \mathbb E\|X-Y\|^2 \eqsp.
    $$
    Wasserstein distances are well suited to coupling arguments and to control of geometric discrepancies between distributions. Wasserstein analyses of SGMs include
    \citet{bruno2023diffusion, gaoW2, strasman2025an,silveri2025beyond,strasman2025wasserstein, beyler2025convergence}.

    \item \textbf{Weighted TV distance.} Given a measurable Lyapunov function $V:\mathbb R^d\to[1,\infty)$ and $b >0$, the weigthed total variation distance is
    $$
    \rho_b (\mu , \nu) = \int_{\mathbb R^d} (1 + bV(x)) |\mu-\nu|(\rmd x) \eqsp.
    $$
    Such weighted norms are natural in Markov-chain theory, in particular in
    Harris-type ergodic theorems based on Lyapunov drift and minorization
    conditions \citep{HairerMattingly2008}. They have also been used in the
    analysis of SGMs, for example in \citet{strasman2026forgetting,strasman2026smc}.
\end{itemize}
Other discrepancies are possible, and the choice of metric can substantially change both the meaning and the strength of a convergence statement. General relations between classical probability metrics are surveyed in \citet{gibbs2002choosing}.

\paragraph{Empirical evaluation.}
In applications, the true data distribution is unknown and the generated law is
available only through samples. One therefore uses empirical discrepancies
between a finite dataset and generated samples. These quantities should be
understood as evaluation metrics, not as direct estimates of the theoretical
distances above. Let
$$
\pi_{\rm data}^{(n)}
\eqdef
\frac1n\sum_{i=1}^n \delta_{x_i}
\qquad \text{and} \qquad
\hat \pi_{\rm gen}^{(m)}
\eqdef
\frac1m \sum_{j=1}^m \delta_{\hat x_j}
$$
denote the empirical measures associated with the data samples
$(x_i)_{i=1}^n$ and the generated samples $(\hat x_j)_{j=1}^m$, respectively. Common empirical metrics include:

\begin{itemize}[leftmargin=*]

    \item \textbf{Sliced-Wasserstein distance.}
    This distance compares one-dimensional projections of the two distributions. For $u\in\mathbb S^{d-1}$, let
    $P_u(x)\eqdef u^\top x$. The squared sliced $2$-Wasserstein distance is
    $$
    \operatorname{SW}_2^2(\mu,\nu)
    \eqdef \int_{\mathbb S^{d-1}} \mathcal W_2^2 \left( (P_u)_\#\mu,(P_u)_\#\nu \right) \sigma(\rmd u) \eqsp,
    $$
    where $\sigma$ is the uniform measure on the sphere. In practice, the integral over directions is approximated by Monte Carlo: for $u_1,\ldots,u_L\stackrel{\mathrm{i.i.d.}}{\sim}\sigma$,
    \begin{equation} \label{eq:sliced_wasserstein_empirical}
    \operatorname{SW}_{2,L}^2 \left(\pi_{\rm data}^{(n)},\hat\pi_{\rm gen}^{(m)} \right)
    \eqdef \frac1L \sum_{\ell=1}^L \mathcal W_2^2 \left( (P_{u_\ell})_\#\pi_{\rm data}^{(n)}, (P_{u_\ell})_\#\hat\pi_{\rm gen}^{(m)} \right) \eqsp.
    \end{equation}
Since, in one dimension Wasserstein distances can be written in terms of quantile functions, each Wasserstein distance can be computed efficiently by sorting the projected samples. Sliced Wasserstein metrics are commonly used in generative modeling applications \citep{rabin2011wasserstein,bonneel2015sliced,kolouri2016sliced,flamary2021pot}.

    \item \textbf{Max-sliced Wasserstein distance.}
    A related discrepancy replaces the average over directions in sliced Wasserstein by the most discriminative projection:
    $$
    \operatorname{MSW}_2^2(\mu,\nu)
    \eqdef     \sup_{u \in  \mathbb S^{d-1}} \mathcal W_2^2 \left( (P_u)_\#\mu,(P_u)_\#\nu \right) \eqsp. 
    $$
    Instead of averaging the discrepancy over random directions, one searches for the projection along which the two measures differ the most. In practice, a Monte Carlo approximation similar to \eqref{eq:sliced_wasserstein_empirical} can be implemented and the supremum over the sphere is approximated numerically, by gradient-based optimization over $u\in\mathbb S^{d-1}$ \citep{deshpande2019max,kolouri2019generalized}.

    \item \textbf{FID.}
    In image generation, a widely used evaluation metric is the Fréchet Inception
    Distance (FID) \citep{heusel2017gans}. Let $\phi$ be a fixed feature extractor,
    typically given by an intermediate layer of a pretrained Inception network.
    Given data samples $(x_i)_{i=1}^n$ and generated samples $(\hat x_j)_{j=1}^m$,
    one computes the feature vectors $\phi(x_i)$ and $\phi(\hat x_j)$ and fits
    Gaussian approximations to the two empirical feature distributions:
    $$
    \mathcal N(m_{\rm data}^{(n)},C_{\rm data}^{(n)}) \eqsp,
    \qquad
    \mathcal N(m_{\rm gen}^{(m)},C_{\rm gen}^{(m)}) \eqsp,
    $$
    where $m^{(n)}$ denotes the empirical mean and $C^{(n)}$ the empirical covariance.
    The FID is then defined as the squared $2$-Wasserstein distance between these
    two Gaussian distributions:
    \begin{multline*}
    \operatorname{FID}
    \eqdef
    \|m_{\rm data}^{(n)}-m_{\rm gen}^{(m)}\|^2 
    + \operatorname{Tr} \left( C_{\rm data}^{(n)} + C_{\rm gen}^{(m)} -
    2\left( (C_{\rm data}^{(n)})^{1/2} C_{\rm gen}^{(m)} (C_{\rm data}^{(n)})^{1/2} \right)^{1/2}
    \right) \eqsp.
    \end{multline*}
    Thus, FID does not compare the original data and generated distributions directly. It compares Gaussian approximations of their distributions after a
    learned feature embedding. It should be interpreted as a practical image-quality metric.
\end{itemize}
Many other empirical evaluation metrics have been proposed, and the design of reliable metrics for generative models remains an active area of research.

\newpage

\bibliographystyle{plainnat}
\bibliography{biblio}

\begin{thebibliography}{108}
\providecommand{\natexlab}[1]{#1}
\providecommand{\url}[1]{\texttt{#1}}
\expandafter\ifx\csname urlstyle\endcsname\relax
  \providecommand{\doi}[1]{doi: #1}\else
  \providecommand{\doi}{doi: \begingroup \urlstyle{rm}\Url}\fi

\bibitem[Ackley et~al.(1985)Ackley, Hinton, and Sejnowski]{ackley1985learning}
David~H Ackley, Geoffrey~E Hinton, and Terrence~J Sejnowski.
\newblock A learning algorithm for boltzmann machines.
\newblock \emph{Cognitive science}, 9\penalty0 (1):\penalty0 147--169, 1985.

\bibitem[Albergo and Vanden-Eijnden(2023)]{albergo2023building}
Michael Albergo and Eric Vanden-Eijnden.
\newblock Building normalizing flows with stochastic interpolants.
\newblock In \emph{ICLR 2023 Conference}, 2023.

\bibitem[Anderson(1982)]{anderson1982}
B.~D. Anderson.
\newblock Reverse-time diffusion equation models.
\newblock \emph{Stochastic Processes and their Applications}, 12\penalty0 (3):\penalty0 313--326, 1982.

\bibitem[Arnold et~al.(1989)Arnold, Vogtmann, and Weinstein]{arnold1989mathematical}
Vladimir~Igorevich Arnold, Karen Vogtmann, and Alan Weinstein.
\newblock \emph{Mathematical methods of classical mechanics}, volume~60.
\newblock Springer, 1989.

\bibitem[Bengio et~al.(2003)Bengio, Ducharme, Vincent, and Jauvin]{bengio2003neural}
Yoshua Bengio, R{\'e}jean Ducharme, Pascal Vincent, and Christian Jauvin.
\newblock A neural probabilistic language model.
\newblock \emph{Journal of machine learning research}, 3\penalty0 (Feb):\penalty0 1137--1155, 2003.

\bibitem[Benton et~al.(2024)Benton, Bortoli, Doucet, and Deligiannidis]{benton2024nearly}
Joe Benton, Valentin~De Bortoli, Arnaud Doucet, and George Deligiannidis.
\newblock Nearly \$d\$-linear convergence bounds for diffusion models via stochastic localization.
\newblock In \emph{The Twelfth International Conference on Learning Representations}, 2024.

\bibitem[Beyler and Bach(2025)]{beyler2025convergence}
Eliot Beyler and Francis Bach.
\newblock Convergence of deterministic and stochastic diffusion-model samplers: A simple analysis in wasserstein distance.
\newblock \emph{arXiv preprint arXiv:2508.03210}, 2025.

\bibitem[Bonneel et~al.(2015)Bonneel, Rabin, Peyr{\'e}, and Pfister]{bonneel2015sliced}
Nicolas Bonneel, Julien Rabin, Gabriel Peyr{\'e}, and Hanspeter Pfister.
\newblock Sliced and radon wasserstein barycenters of measures.
\newblock \emph{Journal of Mathematical Imaging and Vision}, 51\penalty0 (1):\penalty0 22--45, 2015.

\bibitem[Bottou et~al.(2018)Bottou, Curtis, and Nocedal]{bottou2018optimization}
L{\'e}on Bottou, Frank~E Curtis, and Jorge Nocedal.
\newblock Optimization methods for large-scale machine learning.
\newblock \emph{SIAM review}, 60\penalty0 (2):\penalty0 223--311, 2018.

\bibitem[Bovier et~al.(2004)Bovier, Eckhoff, Gayrard, and Klein]{bovier2004metastability}
Anton Bovier, Michael Eckhoff, V{\'e}ronique Gayrard, and Markus Klein.
\newblock Metastability in reversible diffusion processes i: Sharp asymptotics for capacities and exit times.
\newblock \emph{Journal of the European Mathematical Society}, 6\penalty0 (4):\penalty0 399--424, 2004.

\bibitem[Box and Muller(1958)]{box1958generation}
George E.~P. Box and Mervin~E. Muller.
\newblock A note on the generation of random normal deviates.
\newblock \emph{The Annals of Mathematical Statistics}, 29\penalty0 (2):\penalty0 610--611, 1958.

\bibitem[Bruno et~al.(2025)Bruno, Zhang, Lim, Akyildiz, and Sabanis]{bruno2023diffusion}
Stefano Bruno, Ying Zhang, Dong-Young Lim, {\"O}mer~Deniz Akyildiz, and Sotirios Sabanis.
\newblock On diffusion-based generative models and their error bounds: The log-concave case with full convergence estimates.
\newblock \emph{Transactions on Machine Learning Research}, 2025.

\bibitem[Cattiaux et~al.(2023)Cattiaux, Conforti, Gentil, and L{\'e}onard]{cattiaux}
Patrick Cattiaux, Giovanni Conforti, Ivan Gentil, and Christian L{\'e}onard.
\newblock {Time reversal of diffusion processes under a finite entropy condition}.
\newblock \emph{{Annales de l'Institut Henri Poincar{\'e} (B) Probabilit{\'e}s et Statistiques}}, 59\penalty0 (4):\penalty0 1844--1881, 2023.

\bibitem[Chen et~al.(2023{\natexlab{a}})Chen, Lee, and Lu]{chen2023improved}
Hongrui Chen, Holden Lee, and Jianfeng Lu.
\newblock Improved analysis of score-based generative modeling: User-friendly bounds under minimal smoothness assumptions.
\newblock In \emph{International Conference on Machine Learning}, pages 4735--4763. PMLR, 2023{\natexlab{a}}.

\bibitem[Chen et~al.(2023{\natexlab{b}})Chen, Huang, Zhao, and Wang]{chen2023score}
Minshuo Chen, Kaixuan Huang, Tuo Zhao, and Mengdi Wang.
\newblock Score approximation, estimation and distribution recovery of diffusion models on low-dimensional data.
\newblock In \emph{International Conference on Machine Learning}, pages 4672--4712. PMLR, 2023{\natexlab{b}}.

\bibitem[Chen et~al.(2023{\natexlab{c}})Chen, Chewi, Li, Li, Salim, and Zhang]{chen2023sampling}
Sitan Chen, Sinho Chewi, Jerry Li, Yuanzhi Li, Adil Salim, and Anru Zhang.
\newblock Sampling is as easy as learning the score: theory for diffusion models with minimal data assumptions.
\newblock In \emph{The Eleventh International Conference on Learning Representations}, 2023{\natexlab{c}}.

\bibitem[Cheng and Bartlett(2018)]{ChengBartlett2018}
Xiang Cheng and Peter Bartlett.
\newblock Convergence of langevin mcmc in kl-divergence.
\newblock In \emph{Proceedings of Algorithmic Learning Theory}, volume~83 of \emph{Proceedings of Machine Learning Research}, pages 186--211, 2018.

\bibitem[Chicone(2006)]{chicone2006ordinary}
Carmen Chicone.
\newblock \emph{Ordinary Differential Equations with Applications}, volume~34 of \emph{Texts in Applied Mathematics}.
\newblock Springer, New York, 2 edition, 2006.

\bibitem[Chung et~al.(1969)Chung, Walsh, et~al.]{chung1969reverse}
Kai~Lai Chung, John~B Walsh, et~al.
\newblock To reverse a markov process.
\newblock \emph{Acta Math}, 123\penalty0 (1):\penalty0 225--251, 1969.

\bibitem[Conforti et~al.(2025)Conforti, Durmus, and Gentiloni-Silveri]{conforti_kl}
Giovanni Conforti, Alain Durmus, and Marta Gentiloni-Silveri.
\newblock Kl convergence guarantees for score diffusion models under minimal data assumptions.
\newblock \emph{SIAM Journal on Mathematics of Data Science}, 7\penalty0 (1):\penalty0 86--109, 2025.

\bibitem[Dalalyan(2017)]{dalalyan2017further}
Arnak~S. Dalalyan.
\newblock Further and stronger analogy between sampling and optimization: {L}angevin {M}onte {C}arlo and gradient descent, 2017.

\bibitem[Dalalyan and Tsybakov(2012)]{dalalyan2012sparse}
Arnak~S Dalalyan and Alexandre~B Tsybakov.
\newblock Sparse regression learning by aggregation and langevin monte-carlo.
\newblock \emph{Journal of Computer and System Sciences}, 78\penalty0 (5):\penalty0 1423--1443, 2012.

\bibitem[De~Bortoli et~al.(2021)De~Bortoli, Thornton, Heng, and Doucet]{de2021diffusion}
Valentin De~Bortoli, James Thornton, Jeremy Heng, and Arnaud Doucet.
\newblock Diffusion schr{\"o}dinger bridge with applications to score-based generative modeling.
\newblock \emph{Advances in neural information processing systems}, 34:\penalty0 17695--17709, 2021.

\bibitem[Deshpande et~al.(2019)Deshpande, Hu, Sun, Pyrros, Siddiqui, Koyejo, Zhao, Forsyth, and Schwing]{deshpande2019max}
Ishan Deshpande, Yuan-Ting Hu, Ruoyu Sun, Ayis Pyrros, Nasir Siddiqui, Sanmi Koyejo, Zhizhen Zhao, David Forsyth, and Alexander~G Schwing.
\newblock Max-sliced wasserstein distance and its use for gans.
\newblock In \emph{Proceedings of the IEEE/CVF conference on computer vision and pattern recognition}, pages 10648--10656, 2019.

\bibitem[Dhariwal and Nichol(2021)]{beatsGAN}
Prafulla Dhariwal and Alexander~Quinn Nichol.
\newblock Diffusion models beat {GAN}s on image synthesis.
\newblock In A.~Beygelzimer, Y.~Dauphin, P.~Liang, and J.~Wortman Vaughan, editors, \emph{Advances in Neural Information Processing Systems}, 2021.

\bibitem[Dinh et~al.(2017)Dinh, Sohl-Dickstein, and Bengio]{dinh2017density}
Laurent Dinh, Jascha Sohl-Dickstein, and Samy Bengio.
\newblock Density estimation using real nvp.
\newblock In \emph{International Conference on Learning Representations}, 2017.

\bibitem[Dockhorn et~al.(2022)Dockhorn, Vahdat, and Kreis]{dockhorn2022score}
Tim Dockhorn, Arash Vahdat, and Karsten Kreis.
\newblock Score-based generative modeling with critically-damped langevin diffusion.
\newblock In \emph{International Conference on Learning Representations}, 2022.

\bibitem[Doob(1942)]{doob1942brownian}
J.~L. Doob.
\newblock The brownian movement and stochastic equations.
\newblock \emph{Annals of Mathematics (2)}, 43\penalty0 (2):\penalty0 351--369, April 1942.

\bibitem[Doob(1953)]{doob1953stochastic}
Joseph~L. Doob.
\newblock \emph{Stochastic Processes}.
\newblock John Wiley \& Sons, New York, 1953.

\bibitem[Dou et~al.(2024)Dou, Kotekal, Xu, and Zhou]{dou2024optimal}
Zehao Dou, Subhodh Kotekal, Zhehao Xu, and Harrison~H Zhou.
\newblock From optimal score matching to optimal sampling.
\newblock \emph{arXiv preprint arXiv:2409.07032}, 2024.

\bibitem[Dupuis et~al.(2025)Dupuis, Shariatian, Haddouche, Durmus, and Simsekli]{dupuisalgorithm}
Benjamin Dupuis, Dario Shariatian, Maxime Haddouche, Alain~Oliviero Durmus, and Umut Simsekli.
\newblock Algorithm-and data-dependent generalization bounds for diffusion models.
\newblock In \emph{Advances in Neural Information Processing Systems}, 2025.

\bibitem[Durmus and Moulines(2017)]{durmus2017nonasymptotic}
Alain Durmus and {\'E}ric Moulines.
\newblock Nonasymptotic convergence analysis for the unadjusted langevin algorithm.
\newblock \emph{The Annals of Applied Probability}, 27\penalty0 (3):\penalty0 1551--1587, 2017.

\bibitem[Durmus and Moulines(2019)]{durmus2019high}
Alain Durmus and {\'E}ric Moulines.
\newblock High-dimensional bayesian inference via the unadjusted langevin algorithm.
\newblock \emph{Bernoulli}, 25\penalty0 (4A), 2019.

\bibitem[Dynkin(1961)]{dynkin1961markov}
Eugene~B. Dynkin.
\newblock \emph{Theory of {Markov} Processes}.
\newblock Prentice-Hall, 1961.

\bibitem[Ethier and Kurtz(1986)]{ethier1986markov}
Stewart~N. Ethier and Thomas~G. Kurtz.
\newblock \emph{Markov Processes: Characterization and Convergence}.
\newblock Wiley, New York, 1986.

\bibitem[Flamary et~al.(2021)Flamary, Courty, Gramfort, Alaya, Boisbunon, Chambon, Chapel, Corenflos, Fatras, Fournier, et~al.]{flamary2021pot}
R{\'e}mi Flamary, Nicolas Courty, Alexandre Gramfort, Mokhtar~Z Alaya, Aur{\'e}lie Boisbunon, Stanislas Chambon, Laetitia Chapel, Adrien Corenflos, Kilian Fatras, Nemo Fournier, et~al.
\newblock Pot: Python optimal transport.
\newblock \emph{Journal of Machine Learning Research}, 22\penalty0 (78):\penalty0 1--8, 2021.

\bibitem[F{\"o}llmer and Wakolbinger(1986)]{follmer1986time}
H~F{\"o}llmer and A~Wakolbinger.
\newblock Time reversal of infinite-dimensional diffusions.
\newblock \emph{Stochastic processes and their applications}, 22\penalty0 (1):\penalty0 59--77, 1986.

\bibitem[F{\"o}llmer(1985)]{follmer1985entropy}
Hans F{\"o}llmer.
\newblock An entropy approach to the time reversal of diffusion processes.
\newblock In Michel M{\'e}tivier and {\'E}tienne Pardoux, editors, \emph{Stochastic Differential Systems}, volume~69 of \emph{Lecture Notes in Control and Information Sciences}, pages 156--163. Springer, 1985.

\bibitem[Fu and Lee(2025)]{fu2025approximation}
Guoji Fu and Wee~Sun Lee.
\newblock Approximation and generalization abilities of score-based neural network generative models for sub-gaussian distributions.
\newblock In \emph{Advances in Neural Information Processing Systems}, 2025.

\bibitem[Gao et~al.(2025)Gao, Nguyen, and Zhu]{gaoW2}
Xuefeng Gao, Hoang~M Nguyen, and Lingjiong Zhu.
\newblock Wasserstein convergence guarantees for a general class of score-based generative models.
\newblock \emph{Journal of machine learning research}, 26\penalty0 (43):\penalty0 1--54, 2025.

\bibitem[Gentiloni-Silveri and Ocello(2025)]{silveri2025beyond}
Marta Gentiloni-Silveri and Antonio Ocello.
\newblock Beyond log-concavity and score regularity: Improved convergence bounds for score-based generative models in w2-distance.
\newblock In \emph{Forty-second International Conference on Machine Learning}, 2025.

\bibitem[Gibbs and Su(2002)]{gibbs2002choosing}
Alison~L Gibbs and Francis~Edward Su.
\newblock On choosing and bounding probability metrics.
\newblock \emph{International statistical review}, 70\penalty0 (3):\penalty0 419--435, 2002.

\bibitem[Goodfellow et~al.(2014)Goodfellow, Pouget-Abadie, Mirza, Xu, Warde-Farley, Ozair, Courville, and Bengio]{goodfellow2014generative}
Ian~J Goodfellow, Jean Pouget-Abadie, Mehdi Mirza, Bing Xu, David Warde-Farley, Sherjil Ozair, Aaron Courville, and Yoshua Bengio.
\newblock Generative adversarial nets.
\newblock \emph{Advances in neural information processing systems}, 27, 2014.

\bibitem[Hairer and Mattingly(2008)]{HairerMattingly2008}
Martin Hairer and Jonathan~C. Mattingly.
\newblock Yet another look at harris' ergodic theorem for markov chains.
\newblock \emph{arXiv preprint arXiv:0810.2777}, 2008.

\bibitem[Han et~al.(2024)Han, Razaviyayn, and Xu]{han2024neural}
Yinbin Han, Meisam Razaviyayn, and Renyuan Xu.
\newblock Neural network-based score estimation in diffusion models: Optimization and generalization.
\newblock In \emph{International Conference on Learning Representations}, pages 42520--42558, 2024.

\bibitem[Haussmann and Pardoux(1986)]{haussmann1986time}
U.~G. Haussmann and E.~Pardoux.
\newblock Time reversal of diffusions.
\newblock \emph{The Annals of Probability}, 14\penalty0 (4):\penalty0 1188--1205, 1986.

\bibitem[Heusel et~al.(2017)Heusel, Ramsauer, Unterthiner, Nessler, and Hochreiter]{heusel2017gans}
Martin Heusel, Hubert Ramsauer, Thomas Unterthiner, Bernhard Nessler, and Sepp Hochreiter.
\newblock Gans trained by a two time-scale update rule converge to a local nash equilibrium.
\newblock \emph{Advances in neural information processing systems}, 30, 2017.

\bibitem[Hinton(2002)]{hinton2002training}
Geoffrey~E Hinton.
\newblock Training products of experts by minimizing contrastive divergence.
\newblock \emph{Neural computation}, 14\penalty0 (8):\penalty0 1771--1800, 2002.

\bibitem[Ho et~al.(2020)Ho, Jain, and Abbeel]{ho2020denoising}
Jonathan Ho, Ajay Jain, and Pieter Abbeel.
\newblock Denoising diffusion probabilistic models.
\newblock In \emph{Advances in Neural Information Processing Systems (NeurIPS)}, pages 6840--6851, 2020.

\bibitem[Hochbruck and Ostermann(2010)]{hochbruck2010exponential}
Marlis Hochbruck and Alexander Ostermann.
\newblock Exponential integrators.
\newblock \emph{Acta Numerica}, 19:\penalty0 209--286, 2010.

\bibitem[H{\"o}rmander(1967)]{hormander1967hypoelliptic}
Lars H{\"o}rmander.
\newblock Hypoelliptic second order differential equations.
\newblock \emph{Acta Mathematica}, 119:\penalty0 147--171, 1967.

\bibitem[Hyv{\"a}rinen and Dayan(2005)]{hyvarinen2005estimation}
Aapo Hyv{\"a}rinen and Peter Dayan.
\newblock Estimation of non-normalized statistical models by score matching.
\newblock \emph{Journal of Machine Learning Research}, 6\penalty0 (4), 2005.

\bibitem[Jolicoeur-Martineau et~al.(2021)Jolicoeur-Martineau, Pich{\'e}-Taillefer, Mitliagkas, and des Combes]{jolicoeur-martineau2021adversarial}
Alexia Jolicoeur-Martineau, R{\'e}mi Pich{\'e}-Taillefer, Ioannis Mitliagkas, and Remi~Tachet des Combes.
\newblock Adversarial score matching and improved sampling for image generation.
\newblock In \emph{International Conference on Learning Representations}, 2021.

\bibitem[Karras et~al.(2022)Karras, Aittala, Aila, and Laine]{edmkarras}
Tero Karras, Miika Aittala, Timo Aila, and Samuli Laine.
\newblock Elucidating the design space of diffusion-based generative models.
\newblock In S.~Koyejo, S.~Mohamed, A.~Agarwal, D.~Belgrave, K.~Cho, and A.~Oh, editors, \emph{Advances in Neural Information Processing Systems}, volume~35, pages 26565--26577. Curran Associates, Inc., 2022.

\bibitem[Kingma and Welling(2014)]{kingma2014autoencoding}
Diederik~P. Kingma and Max Welling.
\newblock Auto-encoding variational bayes.
\newblock In \emph{International Conference on Learning Representations}, 2014.

\bibitem[Kingma and Dhariwal(2018)]{kingma2018glow}
Durk~P Kingma and Prafulla Dhariwal.
\newblock Glow: Generative flow with invertible 1x1 convolutions.
\newblock \emph{Advances in neural information processing systems}, 31, 2018.

\bibitem[Kloeden and Platen(1992)]{kloeden1992numerical}
Peter~E. Kloeden and Eckhard Platen.
\newblock \emph{Numerical Solution of Stochastic Differential Equations}.
\newblock Springer, 1992.

\bibitem[Kolouri et~al.(2016)Kolouri, Zou, and Rohde]{kolouri2016sliced}
Soheil Kolouri, Yang Zou, and Gustavo~K Rohde.
\newblock Sliced wasserstein kernels for probability distributions.
\newblock In \emph{Proceedings of the IEEE Conference on Computer Vision and Pattern Recognition}, pages 5258--5267, 2016.

\bibitem[Kolouri et~al.(2019)Kolouri, Nadjahi, Simsekli, Badeau, and Rohde]{kolouri2019generalized}
Soheil Kolouri, Kimia Nadjahi, Umut Simsekli, Roland Badeau, and Gustavo Rohde.
\newblock Generalized sliced wasserstein distances.
\newblock \emph{Advances in neural information processing systems}, 32, 2019.

\bibitem[Langevin(1908)]{langevin1908brownian}
Paul Langevin.
\newblock Sur la th{\'e}orie du mouvement brownien.
\newblock \emph{Comptes Rendus de l'Acad{\'e}mie des Sciences}, 146:\penalty0 530--533, 1908.

\bibitem[Leimkuhler and Reich(2004)]{leimkuhler2004simulating}
Benedict Leimkuhler and Sebastian Reich.
\newblock \emph{Simulating hamiltonian dynamics}.
\newblock Cambridge university press, 2004.

\bibitem[L{\'e}onard et~al.(2014)L{\'e}onard, R{\oe}lly, and Zambrini]{leonard2014reciprocal}
Christian L{\'e}onard, Sylvie R{\oe}lly, and Jean-Claude Zambrini.
\newblock Reciprocal processes. a measure-theoretical point of view.
\newblock \emph{Probability Surveys}, 11:\penalty0 237--269, 2014.

\bibitem[Li et~al.(2023)Li, Li, Zhang, and Bian]{li2023generalization}
Puheng Li, Zhong Li, Huishuai Zhang, and Jiang Bian.
\newblock On the generalization properties of diffusion models.
\newblock In \emph{Advances in Neural Information Processing Systems}, volume~36, pages 2097--2127, 2023.

\bibitem[Lipman et~al.(2023)Lipman, Chen, Ben-Hamu, Nickel, and Le]{lipman2023flowmatching}
Yaron Lipman, Ricky T.~Q. Chen, Heli Ben-Hamu, Maximilian Nickel, and Matthew Le.
\newblock Flow matching for generative modeling.
\newblock In \emph{The Eleventh International Conference on Learning Representations}, 2023.

\bibitem[Liu et~al.(2023)Liu, Gong, and Liu]{liu2023rectifiedflow}
Xingchao Liu, Chengyue Gong, and Qiang Liu.
\newblock Flow straight and fast: Learning to generate and transfer data with rectified flow.
\newblock In \emph{The Eleventh International Conference on Learning Representations}, 2023.

\bibitem[Marsaglia and Tsang(2000)]{marsaglia2000ziggurat}
George Marsaglia and Wai~Wan Tsang.
\newblock The ziggurat method for generating random variables.
\newblock \emph{Journal of Statistical Software}, 5\penalty0 (8):\penalty0 1--7, 2000.

\bibitem[Martens et~al.(2012)Martens, Sutskever, and Swersky]{martens2012estimating}
James Martens, Ilya Sutskever, and Kevin Swersky.
\newblock Estimating the hessian by back-propagating curvature.
\newblock In \emph{Proceedings of the 29th International Coference on International Conference on Machine Learning}, pages 963--970, 2012.

\bibitem[Metafune et~al.(2002)Metafune, Pallara, and Priola]{metafune2002spectrum}
Giorgio Metafune, Diego Pallara, and Enrico Priola.
\newblock Spectrum of ornstein--uhlenbeck operators in \(l^p\) spaces with respect to invariant measures.
\newblock \emph{Journal of Functional Analysis}, 196\penalty0 (1):\penalty0 40--60, 2002.

\bibitem[Meyer(1967)]{meyer1967processus}
Paul-Andr{\'e} Meyer.
\newblock \emph{Processus de {Markov}}, volume~26 of \emph{Lecture Notes in Mathematics}.
\newblock Springer-Verlag, 1967.
\newblock ISBN 978-3-540-03901-3.

\bibitem[Millet et~al.(1989{\natexlab{a}})Millet, Nualart, and Sanz]{millet1989integration}
Annie Millet, David Nualart, and Marta Sanz.
\newblock Integration by parts and time reversal for diffusion processes.
\newblock \emph{The Annals of Probability}, 17\penalty0 (1):\penalty0 208--238, 1989{\natexlab{a}}.

\bibitem[Millet et~al.(1989{\natexlab{b}})Millet, Nualart, and Sanz]{millet1989time}
Annie Millet, David Nualart, and Marta Sanz.
\newblock Time reversal for infinite-dimensional diffusions.
\newblock \emph{Probability theory and related fields}, 82\penalty0 (3):\penalty0 315--347, 1989{\natexlab{b}}.

\bibitem[Miyasawa(1961)]{miyasawa1961empirical}
K.~Miyasawa.
\newblock An empirical bayes estimator of the mean of a normal population.
\newblock \emph{Bulletin of the International Statistical Institute}, 38:\penalty0 181--188, 1961.

\bibitem[Mou et~al.(2022)Mou, Flammarion, Wainwright, and Bartlett]{mou2022improved}
Wenlong Mou, Nicolas Flammarion, Martin~J Wainwright, and Peter~L Bartlett.
\newblock Improved bounds for discretization of langevin diffusions: Near-optimal rates without convexity.
\newblock \emph{Bernoulli}, 28\penalty0 (3):\penalty0 1577--1601, 2022.

\bibitem[Nagasawa(1964)]{nagasawa1964time}
Masao Nagasawa.
\newblock Time reversions of markov processes.
\newblock \emph{Nagoya Mathematical Journal}, 24:\penalty0 177--204, 1964.

\bibitem[Neal(2001)]{neal2001annealed}
Radford~M. Neal.
\newblock Annealed importance sampling.
\newblock \emph{Statistics and Computing}, 11\penalty0 (2):\penalty0 125--139, 2001.

\bibitem[Nelson(1967)]{nelson1967dynamical}
Edward Nelson.
\newblock \emph{Dynamical Theories of Brownian Motion}.
\newblock Princeton University Press, Princeton, NJ, 1967.

\bibitem[Oko et~al.(2023)Oko, Akiyama, and Suzuki]{oko2023diffusion}
Kazusato Oko, Shunta Akiyama, and Taiji Suzuki.
\newblock Diffusion models are minimax optimal distribution estimators.
\newblock In \emph{Proceedings of the 40th International Conference on Machine Learning}, volume 202 of \emph{Proceedings of Machine Learning Research}, pages 26517--26582. PMLR, 2023.

\bibitem[Ottobre et~al.(2015)Ottobre, Pavliotis, and Pravda-Starov]{ottobre2015degenerate}
Michela Ottobre, Grigorios~A. Pavliotis, and Karel Pravda-Starov.
\newblock Some remarks on degenerate hypoelliptic ornstein--uhlenbeck operators.
\newblock \emph{Journal of Mathematical Analysis and Applications}, 429\penalty0 (2):\penalty0 676--712, 2015.

\bibitem[Pag{\`e}s(2018)]{pages2018numerical}
Gilles Pag{\`e}s.
\newblock \emph{Numerical Probability: An Introduction with Applications to Finance}.
\newblock Universitext. Springer, Cham, 2018.
\newblock ISBN 978-3-319-90276-0.

\bibitem[Parisi(1981)]{parisi1981correlation}
Giorgio Parisi.
\newblock Correlation functions and computer simulations.
\newblock \emph{Nuclear Physics B}, 180\penalty0 (3):\penalty0 378--384, 1981.

\bibitem[Pierret and Galerne(2025)]{pierret2025diffusion}
Emile Pierret and Bruno Galerne.
\newblock Diffusion models for gaussian distributions: Exact solutions and wasserstein errors.
\newblock In \emph{International Conference on Machine Learning}, pages 49355--49381. PMLR, 2025.

\bibitem[Rabin et~al.(2011)Rabin, Peyr{\'e}, Delon, and Bernot]{rabin2011wasserstein}
Julien Rabin, Gabriel Peyr{\'e}, Julie Delon, and Marc Bernot.
\newblock Wasserstein barycenter and its application to texture mixing.
\newblock In \emph{Scale Space and Variational Methods in Computer Vision}, pages 435--446. Springer, 2011.

\bibitem[Rezende et~al.(2014)Rezende, Mohamed, and Wierstra]{rezende2014stochastic}
Danilo~Jimenez Rezende, Shakir Mohamed, and Daan Wierstra.
\newblock Stochastic backpropagation and approximate inference in deep generative models.
\newblock In \emph{International Conference on Machine Learning}, 2014.

\bibitem[Robbins(1956)]{robbins1956empirical}
Herbert Robbins.
\newblock An empirical bayes approach to statistics.
\newblock In \emph{Proceedings of the Third Berkeley Symposium on Mathematical Statistics and Probability}, volume~1, pages 157--163. University of California Press, 1956.

\bibitem[Roberts and Tweedie(1996)]{roberts1996exponential}
Gareth~O. Roberts and Richard~L. Tweedie.
\newblock Exponential convergence of langevin distributions and their discrete approximations.
\newblock \emph{Bernoulli}, 2\penalty0 (4):\penalty0 341--363, 1996.

\bibitem[Ronneberger et~al.(2015)Ronneberger, Fischer, and Brox]{ronneberger2015u}
Olaf Ronneberger, Philipp Fischer, and Thomas Brox.
\newblock U-net: Convolutional networks for biomedical image segmentation.
\newblock In \emph{International Conference on Medical image computing and computer-assisted intervention}, pages 234--241. Springer, 2015.

\bibitem[Salez(2025)]{salez2025markov}
Justin Salez.
\newblock Introduction to markov processes.
\newblock \url{https://www.ceremade.dauphine.fr/~salez/markov.pdf}, December 2025.
\newblock Lecture notes.

\bibitem[Saumard and Wellner(2014)]{saumard2014log}
Adrien Saumard and Jon~A Wellner.
\newblock Log-concavity and strong log-concavity: a review.
\newblock \emph{Statistics surveys}, 8:\penalty0 45, 2014.

\bibitem[Schr{\"o}dinger(1931)]{schrodinger1931umkehrung}
Erwin Schr{\"o}dinger.
\newblock {\"U}ber die umkehrung der naturgesetze.
\newblock \emph{Sitzungsberichte der Preu{\ss}ischen Akademie der Wissenschaften, Physikalisch-mathematische Klasse}, pages 144--153, 1931.

\bibitem[Sohl-Dickstein et~al.(2015)Sohl-Dickstein, Weiss, Maheswaranathan, and Ganguli]{dicksteinthermo}
Jascha Sohl-Dickstein, Eric Weiss, Niru Maheswaranathan, and Surya Ganguli.
\newblock Deep unsupervised learning using nonequilibrium thermodynamics.
\newblock In \emph{International Conference on Machine Learning}, pages 2256--2265. PMLR, 2015.

\bibitem[Song and Ermon(2020)]{song2020improved}
Y.~Song and S.~Ermon.
\newblock Improved techniques for training score-based generative models.
\newblock In \emph{Advances in Neural Information Processing Systems 33: Annual Conference on Neural Information Processing Systems 2020, NeurIPS 2020, December 6-12, 2020, virtual}, 2020.

\bibitem[Song and Ermon(2019)]{song2019generative}
Yang Song and Stefano Ermon.
\newblock Generative modeling by estimating gradients of the data distribution.
\newblock In \emph{Advances in Neural Information Processing Systems (NeurIPS)}, pages 11918--11930, 2019.

\bibitem[Song et~al.(2020)Song, Garg, Shi, and Ermon]{song2020sliced}
Yang Song, Sahaj Garg, Jiaxin Shi, and Stefano Ermon.
\newblock Sliced score matching: A scalable approach to density and score estimation.
\newblock In \emph{Uncertainty in artificial intelligence}, pages 574--584. PMLR, 2020.

\bibitem[Song et~al.(2021)Song, Sohl-Dickstein, Kingma, Kumar, Ermon, and Poole]{song2021scorebased}
Yang Song, Jascha Sohl-Dickstein, Diederik~P. Kingma, Abhishek Kumar, Stefano Ermon, and Ben Poole.
\newblock Score-based generative modeling through stochastic differential equations.
\newblock In \emph{International Conference on Learning Representations}, 2021.

\bibitem[St{\'e}phanovitch et~al.(2025)St{\'e}phanovitch, Aamari, and Levrard]{stephanovitch2025generalization}
Arthur St{\'e}phanovitch, Eddie Aamari, and Cl{\'e}ment Levrard.
\newblock Generalization bounds for score-based generative models: a synthetic proof.
\newblock \emph{arXiv preprint arXiv:2507.04794}, 2025.

\bibitem[Strang(1968)]{strang1968construction}
Gilbert Strang.
\newblock On the construction and comparison of difference schemes.
\newblock \emph{SIAM journal on numerical analysis}, 5\penalty0 (3):\penalty0 506--517, 1968.

\bibitem[Strasman et~al.(2025{\natexlab{a}})Strasman, Ocello, Boyer, Corff, and Lemaire]{strasman2025an}
Stanislas Strasman, Antonio Ocello, Claire Boyer, Sylvain~Le Corff, and Vincent Lemaire.
\newblock An analysis of the noise schedule for score-based generative models.
\newblock \emph{Transactions on Machine Learning Research}, 2025{\natexlab{a}}.
\newblock ISSN 2835-8856.

\bibitem[Strasman et~al.(2025{\natexlab{b}})Strasman, Surendran, Boyer, Le~Corff, Lemaire, and Ocello]{strasman2025wasserstein}
Stanislas Strasman, Sobihan Surendran, Claire Boyer, Sylvain Le~Corff, Vincent Lemaire, and Antonio Ocello.
\newblock Wasserstein convergence of critically damped langevin diffusions.
\newblock In \emph{The Thirty-ninth Annual Conference on Neural Information Processing Systems (NeurIPS)}, 2025{\natexlab{b}}.

\bibitem[Strasman et~al.(2026{\natexlab{a}})Strasman, Cardoso, Corff, Lemaire, and Ocello]{strasman2026forgetting}
Stanislas Strasman, Gabriel Cardoso, Sylvain~Le Corff, Vincent Lemaire, and Antonio Ocello.
\newblock On forgetting and stability of score-based generative models, 2026{\natexlab{a}}.

\bibitem[Strasman et~al.(2026{\natexlab{b}})Strasman, Cardoso, Corff, Lemaire, and Ocello]{strasman2026smc}
Stanislas Strasman, Gabriel~Victorino Cardoso, Sylvain~Le Corff, Vincent Lemaire, and Antonio Ocello.
\newblock Theoretical guarantees for smc-guided diffusion sampling, 2026{\natexlab{b}}.

\bibitem[Strasman et~al.(2026{\natexlab{c}})Strasman, Surendran, and Corff]{strasman2026sgd}
Stanislas Strasman, Sobihan Surendran, and Sylvain~Le Corff.
\newblock Non-asymptotic convergence of stochastic gradient descent in score-based generative models, 2026{\natexlab{c}}.

\bibitem[Tuckerman(2023)]{tuckerman2023statistical}
Mark~E Tuckerman.
\newblock \emph{Statistical mechanics: theory and molecular simulation}.
\newblock Oxford university press, 2023.

\bibitem[Vempala and Wibisono(2019)]{VempalaWibisono2019}
Santosh~S. Vempala and Andre Wibisono.
\newblock Rapid convergence of the unadjusted langevin algorithm: Isoperimetry suffices.
\newblock In \emph{Advances in Neural Information Processing Systems}, volume~32, 2019.

\bibitem[Villani(2009)]{villani_optimal_2009}
C{\'e}dric Villani.
\newblock \emph{Optimal Transport: Old and New}, volume 338 of \emph{Grundlehren der mathematischen Wissenschaften}.
\newblock Springer Berlin Heidelberg, Berlin, Heidelberg, 2009.
\newblock ISBN 978-3-540-71049-3, 978-3-540-71050-9.

\bibitem[Vincent(2011)]{Vincent}
Pascal Vincent.
\newblock A connection between score matching and denoising autoencoders.
\newblock \emph{Neural Computation}, 23\penalty0 (7):\penalty0 1661--1674, 2011.

\bibitem[Yakovlev and Puchkin(2025)]{yakovlev2025generalization}
Konstantin Yakovlev and Nikita Puchkin.
\newblock Generalization error bound for denoising score matching under relaxed manifold assumption.
\newblock In \emph{Conference on Learning Theory}, pages 5824--5891. PMLR, 2025.

\bibitem[Zhang et~al.(2024)Zhang, Yin, Liang, and Liu]{zhang24bv}
Kaihong Zhang, Heqi Yin, Feng Liang, and Jingbo Liu.
\newblock Minimax optimality of score-based diffusion models: Beyond the density lower bound assumptions.
\newblock In Ruslan Salakhutdinov, Zico Kolter, Katherine Heller, Adrian Weller, Nuria Oliver, Jonathan Scarlett, and Felix Berkenkamp, editors, \emph{Proceedings of the 41st International Conference on Machine Learning}, volume 235 of \emph{Proceedings of Machine Learning Research}, pages 60134--60178. PMLR, 21--27 Jul 2024.

\bibitem[Zhang et~al.(2026)Zhang, Xu, Zhou, Fazel, and Du]{zhang2025convergence}
Yiran Zhang, Weihang Xu, Mo~Zhou, Maryam Fazel, and Simon~Shaolei Du.
\newblock Convergence dynamics of over-parameterized score matching for a single gaussian.
\newblock In \emph{International Conference on Learning Representations}, 2026.

\end{thebibliography}

\end{document}